\journal{arxiv}
\newcolumntype{R}{>{\raggedleft\arraybackslash}X}
\newcolumntype{L}{>{\raggedright\arraybackslash}X}
\newtheorem{theorem}{Theorem}
\newtheorem{ress}{Result}{\bf}{\it}
\newtheorem{corollary}{Corollary}
\newtheorem{lemma}{Lemma}
\DeclareMathOperator{\rank}{rank}
\DeclareMathOperator{\sign}{sign}
\renewcommand{\cite}{\citep}
\begin{document}
\begin{frontmatter}
\title{A linear method for camera pair self-calibration and multi-view reconstruction with geometrically  verified  correspondences\tnoteref{fundingnote}
}
\tnotetext[fundingnote]{N. Melanitis is funded by General Secretariat for Research and Technology (GSRT) and Hellenic Foundation for Research and Innovation(HFRI)}
\author[1]{Nikos Melanitis\corref{cor1}\fnref{fn1}}
\ead{nimepar@gmail.com}
\author[1]{Petros Maragos\fnref{fn1}}
\ead{maragos@cs.ntua.gr}
\fntext[fn1]{ School of Electrical and Computer Engineering, National Technical University of Athens,  Athens, Greece}
\address[1]{National Technical University of Athens,9, Iroon Polytechniou Str.,
  15780 Zografos, Athens Greece}
\begin{abstract}
  We examine  3D reconstruction of architectural scenes in unordered sets of uncalibrated images. We introduce a linear method to  self-calibrate and find the metric reconstruction of a camera pair. We assume unknown and different focal lengths but otherwise known internal camera parameters and a known  projective reconstruction of the camera pair.  We recover two possible camera configurations in space and use the Cheirality condition, that all 3D scene points are in front of both cameras, to disambiguate the solution. We show in  two Theorems, first that the two solutions are in mirror positions and then the relations between their viewing directions. Our new method performs on par (median rotation error $\Delta R = 3.49^{\circ}$) with the standard approach of Kruppa equations ($\Delta R = 3.77^{\circ}$) for self-calibration and 5-Point algorithm for calibrated metric reconstruction of a camera pair. We reject erroneous image correspondences by introducing a method to examine whether point correspondences appear in the same order along $x, y$ image axes in image pairs. We evaluate this method by its precision and recall and show that it improves the robustness of point matches in architectural and general scenes. Finally, we integrate all the introduced methods to a 3D reconstruction pipeline. We utilize the numerous camera pair metric recontructions using rotation-averaging algorithms and a novel method to average focal length estimates.

\end{abstract}
\begin{keyword}
self-calibration \sep multi-view reconstruction \sep rotation averaging \sep multi-view geometry \sep structure from motion \sep optimization
\end{keyword}
\end{frontmatter}
\linenumbers
\section{Introduction}
Multi-view geometry (mvg) is a Computer Vision (CV) subfield that attempts to understand the structure of the 3D world given a collection of its images~\cite{hartley2003multiple}. As the binocular human vision is naturally 3D, the same underlying principles allow the recovery of the 3D world structure in mvg reconstruction methods. However, a prerequisite is to have calibrated cameras,  an assumption that is violated in unordered image sets. In this paper we focus on self-calibration and multi-view reconstruction using  relations between camera pairs.

Assuming a camera pair with unknown and different focal lengths as the only unknown internal parameters, a standard approach to self-calibration and metric reconstruction first applies the 7 point algorithm~\cite{hartley2003multiple} inside a RANSAC~\cite{ransacalgcit} procedure to find the fundamental matrix. In this projective framework, the Kruppa Equations~\cite{krupparef} are used to determine the unknown focal lengths. Next, applying the 5 point algorithm~\cite{5palgcit} inside a RANSAC procedure, leads to a metric reconstruction. Since focal lengths are recovered in a projective framework, only epipolar geometry constraints may be used to check the solution plausibility. Solving self-calibration and metric reconstruction problems  simultaneously permits the application of  the more intuitive and restrictive geometric arguments of the metric framework.

Self-calibration methods are derived from relations on the dual absolute conic (DAC) $Q^{*}_{\infty}$ and the dual image of the absolute conic (DIAC) $\omega_{\infty}^*$~\cite{176,195,krupparef}. However, existing methods require three or four images to provide a solution~\cite{176,195}, use numerical methods to determine DAC~\cite{176}, provide an initial DAC estimate that violates the rank-2 condition~\cite{176,195} and do not examine the relations between the recovered putative solutions~\cite{176}. In mvg reconstructions additional assumptions have been made to determine focal lengths, as availability of EXIF tags~\cite{sfmrot2, snavely2010bundler}, equality of focal lengths across all images~\cite{martinec2007robust, stewenius2005minimal} and vanishing points correspondences~\cite{sinha2010multi}.

Towards a multi-view reconstruction, camera pairs have been utilized.  Different estimates for a rotation matrix $R$ can be combined with a rotation  averaging algorithm~\cite{rotavealg2} and reconstructions of pairs of images can be combined with rotation registration methods~\cite{rotreg1, rotreg2, rotavealg1} to initialize an instance of Structure-from-Motion with known rotations~\cite{sfmrot1, sfmrot2} and produce a multiple-view reconstruction. 

Erroneous solutions in mvg problems are directly caused by erroneous or noisy image correspondences. Two complementary approaches, applying RANSAC procedures to repeateadly sample minimal point sets and verifying the initial point coresspondences, have been utilised to improve the validity of the recovered solution~\cite{chum2012homography, snavely2006photo}. 

In this paper, we derive a linear  method for the self-calibration and metric reconstruction of camera pairs with unknown and different focal lengths, unifying two problems that were previously solved independently, to a single system of equations. We further disambiguate the two solutions recovered by our method through the derivation of two theorems about the solutions' relations.  We improve the robustness and applicability of this method by introducing a procedure to verify tentative point correspondences between images,  using the Longest Common/Increasing Subsequence (LCS/LIS) problem~\cite{fredman1975computing}. The verification method is tailored for outdoors scenes of buildings, and is based on enforcing expected geometric properties of such scenes. We  integrate  our afforementioned methods to a multi-view reconstruction pipeline, utilizing  $L_{\infty}$-norm algorithms and introducing a method to average different estimates of a single focal length $f_i$, which uses the structure of the problem, specifically that each estimate for $f_i$ comes from a pair of images $i,j$ and is so paired with a second estimate $f_j$.

The rest of this article is organized as follows: Section~\ref{sec:RW} provides background on the reconstruction problem  and the verification of image correspondences.  Section~\ref{sec:Rec} introduces our method for self-calibration and metric reconstruction. Section~\ref{sec:GV} presents our method  for correspondences  verification between the images, first  Section~\ref{sec:GVlcs} presents a method which is reduced to the  LCS problem and then Section~\ref{sec:GVpract} presents the final practical algorithm. In Section~\ref{sec:AppReconstruction}, we integrate our methods to a reconstruction pipeline. In doing so, we develop novel averaging methods for $R, f$ estimates recovered from image pairs. Results for camera pair reconstruction, correspondences verification, focal length averaging and mv reconstruction are given in Section~\ref{sec:Res}.

\section{Background \& Related Work}
\label{sec:RW}
In the following  bold font (e.g. $\mathbf{v}$ ) is used for vectors and capital case normal font (e.g. $K$) is reserved for matrices.
\subsection{Elements of multiple view geometry}
\label{sec:RWmvg}
In this section we summarize  basic notions about the projection of 3D scenes to 2D planes~\cite{hartley2003multiple, faugeras2004geometry}. In a metric reconstruction  parallel world lines converge at the plane at infinity $\mathbf{\pi}_{\infty}$: $\begin{pmatrix} 0 & 0 & 0 & 1 \end{pmatrix}^T$. The absolute conic $\Omega_{\infty}$ is a conic on $\mathbf{\pi}_{\infty}$ which satisfies $X_1^2 + X_2^2 +X_3^2=0$, $X_4=0$, where $\mathbf{X}=\begin{pmatrix} X_1 & X_2 & X_3 & X_4 \end{pmatrix}^T$ is the homogeneous representation of world points.

By taking all the planes tangent to $\Omega_{\infty}$, we construct $Q_{\infty}^{*}$, which is the dual surface of $\Omega_{\infty}$. $Q_{\infty}^{*}$ is described in a metric reconstruction by the $4 \times 4$ matrix 
\begin{equation}
\label{eq:DACdef}
Q_{\infty}^{*}=
\begin{bmatrix}
I_{3 \times 3} &  \mathbf{0_3} \\
\mathbf{0^T_3} & 0  
\end{bmatrix} 
\end{equation}
Now, considering projective reconstructions of 3-space and projections to image plane we have the following Results~\cite{hartley2003multiple}:
\begin{ress}
\label{th:DIAC}
The projection of $Q^{*}$ by projection matrix $P$ in the image plane is the dual conic  
\begin{equation*}
C^{*}=PQ^{*}P^{T}
\end{equation*} 
\end{ress}
\begin{ress}
\label{th:pinfty}
If the 3-space is transformed by homography $H$, that is $\mathbf{X'}=H\mathbf{X}$, then planes of 3-space are transformed according to 
\begin{equation*}
\mathbf{\pi'}=H^{-T}\mathbf{\pi}
\end{equation*}
\end{ress}

\begin{ress}
\label{th:Fpractical1}
If $H$ is a $4\times 4$ matrix representing a projective transformation of 3-space, then the fundamental matrices corresponding to the pairs of camera matrices $\{P, P'\}$, $\{PH, P'H\}$ are the same.
\end{ress} 
\begin{ress}
\label{th:Fpractical2}
Suppose the $\rank 2$ matrix $F$ can be decomposed in two different ways as
\begin{align*}
F&= {[\mathbf{a}]}_xA \\
F&= {[\hat{\mathbf{a}}]}_x\hat{A} 
\end{align*}
then 
\begin{align*}
\hat{\mathbf{a}}&=\kappa \mathbf{a} \\
\hat{A}&=\kappa^{-1} (A+\mathbf{av^T}) 
\end{align*}
for some non-zero constant $\kappa$ and 3-vector $\mathbf{v}$
\end{ress}
Using the preceding Results, we formulate the equations to solve the camera self-calibration problem and to determine $\mathbf{\pi_{\infty}}$ position in a projective reconstruction.
\subsection{Notation}
\label{sec:RWnotation}
We summarize our notation in Table~\ref{tab:notation}.
\begingroup
\hyphenpenalty 10000
\exhyphenpenalty 10000
\begin{table}[tpb]
\scriptsize
\begin{center}
\begin{tabularx}{0.7\linewidth}{X X }
\midrule[1.5pt]
 \bfseries{subscripts} & \\ \midrule[0.25pt]
 M & Metric Reconstruction, e.g. $P_M$ \\ \midrule[0.25pt] 
 P & Projective Reconstruction, e.g. $P_P$ \\ \midrule[0.25pt]
 1(or 2) & Refers to Camera 1(or 2) in camera pair, e.g $P_{P1}$ \\ \midrule[0.25pt]
 GT & Ground Truth \\ \midrule[1.5pt]
\bfseries{superscripts} & \\ \midrule[0.25pt]
1(or 2)& Refers to solution 1(or 2) for the second camera, e.g $\mathbf{x^1}$\\ \midrule[0.25pt]
 Accents, as in $\mathbf{p'}$ and  $\mathbf{p}$ & Discriminate between the 2 solutions for camera 2 \\ \midrule[1.5pt]
\bfseries{P matrix representations}& \\ \midrule[0.25pt]
 $\begin{bmatrix} KR & -KR\mathbf{C}\end{bmatrix}$ & Metric reconstruction  \\ \midrule[0.25pt]
 $\begin{bmatrix}P_i & \mathbf{a}  \end{bmatrix}$ & Metric reconstruction \\ \midrule[0.25pt]
 $\begin{bmatrix}KR &\mathbf{a}  \end{bmatrix}$ & Metric reconstruction \\ \midrule[0.25pt]
 $\mathbf{m_i}$& i-row vector of left $3\times 3$ P matrix block \\ \midrule[0.25pt]
 $\begin{bmatrix} {[\mathbf{a}]}_xF& \mathbf{a} \end{bmatrix}, \mathbf{a}:F^T\mathbf{a}=\mathbf{0}$& Projective Reconstruction in canonical form  \\ \midrule[1.5pt]
\bfseries{Simplifications} &  \\ \midrule[0.25pt]
$\mathbf{C^i}\triangleq \mathbf{C_i}$ & Appear in some Lemmas \\ \midrule[0.25pt]
 $P^i \equiv P_i$ &  \\ \midrule[0.25pt]
 $\omega^{*i} \triangleq \omega^*_i$ &
\\
\lasthline
\end{tabularx}
\caption{A summary of notation, with references to uses in the text}
\label{tab:notation}
\end{center}
\end{table}
\endgroup

\subsection{Verifying point correspondences}
\label{sec:RWgeover}

In standard approaches to find image correspondences  regions of interest are described by local feature descriptors~\cite{lowe2004distinctive}. Consequently,
erroneous  matches occur between  similar in local appearance image regions. 

A way to reject erroneous matches is  using  arguments about the geometry of the depicted scenes (geometric verification).  In SIFT features, Hough transform was used to acquire the orientation of the detected features~\cite{lowe2004distinctive}. Another common approach  applies a rudimentary transform (e.g affine, similarity) between the images,  to reject some correspondences before fitting the full model~\cite{turcot2009better,philbin2007object,perd2009efficient,chum2004enhancing}. For these methods we mention that:
\begin{itemize}
\item Most  require specific image features to be extracted, from which special parameters are used to fit  the  transform 
\item Result  in rejection of a large number of correspondences
\item When we tried using a similarity transform to  geometrically verify matches  in the reconstruction problem, our results did not improve
\end{itemize}

Another direction is to improve the covariance of local feature descriptors~\cite{perd2009efficient, chum2012homography, chum2006geometric}.  The regions of interest can be first transformed before extracting a feature descriptor~\cite{perd2009efficient} or ellipses may be matched instead of points~\cite{chum2012homography}. 

Finally, the neighborhoods of putative matched points are examined in some verification methods. Such approaches include counting the number of correspondences between the  neighborhoods of two tentative point matches~\cite{sivic2003video} or  examining the order of matched features between the neighborhoods and counting  the number of features out of order~\cite{wu2009bundling}. The geometric verification method we propose uses properties concerning the order of matched points as well.
\subsection{Approaches to multiple view reconstruction}
In a reconstruction pipeline, initially  Structure from Motion (SfM) is solved to get   $P,\quad \mathbf{X}$  assuming  image point correspondences  and self-calibrated cameras. The fundamental method to solve SfM  is Bundle Adjustment (BA)~\cite{BAalgcit}, an iterative, numerical algorithm  to minimize the reprojection error of the recovered solution.

In standard approaches to SfM a sequence of SfM sub-problems are solved (sequential SfM)~\cite{snavely2010bundler, snavely2006photo, wu2011multicore}.  In each iteration, more, possibly uncalibrated, cameras and world points are added to the SfM problem which is  solved using BA. However such methods are sensitive to the initial camera pair selection, solve a large number of optimization problems numerically and optimize an objective function with possibly multiple local minima.

A different approach has been developed for solving the SfM with known Rotations problem within the framework of optimal algorithms in multiple-view geometry (mvg) and $L_{\infty}$ mvg algorithms~\cite{dalalyan2009l_1, hartley2007optimal, sfmrot1, sfmrot2, olsson2010generalized, zach2010practical}. In this formulation, the camera rotation matrices $R$ are given.  SfM is formulated as a convex-optimization problem, for which a unique global minimum exists. For the actual solution of SfM with known rotations, either a sequence of Second-order cone programs are solved to arrive at an exact solution, or approximate solutions are recovered by solving SOCP or Linear programs~\cite{martinec2007robust, enqvist2011non, sfmrot2, sinha2010multi}. BA  may still be applied as a last fine-tuning of the solution.

A SfM solution, allows the reconstruction of a low number of 3D points (sparse point cloud), limited by the number of image points correspondences. \emph{Multi-view stereo} (mvs) algorithms can be used at this point to produce a \emph{dense point cloud}, which contains a much larger number of 3D points~\cite{furukawa2010accurate}. Finally, surface reconstruction algorithms can be used to produce a 3D surface~\cite{kazhdan2006poisson}.

\section{A method for Metric Reconstruction in pairs of Uncalibrated Images}
\label{sec:Rec}
\subsection{Formulation of System Equations}
\label{sec:RecForm}
Let us consider two cameras $P_1,P_2$ and further that  $P_1$ coordinate system is aligned with the world coordinate system. Let us further assume, that the corresponding image coordinate systems are selected so that the internal parameters of each camera $K_i$ can be written as
 \begin{equation*}
 K_i=\begin{bmatrix} f_i & 0 & 0 \\ 0&f_i &0 \\ 0& 0& 1 \end{bmatrix}
\end{equation*}
where $f_i$ is the focal length. The previous assumptions are routinely employed in multiple view geometry and are thoroughly discussed in the literature~\cite{hartley2003multiple}.\\
We start from a projective reconstruction of the 2 cameras, given by $P_{P1},P_{P2}$, which is related to the metric reconstruction by a world (3D) homography $H$ as in 
\begin{equation}
\label{eq:PH}
\begin{aligned}
P_{M1}&=P_{P1}H \\
P_{M2}&=P_{P2}H
\end{aligned}
\end{equation}
Using Result~\ref{th:DIAC}, Eq.\eqref{eq:DACdef}, we project $Q_{\infty}^{*}$ to the image plane of camera 2. For this projection, $
\omega_2^{*}$ we have:
\begin{equation}
\label{eq:omega}
 \begin{bmatrix} f_2^2 & 0 & 0 \\ 0&f_2^2 &0 \\ 0& 0& 1 \end{bmatrix}=\omega_2^{*}=P_{P2}HQ_{\infty}^{*}H^TP_{P2}^T
\end{equation}
To introduce the unknowns in Eq.~\eqref{eq:omega}, we use the canonical representation of the projective reconstruction, so that $ P_{P1}= \begin{bmatrix} I & \mathbf{0}\end{bmatrix}$. From Eq.~\eqref{eq:PH}, we have for the homography 
\begin{equation*}
  H =\begin{bmatrix} K_1 & \mathbf{0} \\ \mathbf{v^T}& \sigma \end{bmatrix}
  \end{equation*}
where $\mathbf{v}$ is yet undetermined and the scale factor $\sigma$ can be ignored ($\sigma = 1$).\\
To fully determine $H$, we turn to the plane at infinity\begin{equation*}
\mathbf{\pi_{\infty,P}} \triangleq \begin{pmatrix} \mathbf{p} & 1\end{pmatrix}^T \triangleq \begin{pmatrix} p_1 & p_2 & p_3 & 1\end{pmatrix}^T
\end{equation*}
Using Result~\ref{th:pinfty} we arrive at 
\begin{equation}
\label{eq:Hform}
H = \begin{bmatrix} K_1 & \mathbf{0} \\ -\mathbf{p^T}K_1 & 1 \end{bmatrix}
\end{equation}
Substituting $H$ from Eq.~\eqref{eq:Hform} to Eq.~\eqref{eq:omega} we get \begin{equation}
\label{eq:omInter}
\omega_{2}^{*}=P_{P2}
\begin{bmatrix} K_1K_1^T & -K_1K_1^T\mathbf{p} \\ -\mathbf{p^T}K_1K_1^T & \mathbf{p^T}K_1K_1^T\mathbf{p}
 \end{bmatrix}P_{P2}^T
\end{equation}
Eq.~\eqref{eq:omInter} comprise a non-linear system with respect to the five unknowns (plane at infinity coordinates and focal lengths) we pursue to determine to acquire a metric reconstruction of the scene. We note that  $\omega^{*}_{\infty}$ is symmetric by definition, and is also homogeneous, thus it provides five independent equations.
\subsection{Linearization}
\label{sec:RecLin}
In Eq.~\eqref{eq:omInter}, we substitute
\begin{equation*}
P_{P2} \triangleq \begin{bmatrix}
p_{11} & p_{12} & p_{13} & p_{14} \\
p_{21} & p_{22} & p_{23} & p_{24} \\
p_{31} & p_{32} & p_{33} & p_{34}
\end{bmatrix}
\end{equation*}
We group the unknowns in the following complexes 
\begin{equation}
\label{eq:Xteam}
\mathbf{x_o} \triangleq\begin{pmatrix} f_1^2 \\ f_2^2 \\ f_1^2p_1^2+f_1^2p_2^2+p_3^2\\p_3\\ f_1^2p_1\\f_1^2p_2 \end{pmatrix}
\end{equation} 
The augmented matrix $ [A|\mathbf{b}] $ for the linear  system 
\begin{equation}
\label{eq:Axob}
A\mathbf{x_o}=\mathbf{b}
\end{equation}
is then given by 
\begin{equation}
\label{eq:homg}
\resizebox{0.9\linewidth}{!}{\ensuremath{
\left(\begin{array}{rrrrrrr}
p_{21}^{2} + p_{22}^{2} & -1 & p_{24}^{2} & -2 \, p_{23} p_{24} & -2 \, p_{21} p_{24} & -2 \, p_{22} p_{24} & -p_{23}^{2} \\
p_{21} p_{31} + p_{22} p_{32} & 0 & p_{24} p_{34} & -p_{23} p_{34} - p_{24} p_{33} & -p_{21} p_{34} - p_{24} p_{31} & -p_{22} p_{34} - p_{24} p_{32} & -p_{23} p_{33} \\
p_{11} p_{31} + p_{12} p_{32} & 0 & p_{14} p_{34} & -p_{13} p_{34} - p_{14} p_{33} & -p_{11} p_{34} - p_{14} p_{31} & -p_{12} p_{34} - p_{14} p_{32} & -p_{13} p_{33} \\
p_{11}^{2} + p_{12}^{2} & -1 & p_{14}^{2} & -2 \, p_{13} p_{14} & -2 \, p_{11} p_{14} & -2 \, p_{12} p_{14} & -p_{13}^{2} \\
p_{11} p_{21} + p_{12} p_{22} & 0 & p_{14} p_{24} & -p_{13} p_{24} - p_{14} p_{23} & -p_{11} p_{24} - p_{14} p_{21} & -p_{12} p_{24} - p_{14} p_{22} & -p_{13} p_{23} \\
p_{31}^{2} + p_{32}^{2} & 0 & p_{34}^{2} & -2 \, p_{33} p_{34} & -2 \, p_{31} p_{34} & -2 \, p_{32} p_{34} & -p_{33}^{2} + 1 
\end{array}\right)}}
\end{equation}
We derived  the above equations (in order of appearance) from elements $\omega_2^{*}(2,2)$, $\omega_2^{*}(2,3)$,  $\omega_2^{*}(1,3)$, $\omega_2^{*}(1,1)$, $\omega_2^{*}(1,2)$, $\omega_2^{*}(3,3)$ of $\omega_2^{*}$. In the following, we use the first five equations as explained in Section~\ref{sec:RecCon}.

The matrix of Eq.~\eqref{eq:homg} is rank deficient. Thus, we presented a linear system of five (in the best case) linearly-independent equations, in six unknowns. To solve it, we turn to the polynomial relations between the coordinates of  $\mathbf{x_o}$.
\subsection{Recovering the solutions}
\label{sec:RecSol}
Taking five of Eqs.~\eqref{eq:Axob}  we have the linear system
\begin{equation}
\label{eq:A5xob}
A_5\mathbf{x_o}=\mathbf{b_5}
\end{equation}
Applying Gaussian elimination to~\eqref{eq:A5xob}, we bring the augmented matrix to the form
\begin{equation}
\label{eq:rref}
\begin{bmatrix}
1 & 0 & 0 & 0 &0 &{\color{ForestGreen}0}& b_1\\
0 & 1 & 0 & 0 &0 &{\color{ForestGreen}0} &b_2\\
0 & 0 & 1 & 0 &0 &{\color{NavyBlue}0} &b_3\\
0 & 0 & 0 & 1 &0 &c &b_4\\
0 & 0 & 0 & 0 &1 &d & b_5 
\end{bmatrix}
\end{equation} 
where:
\begin{enumerate}
\item The elements in default font, are in the usual form expected when we apply  Gaussian elimination in the general case
\item The elements in {\color{ForestGreen} green} font, are a result of the problem's structure, that is of the special relations in Eq.~\eqref{eq:rref}
\item Finally, the element in {\color{NavyBlue} blue} font, is as given when we use the canonical representation for the projective reconstruction, which is:
\begin{equation} P_{P1}= \begin{bmatrix}I & \mathbf{0}\end{bmatrix}, P_{P2}=\begin{bmatrix} {[ \mathbf{a}]}_x F &\mathbf{a}\end{bmatrix}   \end{equation} 
Where  $\mathbf{a}$ is the left null vector of $F$,
$ F^T\mathbf{a}=\mathbf{0} $. By using the canonical pair, the leftmost $3\times 3$ block in $P_{P2}$ is rank $2$, and consequently has linearly-dependent row-vectors
\end{enumerate}
The derivation of Eq.~\eqref{eq:rref} is given in the Supplementary Material.

To solve for the focal lengths ($f_1,f_2$) and $\mathbf{\pi_{\infty}}$ ($p_1,p_2,p_3$), we now have from \eqref{eq:rref}
\begin{align}
\label{eq:rref1}
f_1^2&=b_1\\
\label{eq:rref2}
f_2^2&=b_2\\
\label{eq:rref3}
f_1^2p_1^2+f_1^2p_2^2+p_3^2&=b_3\\
\label{eq:rref4}
p_3+cf_1^2p_2&=b_4\\
\label{eq:rref5}
f_1^2p_1+df_1^2p_1&=b_5
\end{align}
We substitute $p_1$, from~\eqref{eq:rref5}, and $p_3$, from~\eqref{eq:rref4}, to Eq.~\eqref{eq:rref3}, and obtain a second-order equation with respect to $p_2$. Thus, we determine $f_1,f_2$ uniquely and $p_1,p_2,p_3$ with a two-way ambiguity. We refer to those two solutions as

\begin{equation}
\label{eq:sols}
\begin{aligned}
\mathbf{x_o^1}&= \begin{pmatrix}
 b_1 & b_2 & b_3 & p_3 & f_1^2p_1 &f_1^2p_2 \end{pmatrix}^T \\
\mathbf{x_o^2}&= \begin{pmatrix}
 b_1 & b_2 & b_3 & p_3' & f_1^2p_1' &f_1^2p_2' \end{pmatrix}^T
\end{aligned}
\end{equation}
\subsection{The effect of homogeneous representation on the derived equations}
\label{sec:RecCon}
 In homogeneous  coordinate systems representations equal up to a multiplicative constant refer to the same entity. We explore here how this ambiguity affects the formulation of Eq.~\eqref{eq:Axob}. 

Let 
\begin{itemize}
\item $\{P_{GT1}',P_{GT2}' \} \triangleq \{ \begin{bmatrix} I &\mathbf{0}\end{bmatrix} ,\begin{bmatrix}A & \mathbf{a} \end{bmatrix} \}$, be the ground truth camera matrices we aim to recover
\item  $\{P_{P1},P_{P2} \} \triangleq \{ \begin{bmatrix} I &\mathbf{0}\end{bmatrix} ,\begin{bmatrix}\hat{A} & \hat{\mathbf{a}} \end{bmatrix} \}$, be the starting projective reconstruction
\end{itemize}
$P_{GT1}'$ is related to $P_{GT1}$ by the  homography
\begin{equation*}
H_k = \begin{bmatrix}
 K_1^{-1} & \mathbf{0} \\ \mathbf{0^T} & 1
\end{bmatrix}
\end{equation*}
Thus, we get from $P_P$ to $P_{GT}$ by homography $H$, from $P_P$ to $P_{GT}'$ by $H'$ and from $P_{GT}$ to $P_{GT}'$ by $H_k$.

For the camera pairs, we have the Fundamental matrices
\begin{align*}
F_{P}&= \begin{bmatrix}{[\mathbf{a}]}_xA  \end{bmatrix} \\
F_{GT}'&= \begin{bmatrix}{[\hat{\mathbf{a}}]}_x\hat{A}  \end{bmatrix}
\end{align*}
From Result~\ref{th:Fpractical1}, since  $P_{GT}', P_{P}$ reconstructions are related by $H'$, the reconstructions share a common Fundamental matrix. Since Fundamental matrices are homogeneous entities, we have
\begin{equation*}
F_{P}=\epsilon F_{GT}'
\end{equation*}
Now, we turn to Result~\ref{th:Fpractical1} and get
\begin{align*}
\hat{\mathbf{a}}&=\kappa \mathbf{a} \\
\hat{A}&=\epsilon^{-1} \kappa^{-1} (A+\mathbf{av^T})
\end{align*}
We write the previous equations in matrix form to get the projective transformation $H'$
\begin{equation*}
H'\triangleq \begin{bmatrix} \kappa^{-1}\epsilon^{-1}I & \mathbf{0} \\ \kappa^{-1}\epsilon^{-1}\mathbf{v^T} & \kappa  \end{bmatrix}
\end{equation*}
$H'$ satisfies
\begin{equation}
\label{eq:Fpractical4}
\begin{aligned}
\kappa^{-1}\epsilon^{-1} P_{GT1}'&=P_{P1}H'\\
P_{GT2}'&=P_{P1}H'
\end{aligned}
\end{equation}
Now, we get $H$ from $H', H_k^{-1}$
\begin{equation*}
\begin{bmatrix}
\kappa^{-1}\epsilon^{-1}K_1 & \mathbf{0} \\ \kappa^{-1}\epsilon^{-1}\mathbf{v^T}K_1 & \kappa
\end{bmatrix}
\end{equation*}
We set the bottom-right element to $1$, as we disregard the true scale of the reconstruction, and get the final form of $H$
\begin{equation}
\label{eq:Fpractical1}
\begin{bmatrix}
\kappa^{-1}\epsilon^{-1}K_1 & \mathbf{0} \\ \kappa^{-1}\epsilon^{-1}\mathbf{v^T}K_1 & 1
\end{bmatrix}
\end{equation}
One should compare the homographies of Eq.~\eqref{eq:Fpractical1} and Eq.~\eqref{eq:Hform}. Using Eq.~\eqref{eq:Hform} instead of Eq.~\eqref{eq:Fpractical1}, we get from $P_P$ to 
\begin{align*}
P_{M1}&=\begin{bmatrix} K_1 & \mathbf{0} \end{bmatrix}\\
P_{M2}&=\begin{bmatrix}\mu K_2R_2 &\mathbf{a} \end{bmatrix}
\end{align*}
We observe that the translation direction $\mathbf{a}$ is correct but the left-most $3\times 3$ block of camera $2$ is multiplied by a constant $\mu$. 

To see how the constants in Eq.~\eqref{eq:Fpractical1} affect Eq.~\eqref{eq:Axob}, we substitute Eq.~\eqref{eq:Fpractical1} in Eq.~\eqref{eq:omega} and get for $\omega_{2}^*$ the expression 
\begin{equation*}
\resizebox{0.9\linewidth}{!}{\ensuremath{
\omega_{2}^{*}=P_{P2}
\begin{bmatrix}(\kappa \epsilon)^{-2} K_1K_1^T & -(\kappa \epsilon)^{-2}K_1K_1^T\mathbf{p} \\ -(\kappa \epsilon)^{-2}\mathbf{p^T}K_1K_1^T & (\kappa \epsilon)^{-2}\mathbf{p^T}K_1K_1^T\mathbf{p}
 \end{bmatrix}P_{P2}^T}}
\end{equation*}
To avoid the determination of additional unknowns in Eq.~\eqref{eq:Axob}, we have 
\begin{itemize}
\item All equations derived from $\omega_2^*$ elements off the diagonal are   of the form $\mathbf{ax_o = 0}$, thus the constant $(\kappa \epsilon)^{-1}$ can be eliminated 
\item The equation derived from element $\omega_2^*(3,3)$ cannot be used without determining additional constants. So, we may only use the rest five of the six original equations of~\eqref{eq:Axob}  
\end{itemize}
The complete method to solve the metric reconstruction and self calibration problem follows:
\begin{enumerate}
\item We solve the system~\eqref{eq:Axob}, keeping five equations and discarding the equation derived from $\omega_2^*(3,3)$
\item In the previous step (1), we recovered $c^{-1}f_2^2$. To fully determine $f_2$, many different approaches are possible. We propose to repeat step 1, putting camera 2 at the origin of the coordinate system (in  place of camera 1). This can be done by transposing the Fundamental matrix $F$ for the camera pair. Following this approach, we may additionally determine the constant $\kappa \epsilon$
\item Using the homography of Eq.~\eqref{eq:Hform}  or Eq.~\eqref{eq:Fpractical1}, we recover the metric reconstruction $P_{M1},P_{M2}$. Depending on the homography used, one camera matrix ($P_{M2}$ for Eq.~\eqref{eq:Hform}  or $P_{M1}$ for Eq.~\eqref{eq:Fpractical1}) will have the left-most $3\times 3$ block multiplied by a constant. This has no effect on the correctness of the representation, and the image points are the same in each case
\end{enumerate}
\subsection{Solution disambiguation and geometric relations of the two solutions}
\label{sec:RecGeo}
\begin{figure}[tbp]
\centering
\captionsetup{singlelinecheck=off}
\includegraphics[width=\linewidth]{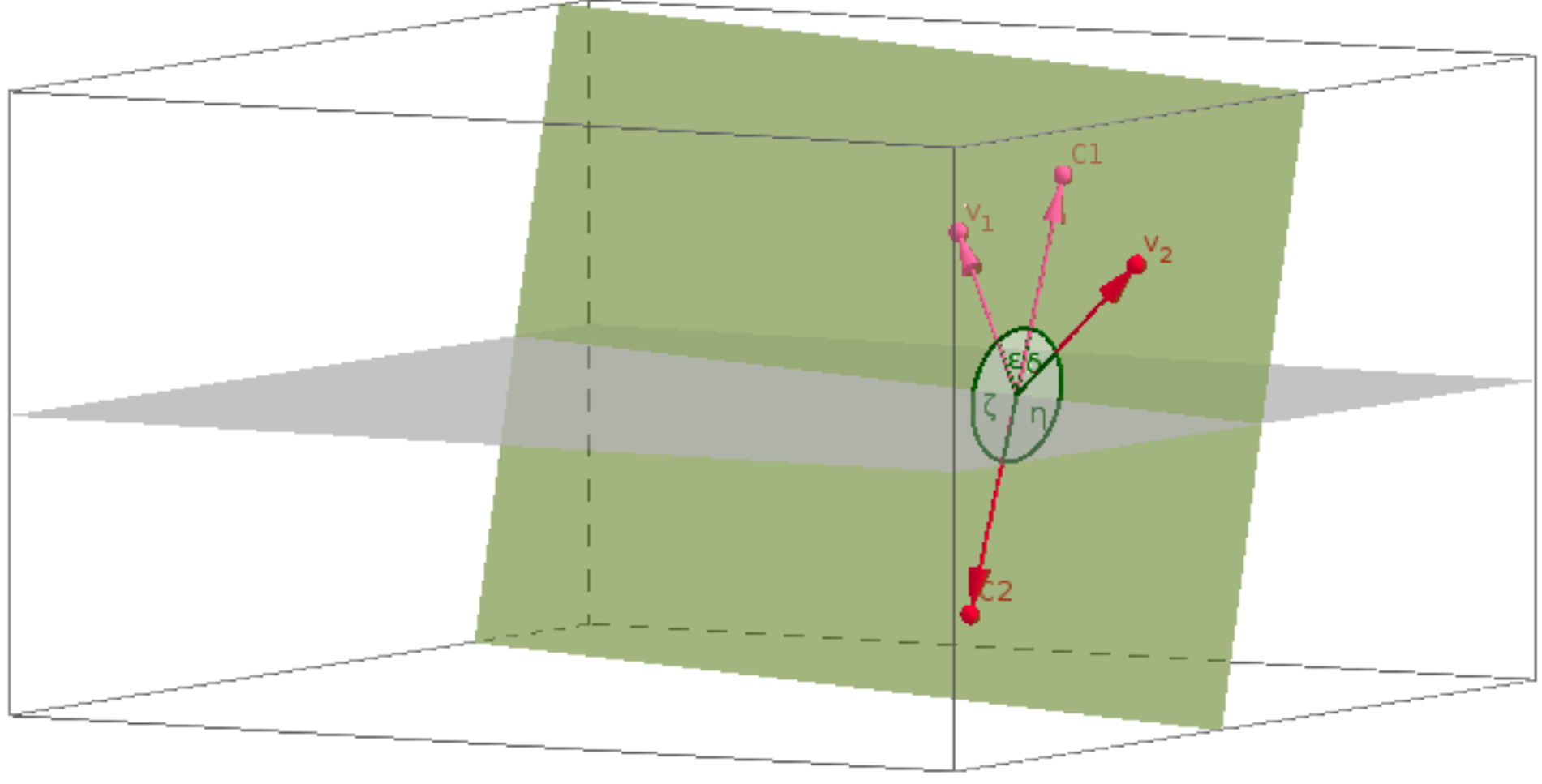}
\caption{We visualise the geometric relations of Theorems~\ref{th:mirror},\ref{th:viewdir}. In the graph, we display the centers of projection~($\mathbf{C^1,C^2}$) and viewing directions~($\mathbf{v_1,v_2}$), for each of the two solutions of Eq.~\ref{eq:sols}. In pink, we display solution $1$ and in red solution $2$. The common plane of $\mathbf{C_1,C_2,v_1,v_2}$ is highlighted 
 } 
\end{figure}

We use the Cheirality condition (Corollary~\ref{cor:cheirality}) to determine the valid  solution of Eq.~\eqref{eq:Axob}. Whenever the two recovered solutions represent cameras with divergent viewing directions, Cheirality condition is more likely to identify the valid solution. We explore in Theorems~\ref{th:mirror} and~\ref{th:viewdir} the geometric relations between the two solutions, aiming to vizualize solutions' relations and disambiguation. Proofs of Theorems~\ref{th:mirror} and~\ref{th:viewdir} are outlined in Figs.~\ref{fig:Outline1} and~\ref{fig:Outline2}. Full proofs are given in the Supplemental Material.

\tikzset{
 header node/.style={
    Minimum Width=header nodes,
    font=\strut\tiny\ttfamily,
    text depth=+0pt,
    fill=white, draw},
  header/.style={%
    inner ysep=+1.5em,
    append after command={
      \pgfextra{\let\TikZlastnode\tikzlastnode}
      node [header node] (header-\TikZlastnode) at (\TikZlastnode.north) {#1}
      node [span=(\TikZlastnode)(header-\TikZlastnode)] at (fit bounding box) (h-\TikZlastnode) {}
    }
  },
  every path/.style={
               thick
    }
}
 
\begin{figure*}
\centering
\includegraphics[width=\textwidth]{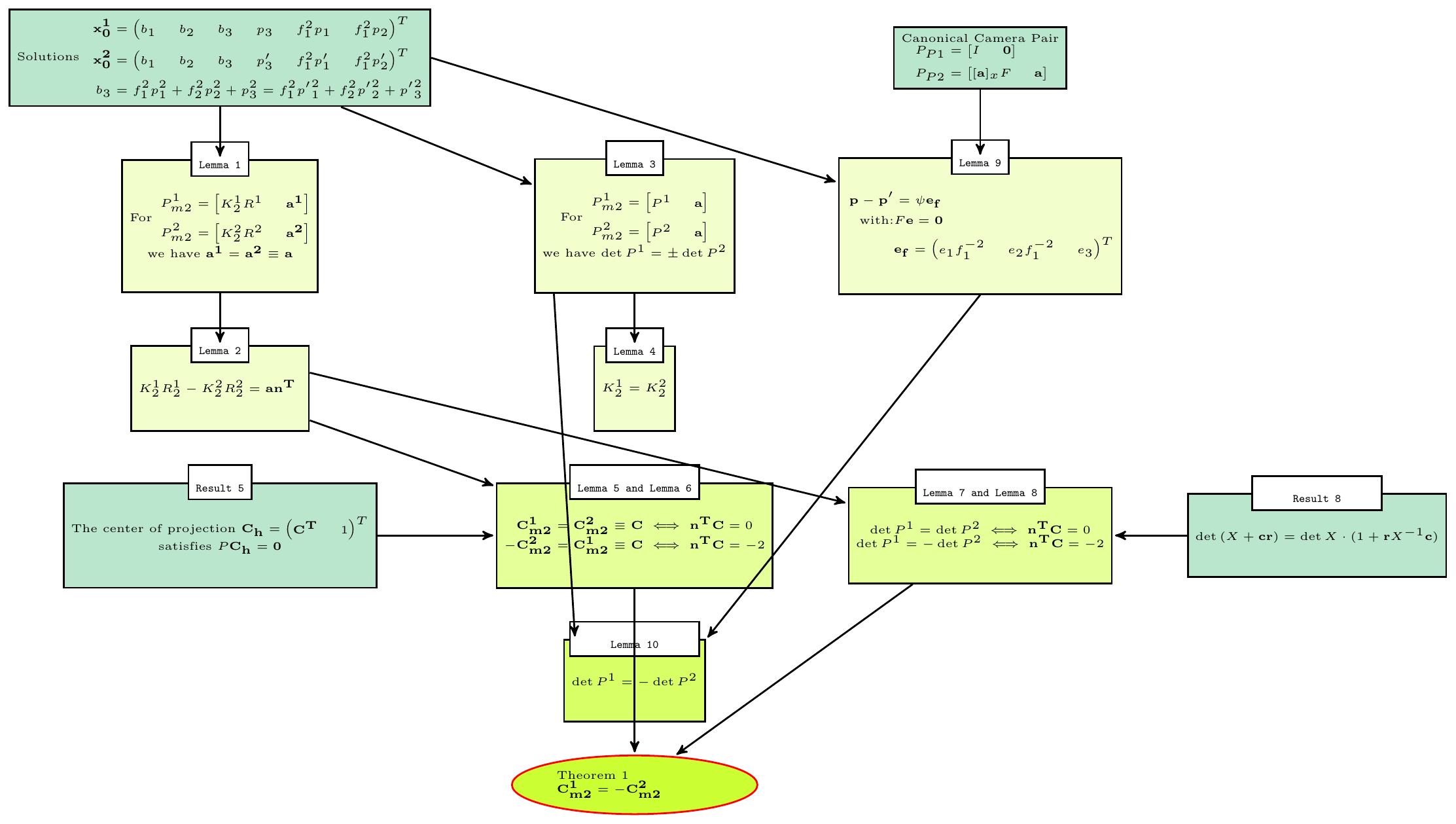}
\caption{An illustration of the intermediate results  leading to Theorem 1}
\label{fig:Outline1}
\end{figure*}

\begin{figure*}
\centering
\includegraphics[width=\textwidth]{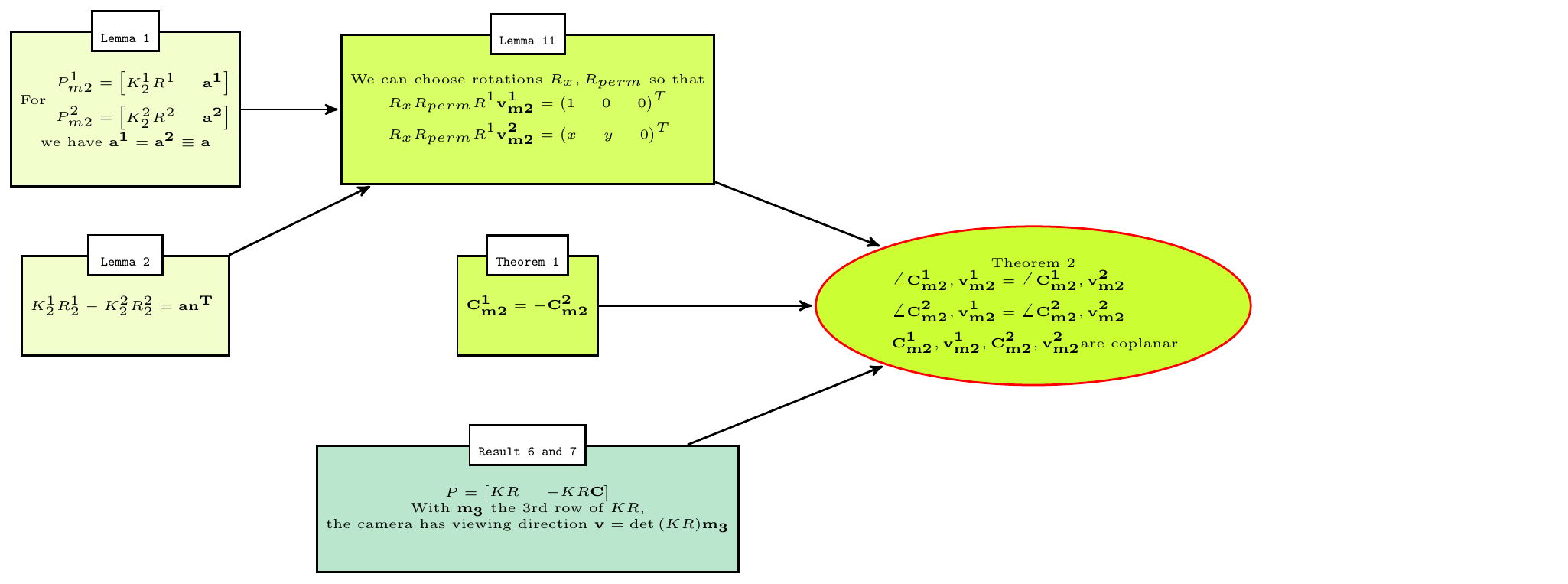}
\caption{An illustration of the intermediate results  leading to Theorem 2}
\label{fig:Outline2}
\end{figure*}

\begin{theorem}
\label{th:mirror}
Let
\begin{equation*}
 \{P_{m1}^1,P_{m2}^1\},\{P_{m1}^2,P_{m2}^2\}
\end{equation*}
 denote the reconstructions derived from~\eqref{eq:sols}. Then, cameras $P_{m2}^1,P_{m2}^2$ are in mirror positions with respect to the origin (position of $P_{m1}^{1},P_{m1}^{2}$). The centers of projection $\mathbf{C_{m2}^{1},C_{m2}^{2}}$ satisfy 
\begin{equation*}
\mathbf{C_{m2}^{1}=-C_{m2}^{2}}
\end{equation*}
\end{theorem}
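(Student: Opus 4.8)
The plan is to reduce the mirror claim $\mathbf{C_{m2}^{1}} = -\mathbf{C_{m2}^{2}}$ to a single scalar identity by first obtaining a closed form for the center of camera $2$ as a function of the plane at infinity $\mathbf{p}$. Writing the metric camera from the canonical pair and Eq.~\eqref{eq:Hform} as $P_{M2} = P_{P2}H = [(\,[\mathbf{a}]_x F - \mathbf{a}\mathbf{p}^T)K_1 \mid \mathbf{a}]$, I would compute its center from $P_{M2}(\mathbf{C}^T\ 1)^T = \mathbf{0}$. The subtlety is that the left block $[\mathbf{a}]_x F$ is rank $2$, so one cannot simply invert it. Instead, setting $\mathbf{y} = K_1\mathbf{C}$, I solve $([\mathbf{a}]_x F - \mathbf{a}\mathbf{p}^T)\mathbf{y} = -\mathbf{a}$ and split it along $\mathbb{R}^3 = \mathrm{col}([\mathbf{a}]_x F)\oplus\mathrm{span}(\mathbf{a})$, which is valid because $\mathbf{a}^T F = \mathbf{0}$ forces $\mathbf{a}\perp\mathrm{col}([\mathbf{a}]_x F)$. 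The left-hand side lies in $\mathbf{a}^\perp$ and the right-hand side in $\mathrm{span}(\mathbf{a})$, so both vanish, giving $[\mathbf{a}]_x F\,\mathbf{y} = \mathbf{0}$ and $\mathbf{p}^T\mathbf{y} = 1$. The first equation together with $\mathbf{a}^T F = \mathbf{0}$ forces $F\mathbf{y} = \mathbf{0}$, i.e.\ $\mathbf{y}\parallel\mathbf{e}$ where $\mathbf{e}$ is the right null vector (first epipole) of $F$, and the second fixes the scale, yielding
\begin{equation*}
\mathbf{C_{m2}} = \frac{K_1^{-1}\mathbf{e}}{\mathbf{p}^T\mathbf{e}}.
\end{equation*}

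Since the two solutions of Eq.~\eqref{eq:sols} share the same $F$ (hence the same $\mathbf{e}$) and the same $f_1$ (hence the same $K_1$), both centers lie along the fixed ray $K_1^{-1}\mathbf{e}$ and differ only through the scalar $\mathbf{p}^T\mathbf{e}$; the theorem is therefore equivalent to $(\mathbf{p}+\mathbf{p}')^T\mathbf{e} = 0$. Introducing the block $\mathbf{q} = K_1 K_1^T\mathbf{p} = (f_1^2p_1,\ f_1^2p_2,\ p_3)^T$, the two solutions are the intersection of the affine line cut out by~\eqref{eq:rref1}--\eqref{eq:rref5} with the quadric $\mathbf{p}^T K_1 K_1^T\mathbf{p} = b_3$ of~\eqref{eq:rref3}. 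Because $\mathbf{p}(w)$ is affine in the free parameter $w = f_1^2p_2$, the average $\tfrac12(\mathbf{p}+\mathbf{p}')$ equals $\mathbf{p}(w_m)$ at the midpoint $w_m$ of the two roots, which is the vertex of the parabola; there $\tfrac{d}{dw}\big(\mathbf{p}^T K_1 K_1^T\mathbf{p}\big) = 0$, i.e.\ $\mathbf{p}_m^T K_1 K_1^T\mathbf{d} = 0$, where $\mathbf{d}$ is the line direction in $\mathbf{p}$-space. Reading off the RREF~\eqref{eq:rref}, the direction of the line in $\mathbf{q}$-space is $K_1 K_1^T\mathbf{d} = (-d,\ 1,\ -c)^T$, so it remains only to prove the lemma $K_1 K_1^T\mathbf{d}\parallel\mathbf{e}$, after which $\mathbf{p}_m^T\mathbf{e}\propto\mathbf{p}_m^T K_1 K_1^T\mathbf{d}=0$ closes the argument.

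For the lemma I would use the null space of $A_5$. The free column of~\eqref{eq:rref} gives the null vector $\mathbf{n} = (0,0,0,-c,-d,1)^T$, which perturbs only the block $\mathbf{q}$ by $\delta\mathbf{q} = (-d,1,-c)^T$ while leaving $f_1^2$, $f_2^2$ and the complex $f_1^2p_1^2+f_1^2p_2^2+p_3^2$ fixed. Writing $\omega_2^* = ([\mathbf{a}]_x F - \mathbf{a}\mathbf{p}^T)K_1 K_1^T([\mathbf{a}]_x F - \mathbf{a}\mathbf{p}^T)^T$ and isolating the terms linear in $\mathbf{q}$, the induced change is $\delta\omega_2^* = -(\mathbf{u}\mathbf{a}^T + \mathbf{a}\mathbf{u}^T)$ with $\mathbf{u} = [\mathbf{a}]_x F\,\delta\mathbf{q}$. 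The condition $A_5\mathbf{n}=\mathbf{0}$ asserts that the five retained entries $(1,1),(1,2),(1,3),(2,2),(2,3)$ of $\delta\omega_2^*$ vanish; entrywise these read $u_ia_j + a_iu_j = 0$, which in general position (e.g.\ $a_1\neq 0$) cascade to $\mathbf{u}=\mathbf{0}$. Hence $[\mathbf{a}]_x F\,\delta\mathbf{q} = \mathbf{0}$, and the same $\mathbf{a}^T F = \mathbf{0}$ argument as in the first paragraph gives $F\delta\mathbf{q}=\mathbf{0}$, i.e.\ $\delta\mathbf{q}\parallel\mathbf{e}$. Combining, $(\mathbf{p}+\mathbf{p}')^T\mathbf{e}=2\mathbf{p}_m^T\mathbf{e}=0$, so $\mathbf{C_{m2}^{1}}=-\mathbf{C_{m2}^{2}}$.

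I expect the main obstacle to be exactly this lemma: bridging the purely algebraic RREF constants $c,d$ produced by Gaussian elimination and the geometric epipole $\mathbf{e}$. The $\delta\omega_2^*$ computation makes the link transparent, but it hinges on correctly identifying which five equations are retained and on a general-position assumption on the epipole $\mathbf{a}$ (equivalently on the entries of $P_{P2}$); the degenerate configurations in which the relevant $a_i$ vanish would have to be treated separately, for instance by a limiting argument or by switching to the symmetric chain of entries $(2,2),(1,2),(2,3)$.
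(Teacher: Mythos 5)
Your proposal is correct, and it reaches the theorem by a genuinely different route than the paper, sharing exactly one ingredient. The shared core is your key lemma $\delta\mathbf{q}\parallel\mathbf{e}$: this is precisely the paper's Lemma~\ref{th:skewAF} ($\mathbf{p}-\mathbf{p'}=\psi\mathbf{e_f}$, i.e.\ $K_1K_1^T(\mathbf{p}-\mathbf{p'})\parallel\mathbf{e}$), and your $\delta\omega_2^*$ computation is the same algebra as the paper's $Q+Q^T=0$ argument, phrased along the null direction of $A_5$ rather than as the difference of the two solutions (note $\mathbf{q}-\mathbf{q'}=(w-w')\,\delta\mathbf{q}$, so the statements coincide). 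Everything around that lemma differs. The paper never writes the center in closed form; it proves $\det P_1=\pm\det P_2$ from equality of the DIACs (Lemma~\ref{th:plusminusdet}), converts each sign into the scalar condition $\mathbf{n^TC}=0$ or $\mathbf{n^TC}=-2$ via the rank-one determinant identity (Lemmas~\ref{th:B1}--\ref{th:B4}, Result~\ref{th:rank1identity}), and kills the same-sign branch by the contradiction $e_1^2/f_1^2+e_2^2/f_1^2+e_3^2=0$ (Lemma~\ref{th:finaldetsign}); the mirror relation then drops out of Lemmas~\ref{th:B2} and~\ref{th:B4}. You instead derive $\mathbf{C_{m2}}=K_1^{-1}\mathbf{e}/(\mathbf{p^Te})$ directly --- which proves the strictly stronger fact that both centers lie on one fixed line through the origin --- and close with Vieta's formulas at the parabola vertex, making the entire determinant-sign apparatus unnecessary. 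The price is the explicit nondegeneracy $\mathbf{p^Te}\neq 0$, $\mathbf{p'}^T\mathbf{e}\neq 0$ (finite centers) and distinctness of the two roots, but the paper assumes both implicitly as well (it excludes the "camera at infinity" and tacitly takes $\psi\neq 0$).

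The obstacle you flag --- the cascade $u_ia_j+a_iu_j=0\Rightarrow\mathbf{u}=\mathbf{0}$ failing when the relevant $a_i$ vanish --- closes without any limiting argument: $\mathbf{u}=[\mathbf{a}]_xF\,\delta\mathbf{q}=\mathbf{a}\times(F\delta\mathbf{q})$ is automatically orthogonal to $\mathbf{a}$. If $a_1\neq 0$ or $a_2\neq 0$, your five retained entries already force $\mathbf{u}=\mathbf{0}$; in the only remaining case $a_1=a_2=0$, $a_3\neq 0$, entries $(1,3),(2,3)$ give $u_1=u_2=0$ and $\mathbf{a^Tu}=a_3u_3=0$ gives $u_3=0$. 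It is worth noting that your bookkeeping is in fact tighter than the paper's at this exact spot: Lemma~\ref{th:skewAF} extracts $\mathbf{A^i}^T\Delta_f=0$ from the diagonal entries $a_i(\mathbf{A^i}^T\Delta_f)=0$, which both divides by possibly vanishing $a_i$ and uses the $(3,3)$ entry of $Q+Q^T$ even though the $\omega_2^*(3,3)$ equation was discarded from the system (so the two solutions need not agree there); your restriction to the five retained entries plus the orthogonality observation repairs both points. One cosmetic slip: $\mathbf{a}\perp\mathrm{col}([\mathbf{a}]_xF)$ follows from $\mathbf{a}^T[\mathbf{a}]_x=\mathbf{0}$, not from $\mathbf{a}^TF=\mathbf{0}$ (the latter is what you correctly use one step later, to upgrade $[\mathbf{a}]_xF\mathbf{y}=\mathbf{0}$ to $F\mathbf{y}=\mathbf{0}$).
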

\begin{theorem}
\label{th:viewdir}
 Let camera $1$ be positioned on the origin of the world coordinate system, with a viewing direction aligned to    $z$ axis. We denote  $\mathbf{v_{m2}^1,v_{m2}^2}$  the viewing directions of  $P_{m2}^1,P_{m2}^2$ and  $\mathbf{C_{m2}^1,C_{m2}^2}$  the position vectors of the corresponding centers of projection. Then, $\mathbf{C_{m2}^1,C_{m2}^2}$ bisect the angles formed by   $\mathbf{v_{m2}^1,v_{m2}^2}$, in the plane defined by $\mathbf{v_{m2}^1,v_{m2}^2}$. Thus, we have:
\begin{align}
& \angle  \mathbf{C_{m2}^1},\mathbf{v_{m2}^1}= \angle \mathbf{C_{m2}^1},\mathbf{v_{m2}^2}\\
&\angle  \mathbf{C_{m2}^2},\mathbf{v_{m2}^1}= \angle \mathbf{C_{m2}^2},\mathbf{v_{m2}^2}
\end{align}       
\end{theorem}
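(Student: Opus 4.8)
The plan is to strip both reconstructions down to a genuinely metric form, so that the two centres and viewing directions become the images of two \emph{fixed} vectors under the two camera rotations; Theorem~\ref{th:mirror} can then be fed in almost mechanically. First I would record what the two solutions of~\eqref{eq:sols} share: camera~$1$ is $[K_1\mid\mathbf{0}]$, the translation direction $\hat{\mathbf{a}}$ of camera~$2$ is common (it is untouched by the homography~\eqref{eq:Hform}), and $f_2$ is common; the solutions differ \emph{only} through the plane at infinity, entering camera~$2$ as $P_{m2}^i=[(\hat A-\hat{\mathbf{a}}(\mathbf{p}^i)^T)K_1\mid\hat{\mathbf{a}}]$. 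Writing $M^i$ for the left $3\times3$ block and using $\omega_2^*=M^i(M^i)^T$ (from Eq.~\eqref{eq:omega} with $Q_\infty^*=\mathrm{diag}(I_3,0)$), the five equations retained from~\eqref{eq:homg} force the off-diagonal entries of $M^i(M^i)^T$ to vanish and its $(1,1),(2,2)$ entries to coincide. Hence $M^i(M^i)^T=\mathrm{diag}(\nu_i^2,\nu_i^2,\mu_i^2)$ and $M^i=\mu_iK_2R^i$ with $R^i$ orthogonal, $K_2=\mathrm{diag}(f_2,f_2,1)$, $f_2=\nu_i/\mu_i$ common (the free $(3,3)$ entry is exactly the coordinate whose equation was discarded).

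From this form I would read off, in one common frame, the two objects the theorem talks about. The centre is $\mathbf{C}_{m2}^i=-(M^i)^{-1}\hat{\mathbf{a}}=-\mu_i^{-1}(R^i)^T\mathbf{b}$, where $\mathbf{b}\triangleq K_2^{-1}\hat{\mathbf{a}}$ is the same vector for both solutions, and the oriented (unit) viewing direction is $\mathbf{v}_{m2}^i=\sigma_i(R^i)^T\mathbf{e}_3$, where $\mathbf{e}_3$ is the third standard basis vector and $\sigma_i=\sign(\det M^i)$ enforces positive depth. Thus both centres and both viewing directions are images under the isometries $(R^i)^T$ of the two fixed vectors $\mathbf{b}$ and $\mathbf{e}_3$.

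Next I would invoke Theorem~\ref{th:mirror}. The relation $\mathbf{C}_{m2}^1=-\mathbf{C}_{m2}^2$ reads $\mu_1^{-1}(R^1)^T\mathbf{b}=-\mu_2^{-1}(R^2)^T\mathbf{b}$; since the $R^i$ preserve norm this gives $|\mu_1|=|\mu_2|$, and with $S\triangleq R^1(R^2)^T$ it gives $S\mathbf{b}=-(\mu_2/\mu_1)\mathbf{b}$. The decisive structural step is to show that $S$ is precisely the reflection in the plane normal to $\mathbf{b}$, i.e. $\det S=-1$ and $S\mathbf{b}=-\mathbf{b}$, so that $S=I-2\mathbf{b}\mathbf{b}^T/\lVert\mathbf{b}\rVert^2$ and, correspondingly, $\sigma_1=-\sigma_2$. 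I expect this to be \emph{the main obstacle}: it is where Theorem~\ref{th:mirror} must be combined with the explicit affine dependence of $\mathbf{p}$ on the free coordinate $p_2$ dictated by~\eqref{eq:rref4}--\eqref{eq:rref5}, with the fact that $p_2,p_2'$ are the two roots of the quadratic obtained from~\eqref{eq:rref3}, and with the cheirality/determinant bookkeeping that fixes $\sigma_i$ (the two algebraic solutions turn out to have rotations of opposite handedness). Everything after this point is isometry-invariant manipulation.

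Granting that $S=I-2\mathbf{b}\mathbf{b}^T/\lVert\mathbf{b}\rVert^2$, the conclusions follow cleanly. Since $(R^2)^T=(R^1)^TS$, the three vectors $\mathbf{C}_{m2}^1\propto(R^1)^T\mathbf{b}$, $\mathbf{v}_{m2}^1\propto(R^1)^T\mathbf{e}_3$ and $\mathbf{v}_{m2}^2\propto(R^1)^TS\mathbf{e}_3$ are the image under the isometry $(R^1)^T$ of vectors lying in $\mathrm{span}(\mathbf{b},\mathbf{e}_3)$, because $S\mathbf{e}_3=\mathbf{e}_3-2(\mathbf{b}\cdot\mathbf{e}_3)\,\mathbf{b}/\lVert\mathbf{b}\rVert^2$ stays in that plane; this establishes the coplanarity asserted in the theorem. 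For the bisection, note that $\sigma_1=-\sigma_2$ makes $\mathbf{v}_{m2}^2$ proportional to $-(R^1)^TS\mathbf{e}_3$, and $-S\mathbf{e}_3$ is exactly the mirror image of $\mathbf{e}_3$ across the \emph{line} spanned by $\mathbf{b}$; reflection across a line preserves the angle a vector makes with that line, so $\angle(\mathbf{b},\mathbf{e}_3)=\angle(\mathbf{b},-S\mathbf{e}_3)$. Transporting by the isometry $(R^1)^T$ yields $\angle(\mathbf{C}_{m2}^1,\mathbf{v}_{m2}^1)=\angle(\mathbf{C}_{m2}^1,\mathbf{v}_{m2}^2)$, the first claimed equality. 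The second, $\angle(\mathbf{C}_{m2}^2,\mathbf{v}_{m2}^1)=\angle(\mathbf{C}_{m2}^2,\mathbf{v}_{m2}^2)$, is then immediate from $\mathbf{C}_{m2}^2=-\mathbf{C}_{m2}^1$ (Theorem~\ref{th:mirror}), since negating $\mathbf{C}_{m2}^1$ replaces each of the two angles by its supplement and hence preserves their equality.
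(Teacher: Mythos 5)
Your reduction is architecturally sound, but it has a genuine gap precisely where you flag it: the claim that $S=R^1(R^2)^T$ equals the Householder reflection $I-2\mathbf{b}\mathbf{b}^T/\lVert\mathbf{b}\rVert^2$ together with $\sigma_1=-\sigma_2$ is \emph{asserted}, not proven, and it carries the entire content of the theorem --- the two bisection equalities are essentially equivalent to this structural claim, so as written the proposal proves nothing beyond what it postulates. You also somewhat mischaracterize what closing it requires: the affine dependence of $\mathbf{p}$ on $p_2$ and the two roots of the quadratic from \eqref{eq:rref3} are not the missing ingredient in your setup. (The paper does exploit that root structure, but only inside Lemma~\ref{th:skewAF}, to pin down $\mathbf{p}-\mathbf{p'}$ along the null direction $\mathbf{e}$ and thence derive the determinant-sign dichotomy of Lemma~\ref{th:finaldetsign} by contradiction; Theorem~\ref{th:mirror} is then deduced \emph{from} the sign dichotomy, whereas you propose to run the implication in the opposite direction, which is legitimate since Theorem~\ref{th:mirror} precedes Theorem~\ref{th:viewdir}.)

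The good news is that the gap is closable with material you already assembled, and in a few lines: from your parametrization, $M^1-M^2=\hat{\mathbf{a}}(\mathbf{p}^2-\mathbf{p}^1)^TK_1$ is rank one with column span $\hat{\mathbf{a}}$ (this is the paper's Lemma~\ref{th:differ1} combined with Lemma~\ref{th:eqKmatrices}), so $\mu_1R^1-\mu_2R^2=\mathbf{b}\mathbf{w}^T$ for some $\mathbf{w}$; Theorem~\ref{th:mirror} with norm preservation gives $\mu_1=\pm\mu_2$, and the branch $\mu_1=-\mu_2$ with $S=I$ dies because it would force $2M^1=\hat{\mathbf{a}}\mathbf{w}^TK_1$, contradicting $\rank M^1=3$; hence $S=I+\mu^{-1}\mathbf{b}(R^2\mathbf{w})^T$ is an orthogonal rank-one perturbation of the identity, and orthogonality of $SS^T$ forces $R^2\mathbf{w}\parallel\mathbf{b}$, leaving only $S=I$ (excluded by $S\mathbf{b}=-\mathbf{b}$) or the Householder reflection, whence $\det S=-1$ and $\sigma_1=-\sigma_2$. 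You should note this is a genuinely different route from the paper's: there, the angle equalities are obtained by direct computation --- $K_2R^1\mathbf{C}=-\mathbf{a}$ and Result~\ref{th:direction} give $\cos\angle(\mathbf{C},\mathbf{v_{m2}^1})=-a_3=\cos\angle(\mathbf{C},\mathbf{v_{m2}^2})$ after normalizing $\lVert K_2^{-1}\mathbf{a}\rVert=1$, with $\det P^2<0$ imported from Lemma~\ref{th:finaldetsign} --- and coplanarity is settled separately by the explicit frame construction $R_xR_{perm}R^1$ of Lemma~\ref{th:lemmaplane}. Once you supply the missing lemma, your reflection argument yields bisection and coplanarity in one coordinate-free stroke, which is arguably cleaner; but without it, the proposal defers exactly the step that distinguishes the two solutions' handedness.
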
 
From Theorems~\ref{th:mirror},\ref{th:viewdir}, we easily deduce that \begin{equation*}
\angle \mathbf{C_{m2}^1,v_{m2}^j}+ \angle  \mathbf{C_{m2}^2,v_{m2}^j} = 180^{\circ}
\end{equation*}
\begin{corollary}
\label{cor:cheirality}
The correct one of solutions~\eqref{eq:sols} can be identified by requiring all world points that are visible from camera $2$ to be in the space in front of camera $2$. 
\end{corollary}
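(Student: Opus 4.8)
The plan is to show that the depth of a reconstructed scene point with respect to camera $2$ carries opposite signs in the two solutions of~\eqref{eq:sols}, so that the requirement that all visible points lie in front of camera $2$ is met by exactly one of them. First I would recall the standard depth formula: for a camera $P=[M\mid \mathbf{p_4}]$ and a world point $\mathbf{X}=(X_1,X_2,X_3,X_4)^T$ projecting as $w(x,y,1)^T=P\mathbf{X}$, the point lies in front of the camera precisely when $\sign(\det M)\,w/X_4>0$. Since one of the two reconstructions coincides, up to the admissible metric (similarity) ambiguity, with the ground-truth configuration, every physically observed point automatically satisfies this inequality for that solution; the whole content of the corollary is therefore that the other, ``mirror'' solution fails it.

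Next I would use Theorem~\ref{th:mirror} to couple the two reconstructions. Because $P_{m1}^1$ and $P_{m1}^2$ both sit at the origin while $\mathbf{C_{m2}^1=-C_{m2}^2}$, the false solution is obtained from the true one by reflecting the second centre of projection through the origin, and Theorem~\ref{th:viewdir} pins down how the two viewing directions $\mathbf{v_{m2}^1,v_{m2}^2}$ are arranged about this common centre line. I would then triangulate the same correspondence $\mathbf{x_1}\leftrightarrow\mathbf{x_2}$ in each reconstruction and track the third coordinate $w$ of its image under camera $2$ together with $\sign(\det M_2)$, with the aim of proving that the product $\sign(\det M_2)\,w/X_4$ reverses sign under the reflection of the centre. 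Establishing that reversal shows that a point in front of camera $2$ in the true solution necessarily triangulates to a point behind camera $2$ in the mirror solution, which is exactly the disambiguation claimed.

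I expect the sign reversal to be the main obstacle. The difficulty is that the two solutions do not re-image a fixed $3$D point with a displaced camera; instead, each solution triangulates \emph{its own} point from the same pair of back-projected rays, so the reconstructed point itself moves between solutions and one must reason about the depth of this moved point rather than of a fixed one. I would handle this by writing the triangulated point explicitly as the intersection of the rays through the origin (camera $1$) and through $\mathbf{C_{m2}^{i}}$, and then exploiting $\mathbf{C_{m2}^1=-C_{m2}^2}$ together with the angle-bisector relation of Theorem~\ref{th:viewdir} to show that the intersection crosses to the opposite side of camera $2$'s principal plane. Finally, consistent with the remark preceding the corollary, I would note that this clean reversal is exact in the noise-free case and robust when $\mathbf{v_{m2}^1,v_{m2}^2}$ are sufficiently divergent; the near-parallel (degenerate) configurations are precisely those in which a few points may straddle the principal plane, which is why the selection is phrased as choosing the solution for which \emph{all} visible points satisfy the depth inequality.
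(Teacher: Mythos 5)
Your overall strategy --- couple the two reconstructions via Theorems~\ref{th:mirror} and~\ref{th:viewdir} and track the depth sign $\sign(\det M)\,w/X_4$ under camera $2$ --- is the same geometric picture the paper relies on (the paper in fact offers no separate per-point computation: the corollary is stated as a direct consequence of the two theorems, with the hedge in Section~\ref{sec:RecGeo} that disambiguation is ``more likely'' when the viewing directions diverge). But your plan has a genuine gap at precisely the step you flag as the main obstacle, and the mechanism you propose to close it would not work. The two solutions are not related by a rigid reflection of camera $2$; they are related by the projective map $G=(H^2)^{-1}H^1$ built from the two homographies of Eqs.~\eqref{eq:PH},~\eqref{eq:Hform}. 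Hence the triangulated point of solution $2$ is the $G$-image of that of solution $1$, the third homogeneous coordinate $w=P_{m2}^i\mathbf{X}^i$ is \emph{unchanged} between solutions (both equal the projective quantity $P_{P2}\mathbf{X}_P$ up to common scale), and the cheirality product can change only through two quantities: $\sign(\det M_2)$, which flips by Lemma~\ref{th:finaldetsign}, and $\sign(X_4)$, which records on which side of the plane sent to infinity by $G$ --- equivalently, the rejected candidate for $\mathbf{\pi}_{\infty}$ --- the point lies. Your proposed route, intersecting the back-projected rays through the origin and through $\mathbf{C_{m2}^i}$ and using the bisector relation to argue the intersection crosses camera $2$'s principal plane, ignores that $R^1\neq R^2$, so the rays from camera $2$ themselves differ between solutions; Theorem~\ref{th:viewdir} constrains only the angles between centers and viewing directions and does not determine on which side of the principal plane a triangulated point lands.

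Consequently your central claim that the sign reversal is ``exact in the noise-free case'' is false in general: even with perfect data, the per-point cheirality in the mirror solution equals minus the true cheirality times the sign of the point relative to the rejected plane at infinity, so mixed signs occur exactly when that plane cuts the visible point cloud. (Your attribution of straddling to near-parallel viewing directions gestures at the right regime but misidentifies the separating surface: it is the wrong candidate $\mathbf{\pi}_{\infty}$, not camera $2$'s principal plane.) To repair the argument you would need either to import the determinant-flip result (Lemma~\ref{th:finaldetsign}) together with an assumption that the rejected plane at infinity does not intersect the reconstructed scene, or to retreat to the paper's weaker, practically phrased statement, under which requiring \emph{all} visible points to be in front of camera $2$ selects the true solution whenever such straddling does not occur.
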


\section{Geometric verification of tentative image correspondences}
\label{sec:GV}
\subsection{Reduction to Longest Common Subsequence problem}
\label{sec:GVlcs}
The geometric property we pursued to enforce in tentative image correspondences is the order of imaged points with respect to the horizontal and vertical image directions.  We have:
\begin{itemize}
\item If a point, A, is imaged to the left of a point, B, in the first image, then A should be to the left of B in the second image as well. We call this property Consistency-x
\item Similarly, if a point, A, is below another point, B, in the first image, then A should be below B in the second image as well. We call this property Consistency-y 
\end{itemize}

To see how we can arrive at the LCS/LIS problem we examine each one of the two Consistency properties independently. We present here the analysis concerning Consistency-x. 

 We start with a formal definition of Consistency-x. A  set of correspondences $S=\{(p_1^i,p_2^i)\}$, where $p_j^i$ is the x-coordinate of a point ($i$) in image $j$, has Consistency-x, if for all points $p_1^i$ of image $1$ in $S$:
\begin{enumerate}
\item All points in image $1$ that are in the Consistent-x set and are to the left of $p_1^i$ match in image $2$ with points that are to the left of $p_2^i$:
\begin{equation*}
\resizebox{\linewidth}{!}{\ensuremath{
S=\{(p_1^i,p_2^i)\}:\forall i \forall j\, p_1^i\leq p_1^j \implies p_2^i\leq p_2^j,\,(p_1^i,p_2^i),(p_1^j,p_2^j)\in S}}
\end{equation*}
\item All points in image $1$ that are in the Consistent-x set and are to the right of $p_1^i$ match in image $2$ with points that are to the right of $p_2^i$:
\begin{equation*}
\resizebox{\linewidth}{!}{\ensuremath{
S=\{(p_1^i,p_2^i)\}:\forall i \forall j\, p_1^i\geq p_1^j \implies p_2^i\geq p_2^j,\,(p_1^i,p_2^i),(p_1^j,p_2^j)\in S}}
\end{equation*}
\end{enumerate}
We seek the most populous set $S$ of correspondences which is Consistent-x. We can reduce the Consistency-x problem to LIS in the following way:  We sort points $p_1^i$ in image 1 with respect to the x-axis ($x_i$). This sorting is a permutation in the sequence of correspondences. We apply this same permutation to $p_2^i$ and get a sequence from the ordinates $x_i$ of points $p_2^i$. We seek the LIS of this last sequence.

The LCS/LIS problems are efficiently solved with complexity $\mathcal{O}(n\log n)$~\cite{aldous1999longest,hunt1977fast,fredman1975computing} or even $\mathcal{O}(n\log\log n)$ \cite{van1975preserving} if special data structures are implemented. To solve LCS/LIS, we used the patience sorting algorithm~\cite{aldous1999longest}.
\subsubsection{Perplexities of the combined Consistency-x,y problem and an efficient approximate method}
In the combined Consistency-x,y problem we seek to find the largest subset of correspondences which are consistent in both x and y axes. The relation "Consistency-x and Consistency-y" is not transitive. We can observe that easily with a counter-example. Thus, the Consistency-x,y relation is not a partial order, a condition sufficient to rule out  reduction to LCS/LIS~\cite{fredman1975computing}.

Formally, in the  Consistency-x,y problem, we seek a set $S_{xy}$ of image correspondences so that:
\begin{itemize}
\item $S_{xy}$ has the Consistency-x property
\item $S_{xy}$ has the Consistency-y property
\item The number of elements ($n$) of the set $S_{xy}$ is maximized
\end{itemize}
We propose an approximate solution by the following method:
\begin{enumerate}
\item We find the largest Consistent-x subset, $S_x$, solving an LIS problem
\item We find the largest Consistent-y subset of $S_x$, solving again an LIS problem
\end{enumerate}
In our suboptimal solution Consistency-x,y holds, thus our primary aim to reject erroneous matches is achieved. Nevertheless, some true matches are rejected.
\subsection{A practical verification method}
\label{sec:GVpract}
\begin{figure}[tb]
\centering
\includegraphics[width=\linewidth]{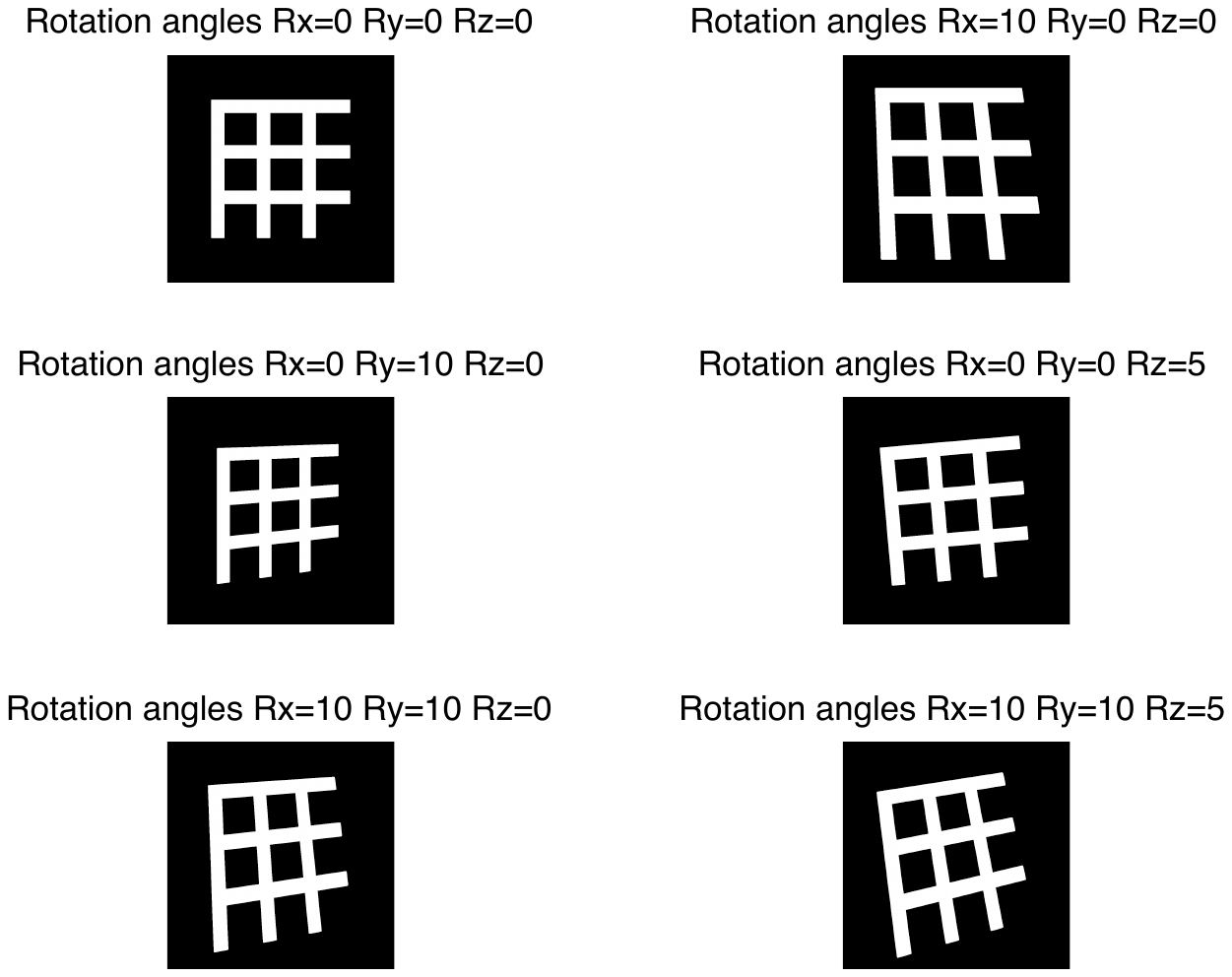}
\caption{
Effect of camera coordinate system rotation to depiction of parallel lines. The figures were created by projecting a 3D structure at a constant depth  ($z=z_{const}$) to the image plane, thus effects of different scene depth are not shown}\label{fig:3Persp}
 \end{figure}
The consistency properties we introduced, depend on assumptions on the geometric structure of the scene. In photos of architectural scenes, usually the $y-$axis in camera coordinate system aligns with the perpendicular to the floor vector, leading to the   assumption $R_z=R_x=0$. In special cases, as in photographs of houses on a street, the camera $x-$axis may also be aligned between photographs.

Such assumptions may be violated between different views. The effects of camera rotations on a scene are illustrated in Fig.~\ref{fig:3Persp}.  We observe that:
\begin{itemize}
\item Lines parallel to $x-$ or $y-$ axis in one image may appear tilted in another, if the camera coordinate systems are not aligned. The same effect is caused by scene depth variation
\item The relative order of points may change between two images. Moreover, it is more likely for two points to change order with respect to the $x$-axis, if those points are close in $x-$ axis but distant in $y$ axis, $\Delta x \ll \Delta y$.
\end{itemize}
Still, in the case of photographs of architectural scenes, we can assume small rotations around the $x,y$ axes, as the photographer's position in space is constrained. In-plane ($R_z$) rotations are uncommon and  can nevertheless be fixed automatically~\cite{gallagher2005using}.
\subsubsection{Approximations to Consistency properties}
As Consistency properties are violated by projective phenomena, enforcing them leads to the rejection of many true correspondences. Thus, we relax Consistency properties to arrive at a practical verification method. We describe the method concerned with the order of points on the $x-$axis. Similar modifications apply to the Consistency-y property.

First, we introduce a threshold value ($T$) to allow violations in the order of points that remain within a predefined distance range. So, two consecutive sequence points $s_i$, $s_{i+1}$ are considered in correct order if
\begin{equation*}
s_i -T \leq s_{i+1}
\end{equation*}
where $s_i$ is the $x-$coordinate of the i-th point in the ordered sequence.  We set $T$ as a fraction of the maximum distance in the $y-$ axis, of any two points in the image we examine, that matched to points in the paired image:
\begin{equation}
\label{eq:LCSta}
T=\alpha (y_{max}-y_{min})
\end{equation}
In the following we refer to this process  as "setting the threshold as a percentage of image size".

We propose to use a recursive method, acting on image subregions of different size. We solve a sequence of LIS problems, each one with a different $T$ value, set as a percentage of image size:
\begin{enumerate}
\item We solve an LIS problem using a threshold $T$ as a percentage of image size. The result is a Consistent-x set of correspondences
\item We split the image in two subregions, each  with equal number of correspondences $n_i$. The split is done on the $y-$axis
\item (Recursion): We repeat the process on each of the two subregions. We terminate if the region size is smaller than a predefined constant $c$ (we used $c=200$ pixels)
\end{enumerate}
The recursive method has the advantage of allowing for larger violations in the order in the $x-$axis for points that are distant in the $y-$axis, as explained in Section~\ref{sec:GVpract}. Concerning the computational complexity, we have:
\begin{small}
\begin{align*}
\text{Complexity}&=n\log n + \sum_{L}{\sum{n_i\log n_i} }\\
&\leq(L+1)n\log n,\,\text{since $n_i\leq n$, $\sum n_i\leq n$}\\
\end{align*}
\end{small}
where $L$ is the number of recursion steps. $L$ depends on the initial image size and $c$. Consequently, the recursive method adds no significant computational burden to the initial LIS problem formulation.

Finally, we remark that other approaches, as dropping recursion or fitting a simple transform to map lines between the images to estimate the $T$ value, produced worse results than the proposed recursive method.

\section{An application to the multiple-view reconstruction problem}
\label{sec:AppReconstruction}
We  integrate our methods for the geometric verification of image correspondences and the pair-based estimation of $R, f$, in existing pipelines to solve the multiple-view reconstruction problem and produce a 3D-model of a scene.

Our approach is outlined in Fig.~\ref{fig:FullPipeline}. The final reconstruction is done using the non-sequential SfM with known rotations formulation of~\cite{sfmrot2}, which we  modify  extensively, using the methods of the preceeding sections as well as the $f,R$ averaging algorithms we describe in the following.

\begin{figure*}
\begin{center}
\includegraphics[width=\textwidth]{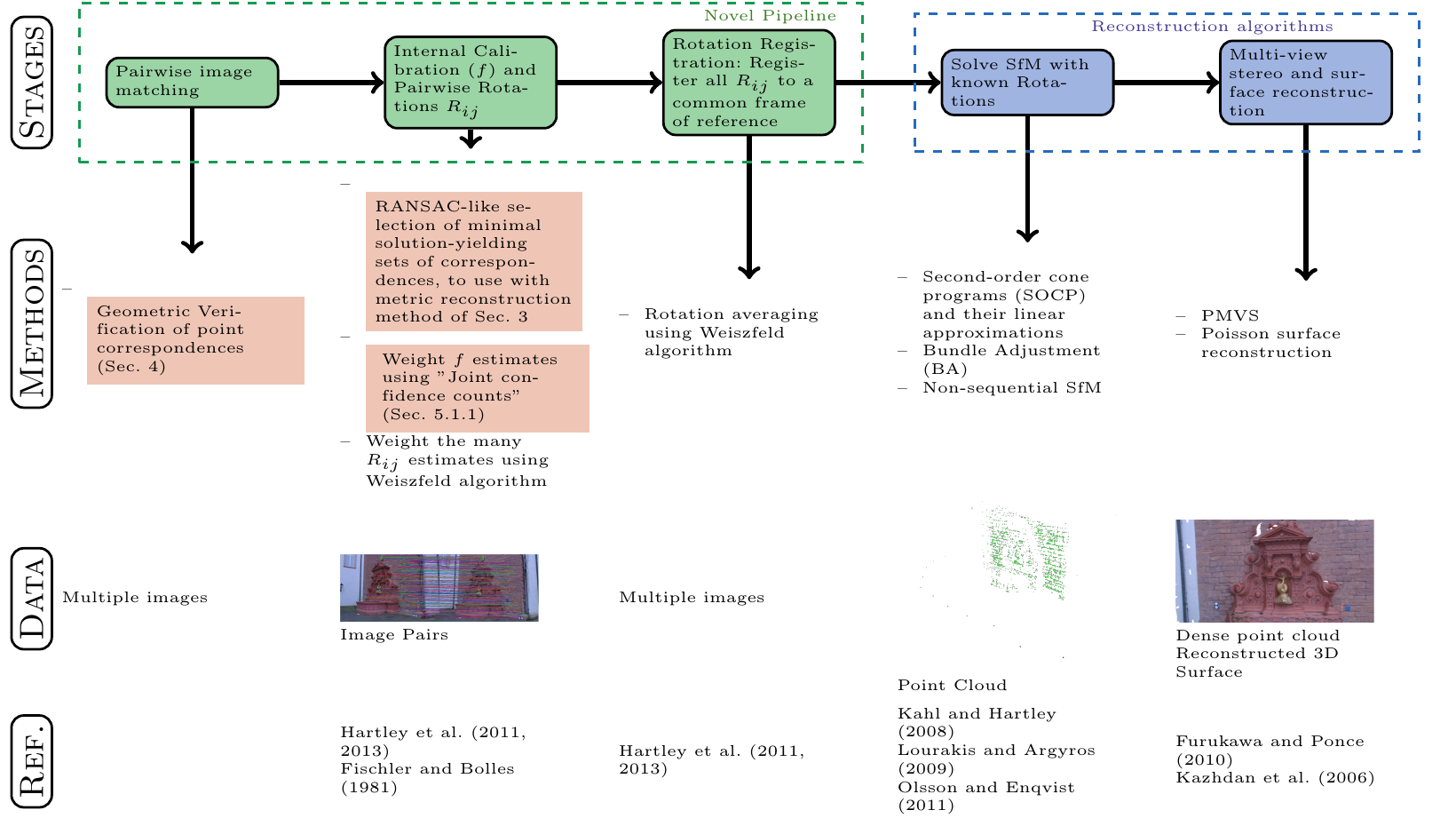}
\end{center}
\caption{Our pipeline  to reconstruct a 3D scene from an unordered set of 2D photographs. In the first row, we display a flow diagram of the algorithm \textbf{stages}. Novel parts are displayed in green. The second row outlines the core \textbf{methods} we use. We \colorbox{BrickRed!20}{highlight} methods we introduced in preceding sections. The two final rows contain a visualization of \textbf{data type} and most important \textbf{references} per stage}
\label{fig:FullPipeline}
\end{figure*}

\subsection{Averaging pair-based solutions for $f,R$}
In this paper we introduced computationally efficient methods for $R_{ij}, f_i$ estimation, which we apply in randomly sampled minimal correspondences sets, in a way that resembles RANSAC procedures~\cite{ransacalgcit}. The multiple $f_i$, $R_{ij}$ estimates, one from each minimal sample, are then averaged, to produce the final solutions.

In  $f_i$, case, we introduce a novel averaging method. In the case of pairwise rotations $R_{ij}$, we apply the Weiszfeld algorithm~\cite{rotavealg2, rotavealg1}, which converges to the median ($L_1$-average) rotation. We also use a form of the Weiszfeld algorithm (multiple rotation averaging) in the rotation registration problem to get the final camera rotation matrices $R_i$ (Section~\ref{sec:ssRotestim}).
\subsubsection{Focal length estimates}
\label{sec:AppWei}
The distribution of $f_i$ estimates collected from all the possible image pairs $i,j$ can be skewed or multimodal (Fig.~\ref{fig:TwoFdistr}), in which case the mean or median estimate will not correctly determine $f_i$ value.
\begin{figure}[tb]
\begin{subfigure}{0.48\linewidth}
\includegraphics[width=\textwidth]{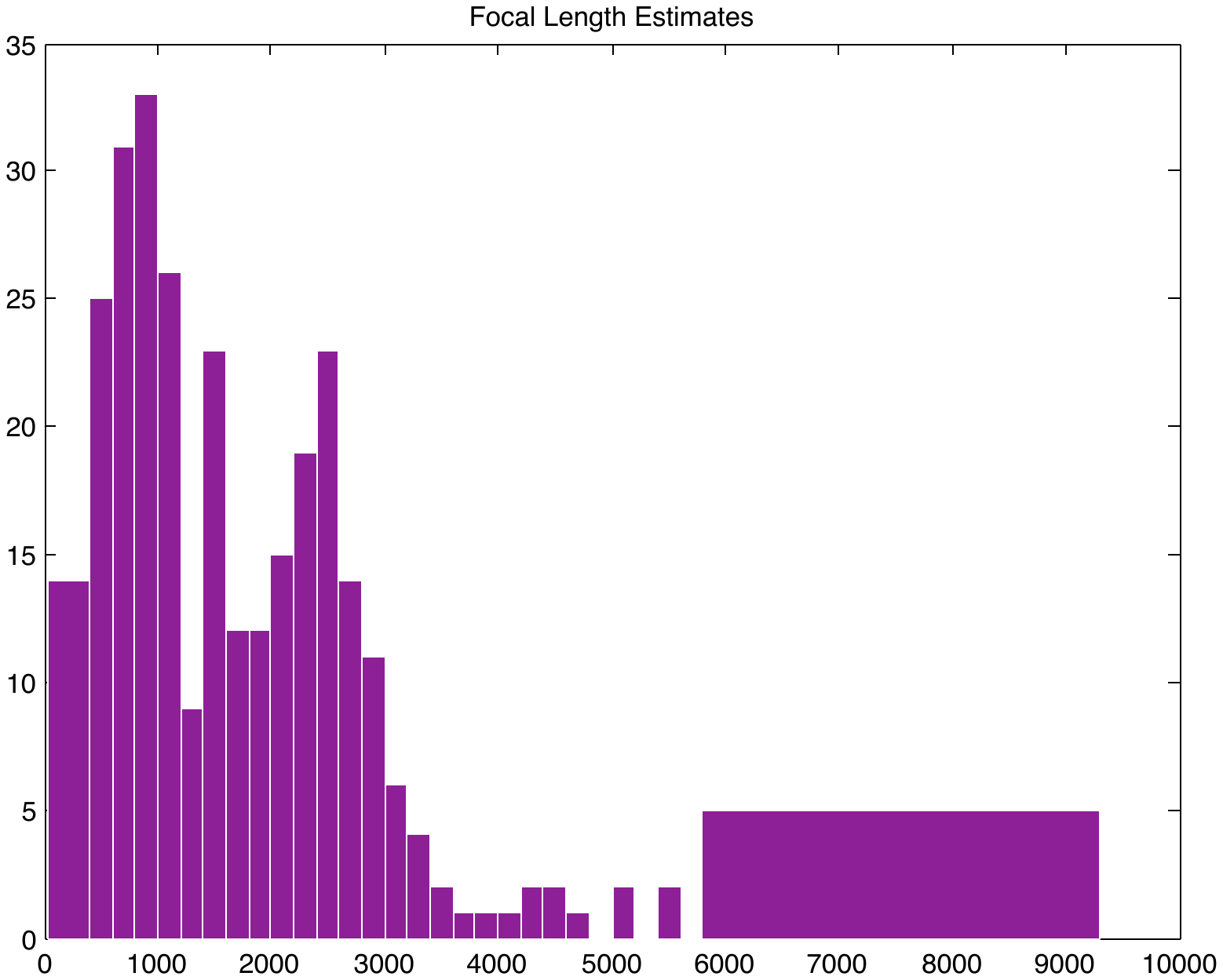}
\end{subfigure}~
\begin{subfigure}{0.48\linewidth}
\includegraphics[width=\textwidth]{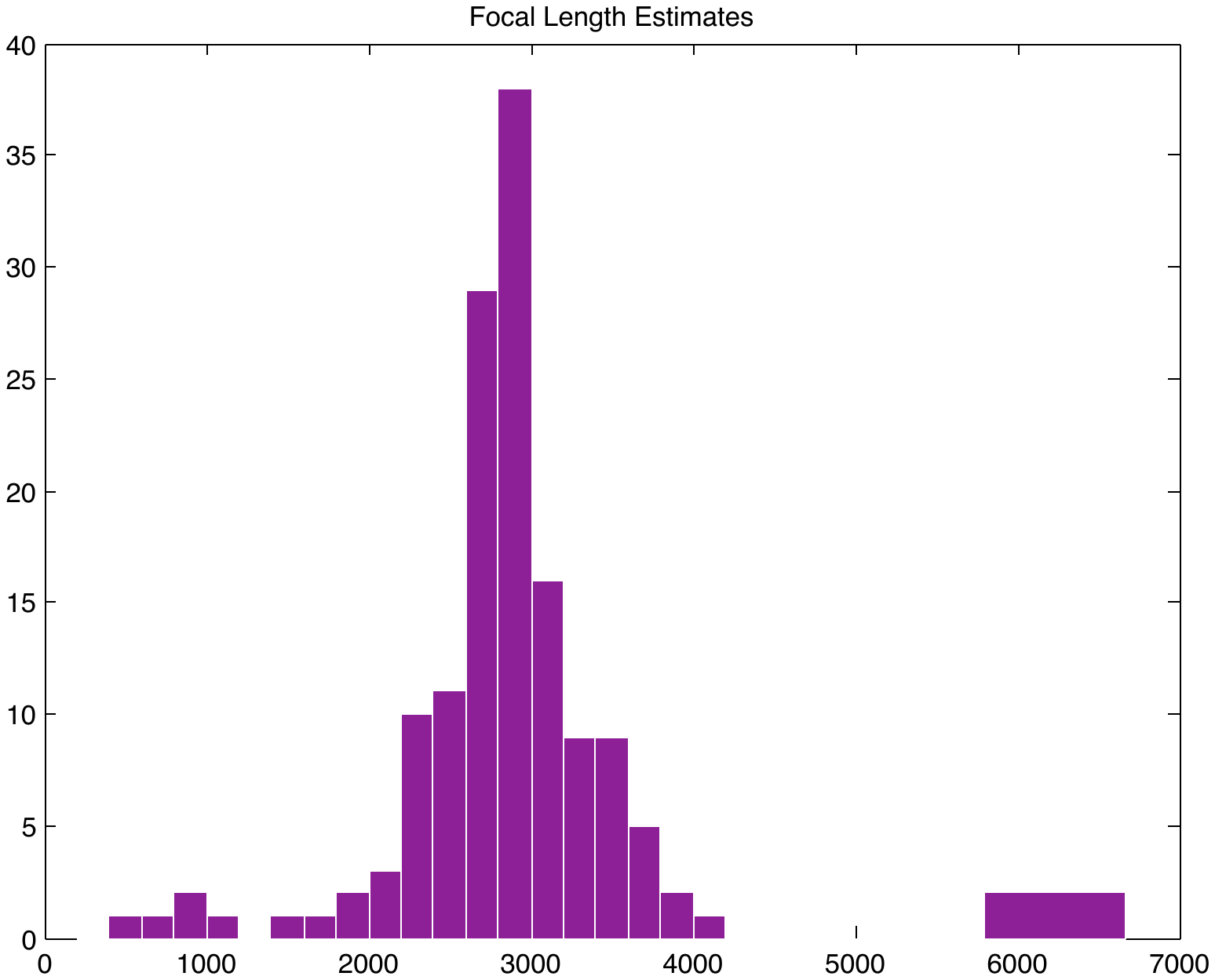}
\end{subfigure}
\caption{$f$ estimates distribution. $f$ estimates were collected from all available camera pairs. We observe that for some cameras (right), focal length can be readily determined. The opposite holds for other cameras (left). Data from castle-P30~\cite{strecha2008benchmarking} }
\label{fig:TwoFdistr}
\end{figure}

We introduce new measures to evaluate the fit of focal length estimates.  We initially introduce the Confidence count (cc) and then modify cc using the  problem structure to introduce the Joint confidence count (Jcc).
 We assume that in each image pair that contains image $i$, we receive a number of correct and a number of erroneous estimates for $f_i$, and that erroneous estimates originating from different image pairs vary significantly in value, whereas correct ones aggregate. 
\begin{figure}[tb]
\begin{center}
\includegraphics[width=\linewidth]{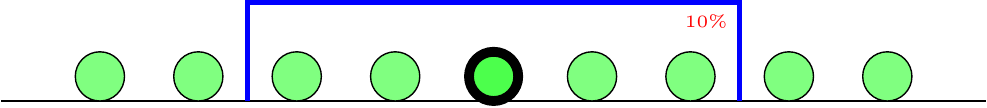}
\end{center}
\caption{A demonstration of confidence count computation. Each disk represents a $f_i$ estimate. We compute the cc for the central value, depicted here with a bold border. This cc depends on the number of estimates within a $\beta = 10\%$ range, depicted with a red square in the picture}
\label{fig:ccexplain}
\end{figure}

We visualize cc computation in Fig.~\ref{fig:ccexplain}. Simplifying aspects of the computation, we can describe it as a binning procedure, where the bin range is adapted to contain all estimates within $\beta \%$ deviation:
\begin{enumerate}
\item We collect all $f_i^n$ estimates of $f_i$, originating from all the different images we have matched with image $i$
\item For each $f_i^n$, we count the number of estimates, $f_i^k$, within a $\beta\%$  error range. This sum is the confidence count ${cc}_i^n$ for estimate $f_i^n$
\item We normalize ${cc}_i^n$ values to $0\dots 1$ range. This step is critical for Jcc computation
\end{enumerate}
To further improve the $f$ estimation, we introduce  Jcc (Fig.~\ref{fig:jccexplain}).  Since each estimate $f_i^n$ is paired with some estimate $f_j^n$  (the estimates were computed in an image pair), we expect that if $f_i^n$ is a good estimate then $f_j^n$ will be accurate too. To compute Jcc, we follow a similar to cc procedure, but this time each estimate $f_i^k$ in $\beta\%$ range contributes a different amount to Jcc sum. This amount is proportional to $cc_j^k$ of estimate $f_j^k$ that is paired with $f_i^k$. Good $f_j^k$ estimates have higher confidence counts, and contribute more to Jcc.
\begin{figure}[tb]
\begin{center}
\includegraphics[width=\linewidth]{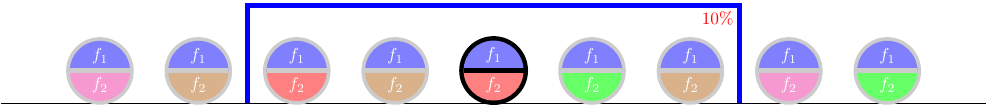}
\end{center}
\caption{A demonstration of joint confidence count computation. Each disk represents a $f_1$ estimate. We compute the Jcc for the central value, depicted here with a bold border. This time, each disk is divided in half, to demonstrate that each $f_1$ estimate is paired with one $f_2$ estimate. We use different disk colors for different cameras. Jcc depends on the sum of elements within range (inside the red square). In contrast with cc computation, each element contributes a different amount to the sum, depending on cc of $f_2$}
\label{fig:jccexplain}
\end{figure}

In greater detail, to compute the ${Jcc}_i^n$ of estimate $f_i^n$ about image $i$, we have:
\begin{enumerate}
\item Let $k=1\dots m$ be the $m$ images we matched with image $i$. For each image $k$ we have:
\begin{itemize}
\item From all estimates within $\beta\%$ range of $f_i^n$, we pick the $L$ ones that originate from pair $i,k$. 
\item Since every $f_i$ estimate originating from $i,k$ pair is matched to an $f_k$ estimate, from the $L$ estimates of the previous step we get the corresponding $L$ estimates of $f_k$
\item For each of the $L$ estimates of $f_k$, we have a confidence count ${cc}_k^n$. We get their mean. We do not use the direct sum, to diminish the influence of a large sum (large $L$) of low cc's.
\end{itemize}
\item ${Jcc}_i^n$ is the sum of the previous $m$ mean values. 
\end{enumerate}
\subsubsection{Rotation estimates}
\label{sec:ssRotestim}
In this section, we summarize rotation averaging using the Weiszfeld algorithm~\cite{rotavealg2, rotavealg1}. Weiszfeld algorithm  returns the $L_1$-mean  in a set of points in space $\mathcal{R}^n$. Many different metrics have been defined for rotation matrices~\cite{rotavealg1}. We limit our analysis here to 
\begin{equation*}
d_{geometric}(R,S) \triangleq \text{angle of rotation $RS^{-1}$}
\end{equation*}
Weiszfeld algorithm is a gradient-descent method and  is guaranteed to converge to the true $L_1$-mean in the case of single rotation averaging, as  averaging of pairwise rotation estimates $R_{ij}$.

The $L_1$-mean of  $R_i$ estimates of a single rotation is the rotation $R_y$ that minimizes:
\begin{equation*}
\sum_{i=1}^nd_{geometric}(R_i,R_y)
\end{equation*}
 In this case of rotation registration, the convergence of Weiszfeld algorithm is not guaranteed.

In detail, we applied Weiszfeld algorithm to weight the estimates $R_{ij}$ of the pairwise rotations we acquired through random sampling of minimal point sets ($8$ points) yielding a $R_{ij}$ solution.

In the rotation registration problem we applied the Weiszfeld algorithm in the following manner:
\begin{enumerate}
\item  We construct the rotations graph, with one node for every image and an edge $e_{ij}$ between nodes $i,j$ if we know the relative rotation $R_{ij}$ between the respective images. We take a spanning tree in this graph, and using $R_j = R_{ij}R_i$ we get the initial $R_j$ estimates 
\item For every node $i$ in the graph, we use all available estimates $R_{ij}$ to get inconsistent estimates $R_i^k$, $k=1,2,\dots$ through  $R_i = R_{ji}R_j$. We average estimates $R^k_i$ with one iteration of Weiszfeld algorithm
\item We repeat the previous step $n$ times ($n=20$)
\end{enumerate}
In all our experiments we set $\beta=10\%$.




\section{Results \& Discussion}
\label{sec:Res}
\subsection{Metric Reconstruction in Pairs of Images}
\label{sec:ResRec}
We implemented  Kruppa equations, a  well-studied and popular method for camera self calibration,  and used it as reference method  for the estimation of internal camera parameters. To compare the methods, we used synthetic camera projection matrices and image correspondences. We added Gaussian noise to the image points positions, and not to world points or other entities, to simulate actual noisy correspondences.

We observed that our method (Sec.~\ref{sec:Rec})   and the Kruppa method  produce identical $f_i$ estimates. In rare cases with extremely noise-corrupted correspondences, our method failed and the Kruppa method produced largely erroneous focal length estimates.

 We conclude  that the two methods are equivalent concerning the self-calibration problem. Still  our method is advantageous in additionally providing a  metric reconstruction.
 
Next, we evaluate camera pair reconstruction.  We compared our method to the 5-Point(5P) algorithm~\cite{5palgcit}. We used both approaches as initialization to BA~\cite{BAalgcit} and evaluated the quality of the final reconstructions (Tab.~\ref{tab:Compare5PL1}). We used  a multiple-view dataset~\cite{datasetcit} and determined the relative positions of all camera pairs with point correspondences. The same focal length estimates were used in both compared approaches. $f$ estimates  were obtained by the method we introduced in Sec.~\ref{sec:Rec}. The two methods we compared were:
\begin{itemize}
\item Initialize BA with our method: We randomly sampled minimal subsets of correspondences  and averaged the acquired solutions with rotation averaging~\cite{rotavealg2}. We allowed for 20 BA iterations
\item Initialize BA with 5P algorithm: We used a RANSAC procedure to sample minimal 5P subsets and to pick the solution. We allowed  for 20 BA iterations
\end{itemize}
To quantify the reconstruction error, we used the angle ($\Delta R$) between the relative rotation estimate and the true relative rotation, $R_{ij}$, between two paired views $i,j$.

The initialization of BA is important, to improve convergence and to reduce the computational cost. We observe that both the 5P algorithm and our method can be used as BA initialization with similar performance (Tab.~\ref{tab:Compare5PL1}). This  result implies that to further reduce the reconstruction error, we should improve other problem parameters as image correspondences and focal length estimates.

\begingroup
\hyphenpenalty 10000
\exhyphenpenalty 10000
\begin{table*}
\footnotesize
\begin{center}
\begin{tabularx}{0.99\textwidth}{R  X X X}
\midrule[1.5pt]
  & {\bf Median} $\Delta R (^{\circ})$ &  $\Delta R<10^{\circ}$ (pairs)&$\Delta R<5^{\circ}$ (pairs) \\ \midrule[1.5pt] 
 \textbf{Proposed Method}&$3.49$&$123$&$106$\\ \midrule[0.25pt]
\textbf{5P}&$3.77$&$128$&$104$\\
\lasthline
\end{tabularx}
\caption{Performance of the 5P algorithm and our method in recovering the relative rotations $R_{ij}$. We initialize BA with each of the aforementioned methods and do  $20$ BA iterations. In BA, the internal parameter matrices $K_i$ are held constant. Dataset  \textbf{castle-P30}~\cite{datasetcit}.}
\label{tab:Compare5PL1}
\end{center}
\end{table*}
\endgroup 
\subsection{Geometric verification of tentative correspondences}
\label{sec:ResGV}
\begin{figure*}
\begin{subfigure}{0.48\textwidth}
\includegraphics[width=\textwidth]{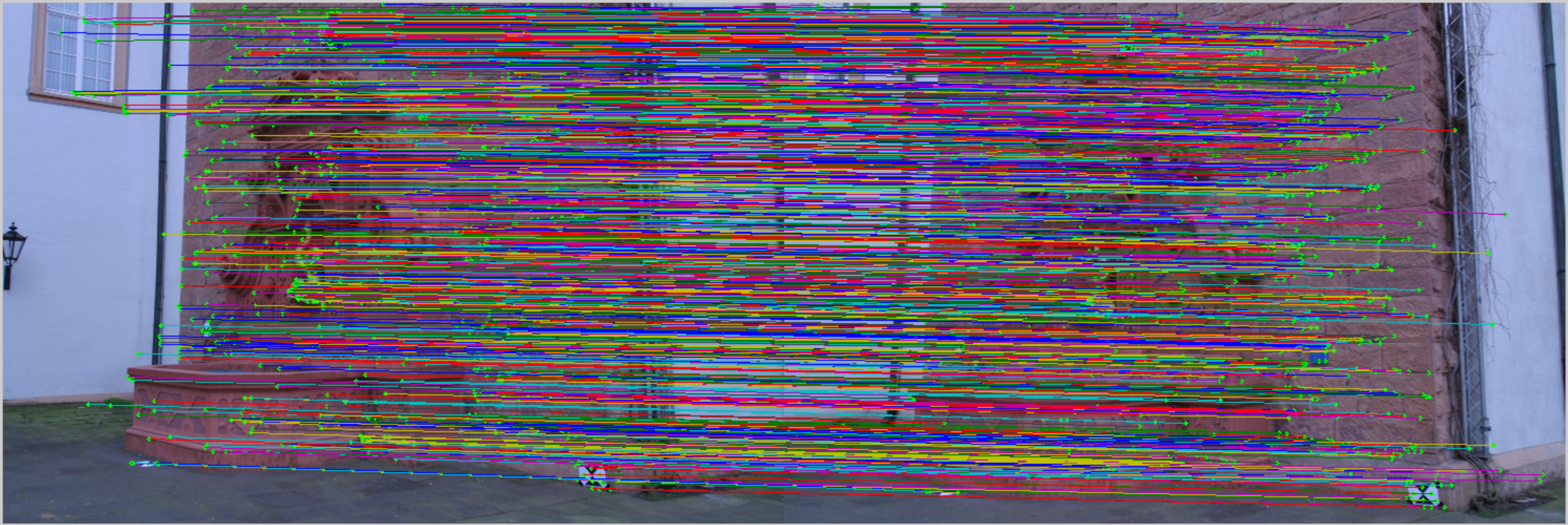}
\end{subfigure}~
\begin{subfigure}{0.48\textwidth}
\includegraphics[width=\textwidth]{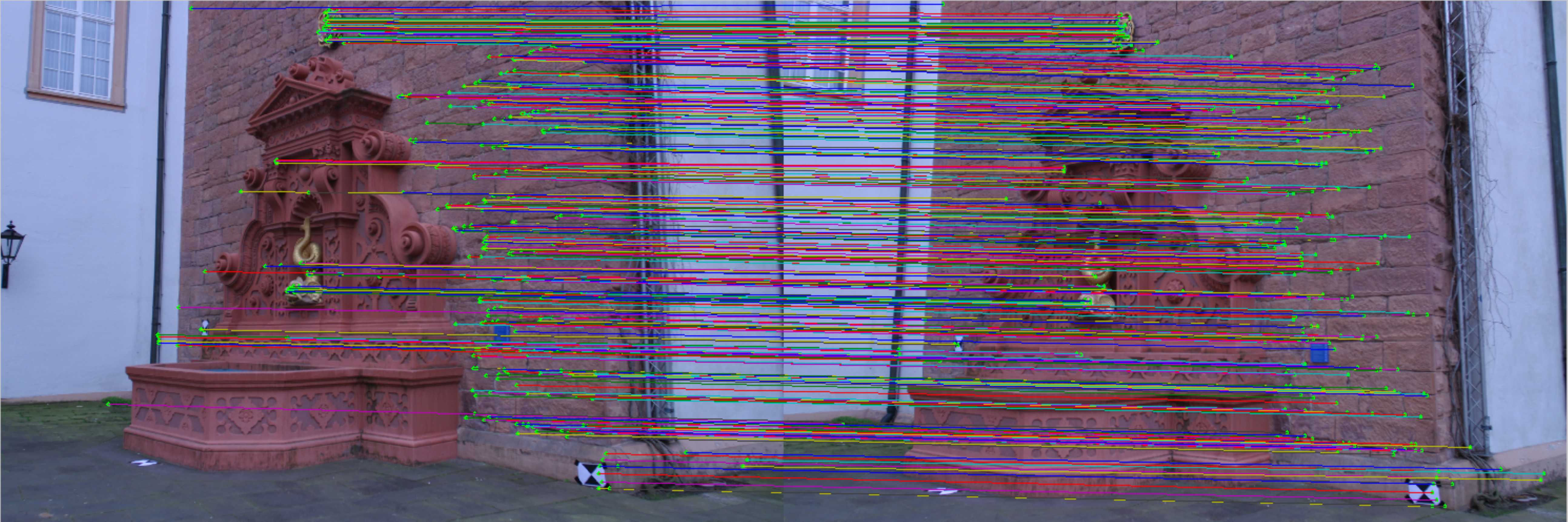}
\end{subfigure}
\caption{Demonstration  of tentative correspondences verification. Left: Initial correspondences. Right: Verified correspondences using  the geometric verification method we introduced}
\end{figure*}

In geometric verification,  correspondences are classified as correct or erroneous. We evaluate this classifier using  precision and recall. Two different datasets were used:
\begin{itemize}
\item Dataset~\cite{datasetcit}: The set contains outdoor scenes of landmark buildings. The ground truth camera matrices, $P$,  are provided, from which we can separate correct and erroneous  correspondences. In detail, from given $P_i, P_j$ we recover the fundamental matrix $F_{ij}$ and then evaluate Sampson's approximation to geometric error for each tentative correspondence~\cite{sampsoncit}
\item Dataset~\cite{wongiccv2011}: This set contains both architectural scenes and scenes with objects. We give  performance results on each of those two subsets independently. In this set, point correspondences between images are provided and labeled as correct or erroneous
\end{itemize}
The results are presented in Tables~\ref{tab:LCSresult1}, \ref{tab:LCSresult2}. Precision of the classifier is more important than its recall, as it is more important to have an oulier-free set of correspondences than to recover all true correspondences. Furthermore, the recursive verification method  discards erroneous matches with very high precision. This result supports our argument that points more distant in the one, e.g $x$, axis are more likely to violate order with respect to the other, e.g $y$, axis. Finally,  performance varies with scene type. The development of our method was based on  scene properties found in architectural scenes.  In scenes composed of objects, differences in scene geometry and the increased freedom in viewer's position cause more violations in the Consistency properties. In Dataset~\cite{datasetcit}, the performance in scenes of one main building, and consequently of a single main horizontal and vertical direction, as Fountain-P11, entry-P10, Herz-Jesu-P8, reaches almost flawless precision ($1$). In more complex scenes, which include more  buildings, as in castle-P30, the achieved precision degrades to values in the range $0.7-0.8$.

\begingroup
\hyphenpenalty 10000
\exhyphenpenalty 10000
\begin{table*}
\scriptsize
\begin{center}
\begin{tabularx}{0.99\textwidth}{>{\hsize=0.25\textwidth\raggedright\arraybackslash}R X X X X X X X}
\midrule[1.5pt]
{\bf Performance Measure} &{$\mathbf{ \alpha=0.02} $} & {$\mathbf{ \alpha=0.04} $} & {$\mathbf{ \alpha=0.06} $ } & $\mathbf{ \alpha=0.08} $ &  $\mathbf{ \alpha=0.10} $ & $\mathbf{ \alpha=0.15} $ & $\mathbf{ \alpha=0.20} $\\ \midrule[1.5pt] 
 {\bf Precision }&\textbf{0.99}{\color{RubineRed} \tiny{\emph{0.98}}}&0.98{\color{RubineRed} \tiny{\emph{0.97}}}&0.97{\color{RubineRed} \tiny{\emph{0.96}}}&0.96{\color{RubineRed} \tiny{\emph{0.96}}}&0.95{\color{RubineRed} \tiny{\emph{0.95}}}&0.91{\color{RubineRed} \tiny{\emph{0.91}}}&0.87{\color{RubineRed} \tiny{\emph{0.87}}} \\ \midrule[0.25pt]
{\bf Recall }&0.80{\color{RubineRed} \tiny{\emph{0.64}}}&0.89{\color{RubineRed} \tiny{\emph{0.72}}}&0.93{\color{RubineRed} \tiny{\emph{0.76}}}&0.94{\color{RubineRed} \tiny{\emph{0.78}}}&\textbf{0.96}{\color{RubineRed} \tiny{\emph{0.80}}}&0.95{\color{RubineRed} \tiny{\emph{0.80}}}&\textbf{0.96}{\color{RubineRed} \tiny{\emph{0.81}}} \\  
\lasthline
\end{tabularx}
\caption{ Precision and Recall for geometric verification of tentative correspondences in Dataset~\cite{wongiccv2011}. In {\color{RubineRed} \tiny{\emph{small italics} }} we give results on the complete set and, in the usual font, on the subset of architectural scenes. We applied the recursive verification method with Threshold $T$ as a percentage of image size. We give the percentage values $\alpha$ for $T$, following Eq. (\ref{eq:LCSta}) }
\label{tab:LCSresult1}
\end{center}
\end{table*}
\endgroup

\begingroup
\hyphenpenalty 10000
\exhyphenpenalty 10000
\begin{table*}
\scriptsize
\begin{center}
\begin{tabularx}{0.99\textwidth}{>{\hsize=0.25\textwidth\raggedright\arraybackslash}R X X X X X X X}
\midrule[1.5pt]
{\bf Performance Measure} &{$\mathbf{ \alpha=0.02} $} & {$\mathbf{ \alpha=0.04} $} & {$\mathbf{ \alpha=0.06} $ } & $\mathbf{ \alpha=0.08} $ &  $\mathbf{ \alpha=0.10} $ & $\mathbf{ \alpha=0.15} $ & $\mathbf{ \alpha=0.20} $\\ \midrule[1.5pt] 
{\bf Precision }&0.60 &0.70 &0.76 &0.79 &0.82 &0.86 &\textbf{0.88} \\ \midrule[0.25pt]
{\bf Recall}&0.85 &0.85 &0.85 &0.85 &0.85 &0.85 &0.84 \\ 
\lasthline
\end{tabularx}
\caption{ Precision and Recall for geometric verification of tentative correspondences in Dataset~\cite{datasetcit}. We applied the recursive verification method with Threshold $T$ as a percentage of image size. We give the percentage values $\alpha$ for $T$, following Eq. (\ref{eq:LCSta}) }
\label{tab:LCSresult2}
\end{center}
\end{table*}
\endgroup

\subsection{Improving focal length estimation in multi-view reconstructions}
 We show in Tab.~\ref{tab:ConfidenceEval} the improved $f$ estimates we get with cc. Further improvement is achieved by Jcc measure. To quantify the error in $f$ estimation we use $\Delta f$~\cite{chandraker2007autocalibration, gherardi2010practical, kukelova2008polynomial}:
\begin{equation*}
\Delta f \triangleq \left| \frac{\hat{f}}{f}-1 \right|,\quad\text{where $\hat{f}$ is an estimate of $f$}
\end{equation*}
\begingroup
\hyphenpenalty 10000
\exhyphenpenalty 10000
\begin{table*}
\scriptsize
\begin{center}
\begin{tabularx}{0.99\textwidth}{>{\hsize=0.25\textwidth\raggedright\arraybackslash}R X X X}
\midrule[1.5pt]
{\bf Method } & {\bf Median } &{\bf Confidence count } &{\bf Joint confidence count } 
\\ \midrule[1.5pt]
{\bf Mean $\Delta f$ error } & $0.28 $ & $0.17$ & $0.07$
 \\  
\lasthline
\end{tabularx}
\caption{Focal length averaging on \textbf{castle-P30}~\cite{strecha2008benchmarking}. We did $300$ RANSAC iterations and used the 8-point algorithm for fundamental matrix computation}
\label{tab:ConfidenceEval}
\end{center}
\end{table*}
\endgroup

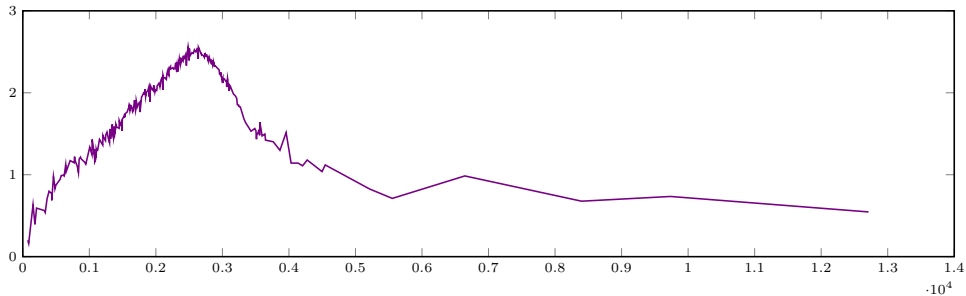
\begin{figure}[tbp]
\scriptsize
 \centering
\newlength{\figurewidth}
\setlength{\figurewidth}{\linewidth}
\newlength{\figureheight}
\setlength{\figureheight}{0.2\textheight}
\scalebox{.75}{

%
%
%
\definecolor{mycolor1}{rgb}{0.4706,0,0.5216}%
\begin{tikzpicture}

\begin{axis}[%
width=\figurewidth,
height=\figureheight,
scale only axis,
xmin=0,
xmax=14000,
ymin=0,
ymax=3
]
\addplot [
color=mycolor1,
solid,
forget plot
]
table[row sep=crcr]{
75.2981624098223 0.20037474671435\\
88.0535786031209 0.161992406804063\\
153.534055792101 0.633967462647787\\
182.303852524711 0.394565792513697\\
206.222747250526 0.592610039878298\\
324.432751555263 0.562492667897229\\
337.041433296921 0.539890917435407\\
364.518472542779 0.706320110905324\\
394.599439367372 0.798475230113076\\
434.960080743564 0.768220570849496\\
435.165775335726 0.687253272553432\\
460.648361677842 0.978066056709746\\
474.207778881904 0.90109005302844\\
483.171905890169 0.828533950378567\\
496.893441952408 0.872352924827019\\
560.076625748314 0.943433556458362\\
577.793536441919 0.990472133386349\\
611.385766371385 0.993066831776019\\
621.195163244918 0.985198523133169\\
644.650844070495 1.11978195640288\\
651.883070560736 1.03682666469032\\
711.629501866509 1.17251658034411\\
772.948966654022 1.14534900309663\\
777.885019461768 1.2201477244735\\
783.592568133507 1.1778612842867\\
815.319358428387 1.1102568092558\\
825.873276975678 1.08063223914774\\
829.467777381753 1.06100268168299\\
836.37484661111 1.02702030361684\\
847.459118880097 1.19110815981625\\
867.170491148001 1.21447022645349\\
874.350716767299 1.19625452016927\\
920.661085829362 1.15871256947621\\
927.47047756383 1.1557866521715\\
946.625022963726 1.1284363098739\\
1005.67715302267 1.33565627125126\\
1039.27901975616 1.24475414380846\\
1042.83112071132 1.43034271137514\\
1071.81630064249 1.26503879732855\\
1075.00163157882 1.30692977367265\\
1081.20111840743 1.15689782987907\\
1094.37891750656 1.18436160070759\\
1098.6587233507 1.27312502639318\\
1101.04814420787 1.21145189572588\\
1103.46295362633 1.30909454333324\\
1109.76535935591 1.28798628110105\\
1115.65745317854 1.30801698920964\\
1125.85177697482 1.30393700513704\\
1152.1946608204 1.43111127756173\\
1201.61437219944 1.36241581678101\\
1204.35143123477 1.47942782332102\\
1214.26625381682 1.45782896333216\\
1242.51176514625 1.41724969527191\\
1246.17288894333 1.48214346491813\\
1265.15003805107 1.51970898651553\\
1285.01650582203 1.44749857842725\\
1303.47293271382 1.39101567896505\\
1307.79859365019 1.55911158616399\\
1315.06355986828 1.52033007487652\\
1318.93130529187 1.43808611390619\\
1345.25117512275 1.61574816335162\\
1346.1355281453 1.53983443653062\\
1362.16573196453 1.45282131802152\\
1371.08667699716 1.58799446978107\\
1382.99803166163 1.50146800838567\\
1397.0160708887 1.58014278922318\\
1398.3346180463 1.61471784664633\\
1412.23749917746 1.5779790961608\\
1432.08551336206 1.57383268063255\\
1447.16219483578 1.56652966377497\\
1449.16734328852 1.66162350715289\\
1469.42624632459 1.61135711210583\\
1486.88804588816 1.64801970191164\\
1490.58735604622 1.53721356025201\\
1496.02045590454 1.6778960788647\\
1528.02975095445 1.72107588117523\\
1530.44843676582 1.73763089994951\\
1531.17458003226 1.69973002531395\\
1531.90404024975 1.73994685997565\\
1568.70337942938 1.7648858282171\\
1587.63615697098 1.8115941054043\\
1594.0122983032 1.81720439838812\\
1600.17208768656 1.85858372459065\\
1602.56229334243 1.85501864509439\\
1610.58333574484 1.78124323204153\\
1627.11295268781 1.84190374644177\\
1630.31180060664 1.83958096924869\\
1640.43257614699 1.81573818110734\\
1646.01986652281 1.78063148724704\\
1646.98821312514 1.77911056508792\\
1651.40133323124 1.7679283967144\\
1662.2483664999 1.7953750099919\\
1663.44027661872 1.80802110893068\\
1676.91014233924 1.89309400543123\\
1678.01122727597 1.90916609227234\\
1694.84926629947 1.78553619322844\\
1702.39399549554 1.83202146582669\\
1705.20374167692 1.90374766080065\\
1722.96906023763 1.83018902464224\\
1744.90282389306 1.87308136930649\\
1760.30327675836 1.8885628132985\\
1762.64342671002 1.76426936418195\\
1784.8476098769 1.94999404286227\\
1792.33449859458 1.95758361419672\\
1817.28329240341 1.99946646901701\\
1830.8272388069 1.99032873584023\\
1831.67214302912 1.97283602446997\\
1833.7697662216 2.04304579580144\\
1835.3411869486 2.00242094154944\\
1841.43950828733 2.00208984858854\\
1845.82339986936 1.95714315941347\\
1850.95743650123 1.99032641291858\\
1860.816154548 2.00720710733167\\
1882.18283209765 2.03426817204339\\
1893.46719612055 2.06792277547046\\
1900.06344808889 1.95244258790258\\
1903.97799332369 2.0378250145281\\
1908.22492974953 2.07060878407186\\
1910.35244124586 1.89149594038013\\
1911.47131650052 2.03544174563673\\
1916.80680380615 2.0881471716706\\
1941.60863577621 2.04177746932341\\
1947.91261220306 2.04787441533535\\
1962.83570004292 2.02743966488386\\
1967.81511975765 2.09074230922231\\
1971.22316066227 2.03887112980864\\
1974.50379342045 2.08268609471545\\
1985.52571307532 2.00907549544388\\
1991.38764200014 2.0310334828219\\
1994.87677927076 2.03542276119444\\
2011.66025116312 2.02727789075967\\
2016.77986096602 2.04997182148133\\
2017.63139095192 2.0805210852499\\
2046.55373366759 2.11726678031118\\
2060.85763841458 2.07156408906601\\
2072.92778744102 2.13117845186721\\
2076.35268077412 2.15695703482206\\
2096.25291153364 2.18787608477473\\
2103.83179374803 2.03378963310819\\
2107.39958790347 2.21358061106423\\
2116.4606350841 2.18083199759245\\
2118.49681358382 2.18279454975969\\
2134.68964142288 2.18370527965535\\
2157.4808373061 2.16302798957018\\
2163.91718611155 2.21524328504564\\
2169.71966100741 2.24805074977918\\
2180.09830830379 2.27542590063443\\
2182.00247542905 2.22995818875741\\
2196.45115108716 2.20894251119811\\
2198.16618789009 2.29067773824435\\
2219.49946503791 2.31926787975432\\
2219.52950401133 2.28844912368046\\
2253.70791827084 2.3051248323421\\
2256.90573706007 2.3069782774461\\
2267.33412112934 2.28916925957811\\
2285.39357056601 2.2946354089671\\
2292.79504904037 2.34346801308523\\
2297.40402593055 2.33949614328645\\
2303.73475871261 2.30536753043975\\
2304.98998881991 2.31358105461481\\
2309.2432842839 2.36986767483035\\
2309.27680753586 2.37179639913476\\
2310.64706065057 2.32037432941035\\
2315.7787956263 2.34492185366356\\
2323.20833598925 2.25604620076285\\
2327.7945274303 2.30763705429004\\
2328.10697095216 2.32644583171951\\
2335.75000056615 2.26278080247355\\
2335.94806242617 2.32983619605955\\
2346.95925389687 2.39310104181063\\
2348.34594293357 2.36868646716241\\
2353.22838468245 2.41587199453952\\
2357.37697099379 2.39545398526449\\
2364.74288751545 2.34842293025979\\
2370.33279261024 2.38772485513872\\
2372.62699067426 2.42859526390706\\
2378.29323457045 2.37303230561524\\
2384.95567007642 2.39749866310314\\
2393.86127986197 2.40533429896203\\
2407.81997413298 2.38719095759376\\
2414.19248689964 2.45199702591023\\
2414.58901257638 2.40646133595769\\
2420.3160828053 2.41198946294454\\
2448.54050601462 2.47024880891421\\
2450.6330049355 2.41005594808474\\
2458.73638910408 2.33182533904972\\
2469.45253313911 2.44819173795952\\
2485.13346189439 2.56277732616784\\
2493.24698781553 2.52109168708748\\
2499.12571914695 2.39472060196494\\
2503.66847416682 2.53950112153514\\
2514.08612692798 2.45115917763209\\
2542.0032552484 2.48949201400604\\
2547.97251992643 2.48617111606611\\
2564.25194678614 2.48082407361148\\
2579.42607334041 2.52454521285889\\
2601.35844490138 2.49897283844381\\
2617.40320367151 2.53351330072746\\
2635.83194937339 2.51243800320335\\
2636.73716775476 2.41340618855368\\
2639.34276565102 2.55570778278837\\
2682.72727882181 2.47149684506098\\
2683.99679645365 2.47620501549463\\
2685.81109338343 2.47492804702699\\
2689.87861294342 2.46744291407368\\
2721.44317317803 2.4480617623026\\
2726.62599788454 2.43506605922531\\
2728.10037328224 2.43193126298707\\
2731.24995519571 2.44313479133563\\
2735.98011189151 2.48049400457924\\
2736.16518615352 2.48607728071955\\
2741.60452898577 2.45024160354161\\
2742.09834163988 2.45024160354161\\
2746.02290928297 2.46985725155269\\
2762.11070671565 2.44908262291987\\
2767.64295179708 2.45368084444105\\
2787.24849846785 2.38814632320064\\
2792.62078254671 2.42525718457802\\
2801.7689899594 2.4081274597961\\
2802.29692429845 2.39357164147253\\
2806.67022854414 2.41899686390435\\
2816.69113647762 2.37737945120172\\
2830.53767890043 2.36095682345459\\
2833.57667188811 2.40434284664438\\
2836.85845584267 2.38229480053447\\
2843.66563621757 2.38500343368846\\
2846.71063745765 2.36254678556362\\
2854.79492564991 2.35338869709895\\
2857.05128379119 2.33144512343123\\
2868.54598421955 2.35638305057603\\
2868.76177635884 2.37240109476901\\
2875.98374464197 2.35008689127108\\
2881.04344649509 2.32507361577854\\
2892.96183029325 2.32864527116877\\
2950.7184325801 2.27910997179242\\
2956.99288941817 2.23148280557591\\
2974.58484424348 2.21346890114581\\
2975.5201459501 2.24952894683274\\
2993.79827603112 2.15197609409409\\
3006.64561639163 2.18236635881372\\
3007.30237138953 2.11842326938283\\
3015.11738938717 2.17067546208537\\
3030.02575177249 2.16902934542211\\
3050.20715916404 2.14195770361282\\
3061.80919187816 2.14065225480141\\
3064.64904282741 2.0697154765901\\
3076.89043647369 2.17327605692727\\
3085.15359565807 2.11193188835372\\
3096.07264666566 2.10153981412823\\
3097.56055437333 2.02338999086462\\
3103.50858034378 2.07440230161137\\
3110.87966187762 2.05326373239666\\
3120.75177940209 2.07548126453666\\
3166.38472642924 1.98437333720982\\
3194.39683156547 1.96262673114671\\
3215.2028400042 1.93172791632934\\
3224.45841189063 1.85371000915049\\
3245.43733675787 1.84918256337869\\
3245.71618473311 1.83440786276301\\
3269.44825108388 1.82625046588931\\
3321.17253890089 1.68700726483493\\
3345.21505764642 1.64031152901268\\
3430.4334778352 1.53077970861572\\
3487.03179064153 1.564008524172\\
3504.2334344777 1.53591421918264\\
3512.36005768715 1.4333438072965\\
3514.46643419005 1.47741236391212\\
3549.35110113855 1.54031137980648\\
3559.56404119478 1.48089347159772\\
3563.46867483889 1.6396366357112\\
3593.26366898892 1.4903000497531\\
3596.89704798563 1.47332353566676\\
3638.65751281438 1.4963080724479\\
3648.15579711148 1.42164006558208\\
3764.2156358719 1.40238399566309\\
3864.44659609184 1.29805895095292\\
3957.65697135369 1.51429152758698\\
4032.51363312257 1.14188567353112\\
4136.36040955944 1.14198415303845\\
4205.49026688101 1.10885374945161\\
4274.35563305649 1.17941549054908\\
4497.54744722981 1.03809691945862\\
4545.71694978764 1.11927930263925\\
5212.11763081286 0.826041823594442\\
5553.79922064832 0.711040200232446\\
6644.32910985729 0.985214909501321\\
8398.66813760495 0.67698727273796\\
9735.55989438079 0.73464724360911\\
12711.8988260534 0.545882144746254\\
};
\end{axis}
\end{tikzpicture}%
}
\caption{Joint confidence count  distribution, which displays a clear peak near  correct $f$ value. Data from \textbf{castle-P30}~\cite{strecha2008benchmarking}}
\label{fig:JccPeaked}
\end{figure}

 In  Fig.~\ref{fig:JccPeaked} we present an non-ambiguous Jcc distribution from which $f$ can be correctly determined, in contrast to $f$ estimates distributions displayed in Fig.~\ref{fig:TwoFdistr}. 

 \subsection{Multi-view reconstruction in unordered image sets}
In Tab.~\ref{Tab:ReconstructionErrors}, we provide quantitative performance measures for multi-view reconstructions that were acquired applying the proposed pipeline (Fig.~\ref{fig:FullPipeline}),  on unordered image datasets and with no other input apart from the scene photographs. To quantify reconstruction error in camera translation, we used the angle ($\Delta \mathbf{t}$) between the translation estimate and the available true translation $\mathbf{t}$. In Fig.~\ref{fig:TwoReconstructions} we qualitatively display the results of the proposed reconstruction pipeline. The results in Tab.~\ref{Tab:ReconstructionErrors} and Fig.~\ref{fig:TwoReconstructions} demonstrate that the introduced methods can be used in unordered image sets to produce quality reconstructions of the photographed scenes.
\begin{figure*}
\begin{subfigure}{0.48\textwidth}
\includegraphics[width=\textwidth]{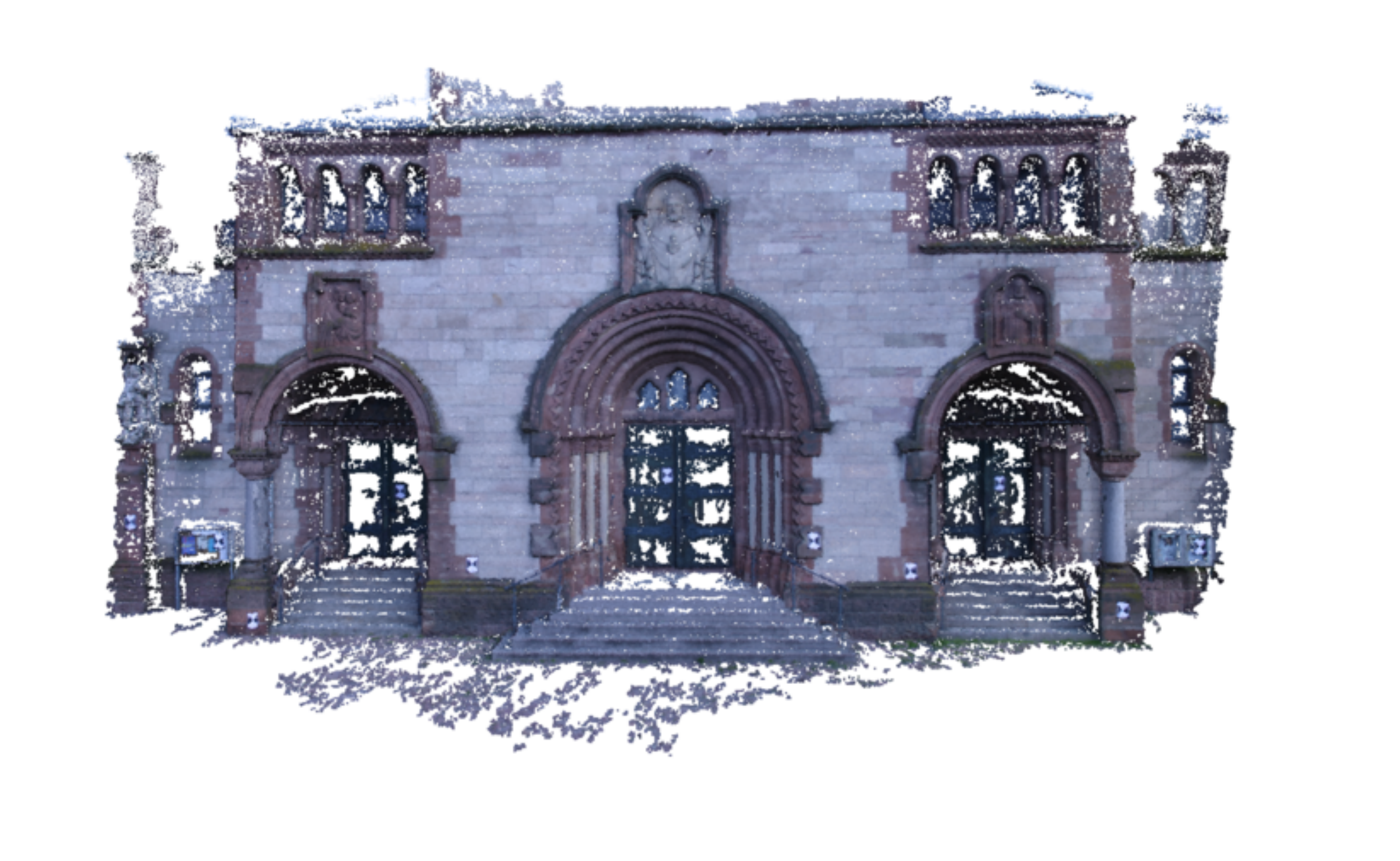}
\end{subfigure}~
\begin{subfigure}{0.48\textwidth}
\includegraphics[width=\textwidth]{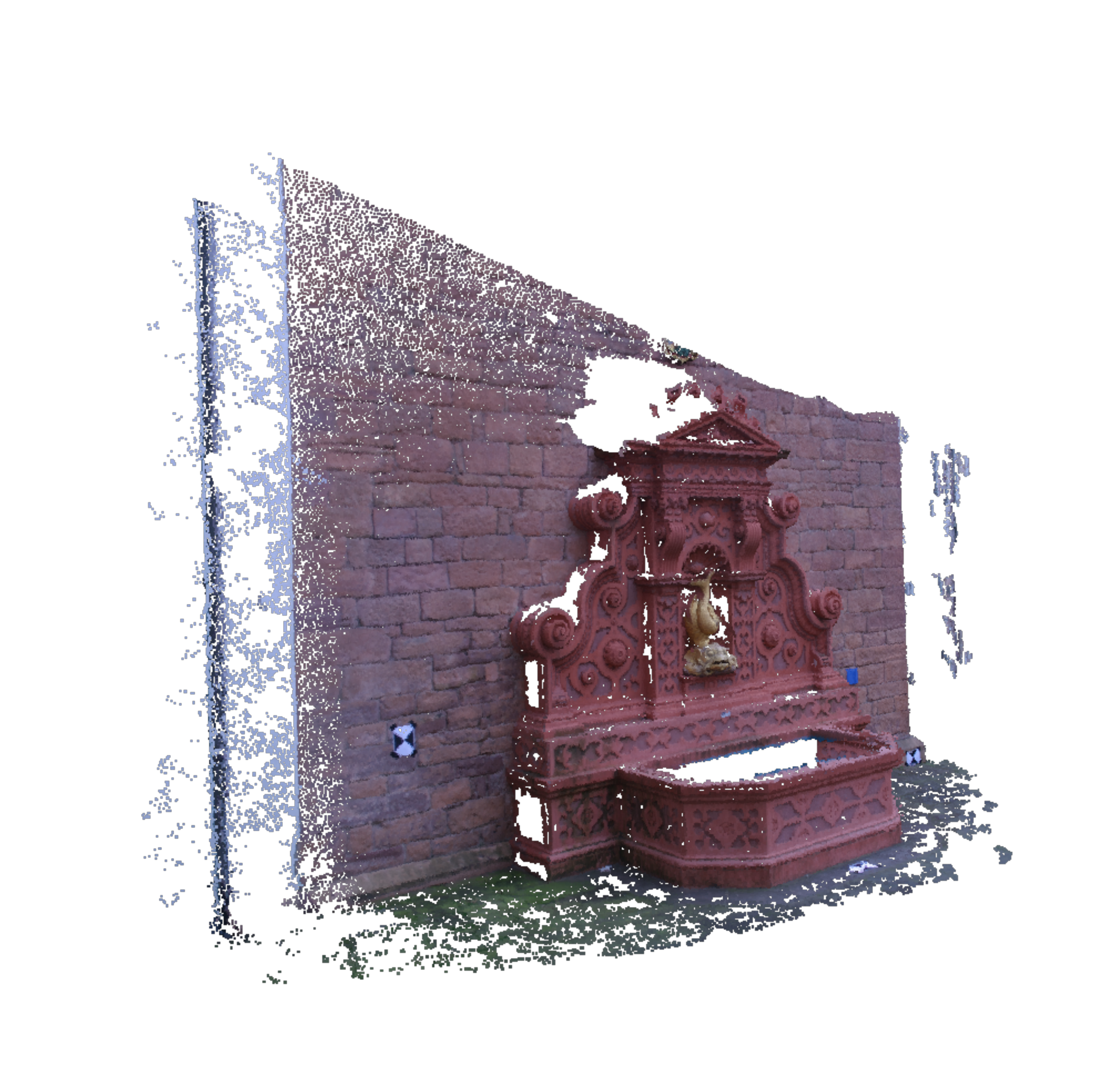}
\end{subfigure} \\
\begin{subfigure}{0.48\textwidth}
\includegraphics[width=\textwidth]{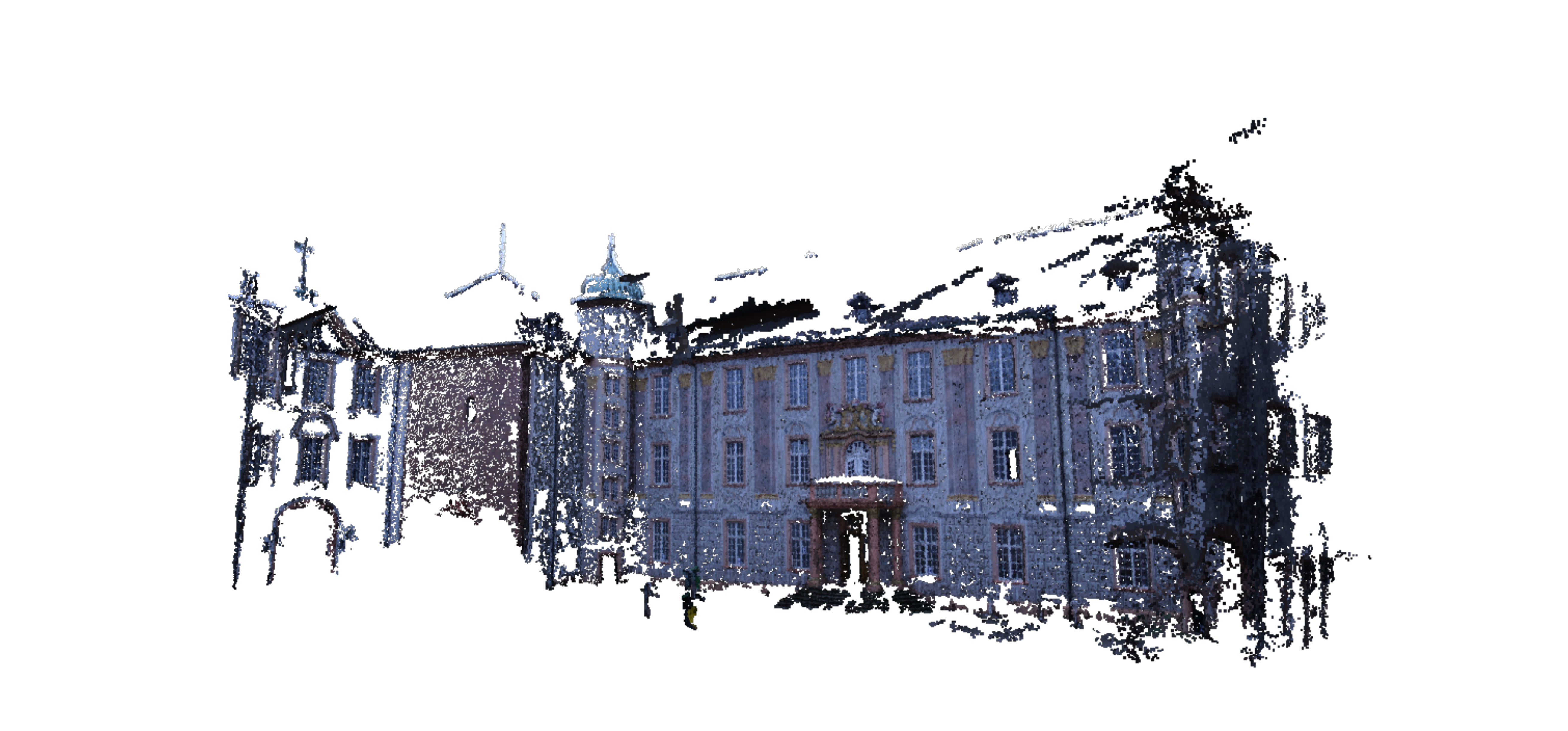}
\end{subfigure}~
\begin{subfigure}{0.30\textwidth}
\includegraphics[width=\textwidth]{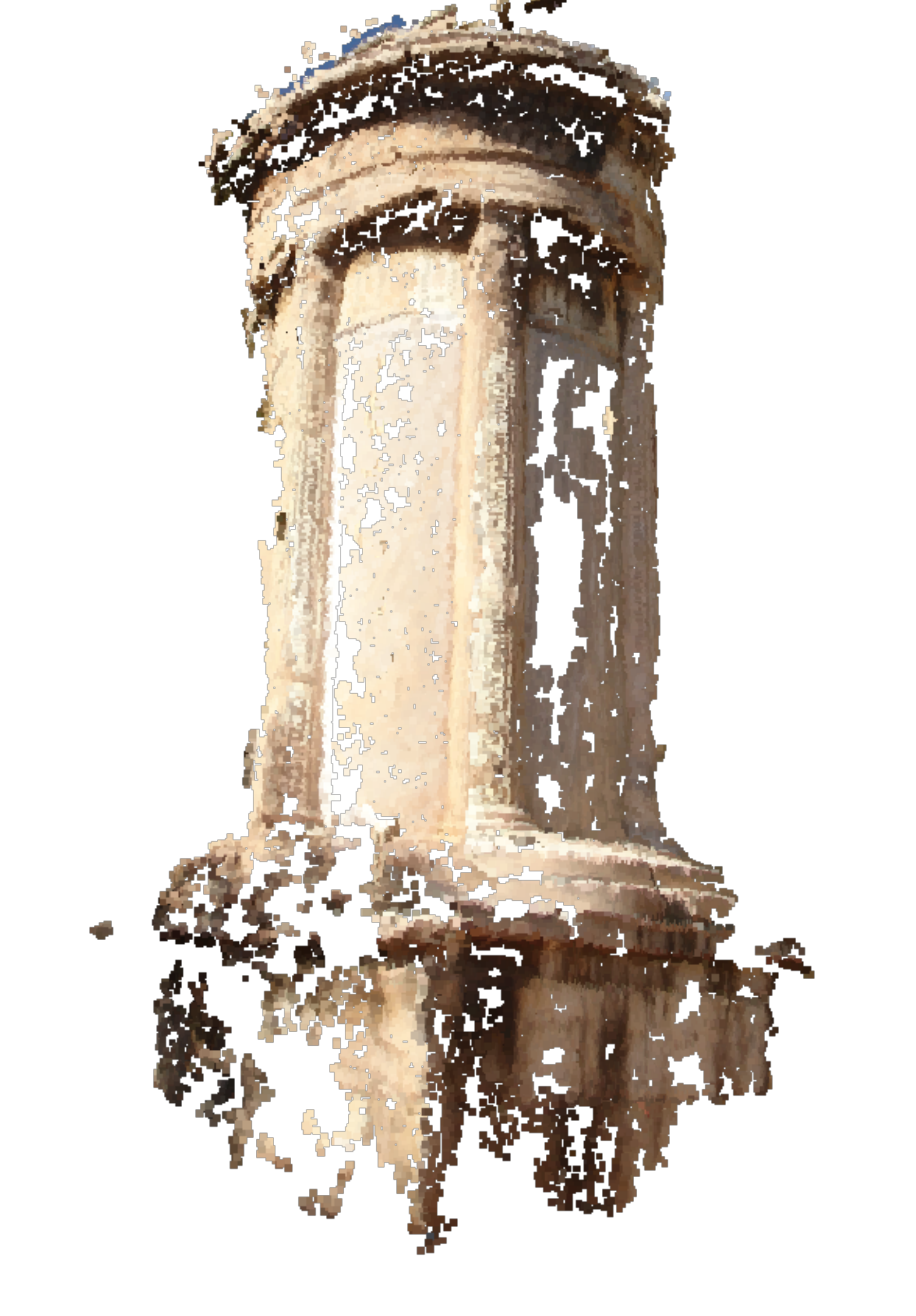}
\end{subfigure} \\
\begin{subfigure}{0.48\textwidth}
\includegraphics[width=\textwidth]{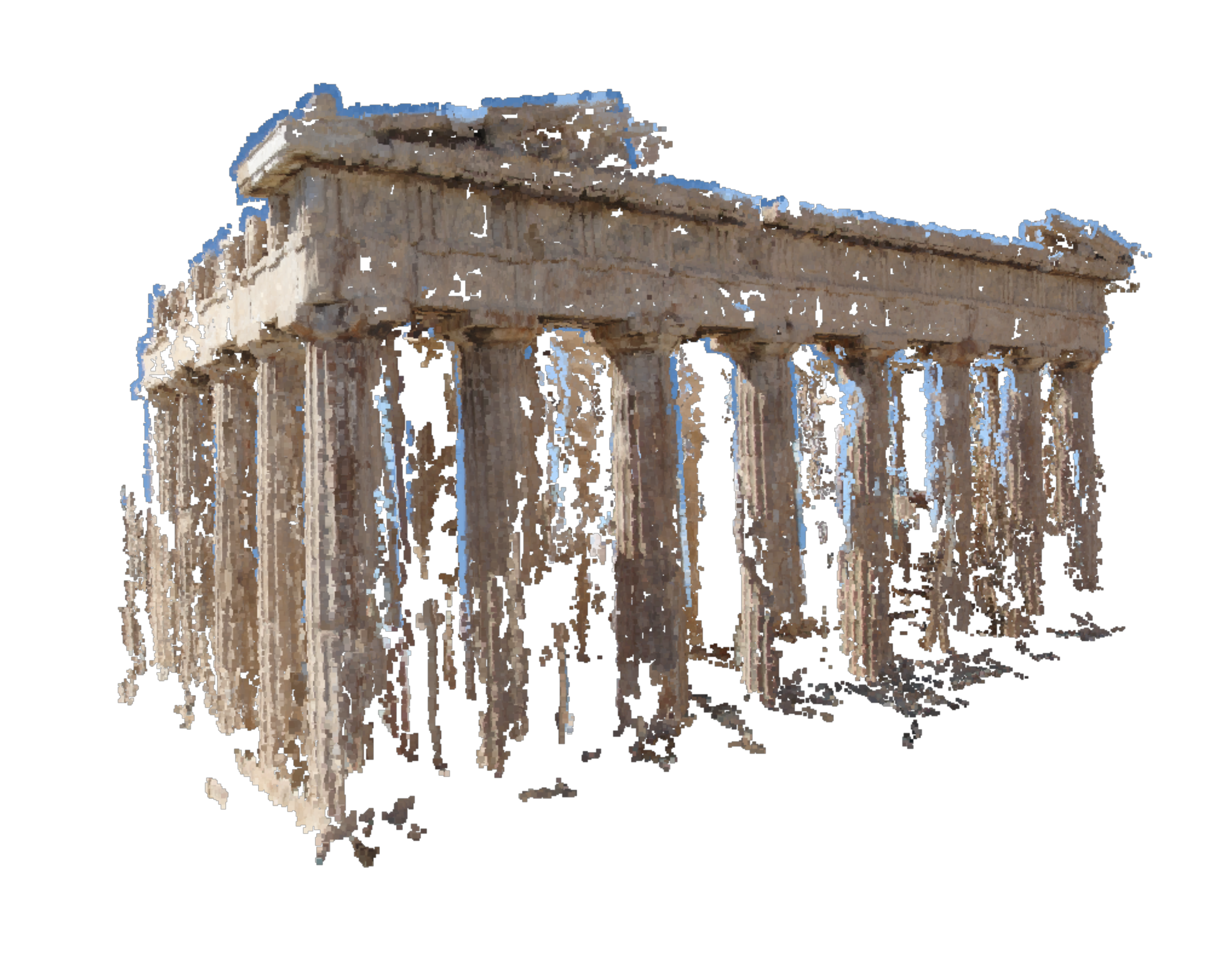}
\end{subfigure}~
\begin{subfigure}{0.48\textwidth}
\includegraphics[width=\textwidth]{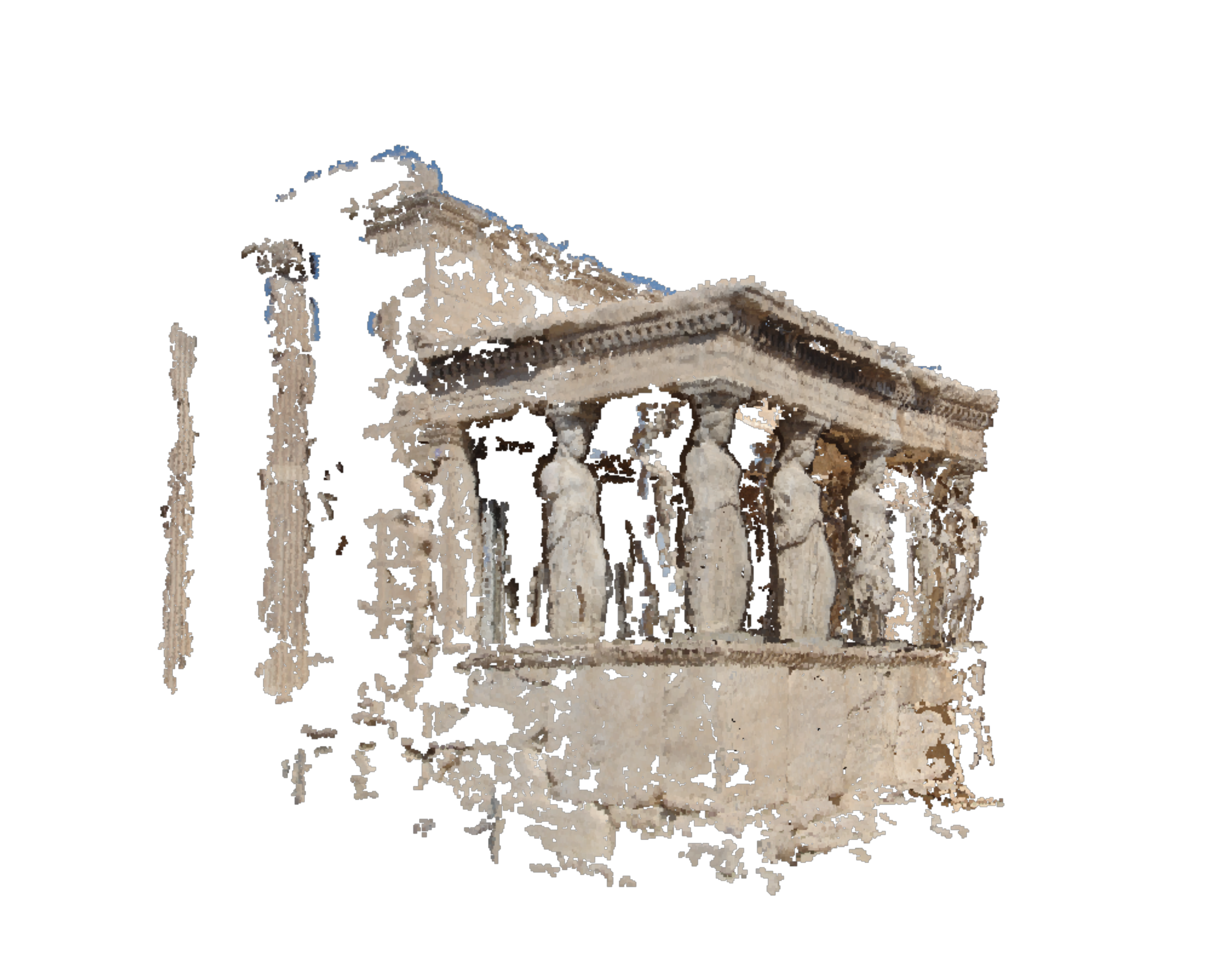}
\end{subfigure} 
\caption{3D reconstruction results, as dense point clouds. \textbf{Top Row:}  Datasets~\cite{strecha2008benchmarking}, Herz-Jesu-P25 (left) and Fountain-P11. \textbf{Middle Row:} Dataset castle-P19~\cite{strecha2008benchmarking} (left) and photo set of Monument of Lysicrates, Athens (right). \textbf{Bottom Row:} Photo sets of locations in Athens, Parthenon (left) and Karyatids (right). The scenes in Athens were photographed by the authors with a simple compact camera}
\label{fig:TwoReconstructions}
\end{figure*}

\begingroup
\hyphenpenalty 10000
\exhyphenpenalty 10000
\begin{table*}
\footnotesize
\begin{center}
\begin{tabularx}{0.99\textwidth}{R X X X  }
\midrule[1.5pt]
\textbf{Dataset}&$\Delta \mathbf{t} (^{\circ})$ &$\Delta R (^{\circ})$ &$\Delta f$ \\ \midrule[1.5pt] 
\textbf{castle-P30}&3.10&1.06&0.0389 \\ \midrule[0.25pt]
 \textbf{castle-P19}&7.35&4.17&0.0586 \\ \midrule[0.25pt]
\textbf{entry-P10}&4.62&4.67&0.2118\\ \midrule[0.25pt]
\textbf{Herz-Jesu-P8}&1.00&0.68&0.0266\\ \midrule[0.25pt]
\textbf{Herz-Jesu-P25}&0.41&0.31&0.0049 \\ \midrule[0.25pt]
\textbf{Fountain-P11}&0.44&0.41&0.0095 \\ 
\lasthline
\end{tabularx}
\caption{Mean errors in $\mathbf{t}, R, f$ estimation in the final reconstruction. Datasets from~\cite{strecha2008benchmarking}.}
\label{Tab:ReconstructionErrors}
\end{center}
\end{table*}
\endgroup


\section{Conclusions}
Using the DIAC, we developed a linear self-calibration and metric reconstruction method. Two theorems describe the relative configuration of the two recovered solutions and provide support to use the Cheirality condition for solution disambiguation. Comparisons to Kruppa equations and the 5P algorithm revealed that our method performs similarly to these standard approaches. Subsequently we show that the large number of $f,R$ estimates that are produced by our self-calibration and metric reconstruction method can be utilised through averaging methods, shifting our focus from choosing the best solution to finding, eg as in finding an optimised $f$ estimate prior to self-calibration, the best solution averaging method. We also developed a general method to verify point matches between images, which can be solved by reduction to LCS. The corresponding verification method can be used in any problem with image correspondences input. The verification method successfully rejected outliers in both architectural and  general scenes, with more success in the former category. All our methods were  integrated to a full multiple-view reconstruction pipeline to produce visually high-quality reconstructions on both standard datasets and image sets we shot using a conventional camera. Multi-view reconstructions were obtained combining camera pair reconstructions using rotation averaging algorithms and a novel approach to average focal length estimates.

\section*{References}
\bibliography{Bibliography}
\begin{appendices}
\section{Gaussian elimination in Self calibration and metric recontruction equations}
To simplify the expressions, we introduce the notation
\begin{equation*}
\mathbf{P_i^{\gamma}}\text{: row vector produced from i-th row of }  {[ \mathbf{a}]}_x F
\end{equation*}
and permute $\mathbf{x_0}$ elements with the permutation
\begin{equation*}
4 \leftrightarrow 6
\end{equation*}
We denote the permuted vector by $\mathbf{x}$ and the corresponding system matrix by $A_{pr}$. Using this notation, we write $A_{pr}$ as
\begin{equation}
\label{eq:Prapparent}
\begin{bmatrix}
\mathbf{r_1} & \phi^1_1 \mathbf{P_{2}^{\gamma}} & \psi_1 \\
\mathbf{r_2} & \phi_2^1 \mathbf{P_{2}^{\gamma}}+\phi_2^2 \mathbf{P_{3}^{\gamma}} & \psi_2 \\
\mathbf{r_3} & \phi_3^1 \mathbf{P_{1}^{\gamma}}+\phi_3^2 \mathbf{P_{3}^{\gamma}} & \psi_3 \\
\mathbf{r_4} & \phi_4^1 \mathbf{P_{1}^{\gamma}} & \psi_4 \\
\mathbf{r_5} & \phi_5^1 \mathbf{P_{1}^{\gamma}}+\phi_5^2 \mathbf{P_{2}^{\gamma}} & \psi_5 
\end{bmatrix} 
\end{equation}
where $\mathbf{r_i}$ are $1\times 3$ vectors and $\phi,\psi$ are appropriate constants of no special structure. 

We aim  to eliminate the elements in the rows $1-3$ and columns $4-6$ of $A_{pr}$, which we refer to as $A_n$, and then to apply regular Gaussian elimination. This is generally possible, owing to the structure of $A_n$ rows in~\eqref{eq:Prapparent},  which are linear combinations of $\mathbf{P_{i}^{\gamma}}$ vectors and, also, using the canonical projective reconstruction  allows us to substitute \begin{equation}
\label{eq:lindepProj}
\mathbf{P_3^{\gamma} }=d_1\mathbf{P_1^{\gamma} }+d_2\mathbf{P_2^{\gamma} }
\end{equation}
Thus, the elimination of $A_n$ elements is now straightforward by applying row-operations to matrix $A_{pr}$.  We then apply ordinary Gaussian elimination to reduce $A_{pr}$ to the form of~\eqref{eq:rref}.

\section{Geometric Relations between the two recovered solutions for  metric recontruction of a camera pair}

We  proceed with the proofs of  Theorems~\ref{th:mirror} and~\ref{th:viewdir}. 

\begin{ress}
\label{th:Cnull}
Let $P$ denote a projection matrix. The center of projection  $\mathbf{C_P}$ has no image, as it is projected to point $\mathbf{0}$. Equivalently, $\mathbf{C_P}=\begin{pmatrix} \mathbf{C^T}&1\end{pmatrix}^T$ is a right null-vector of $P$.
\end{ress}
\begin{ress}
\label{th:KRC}
Let $P$ denote a projection matrix.  $P$ can be decomposed as
\begin{equation*}
P=\begin{bmatrix} KR & -KR\mathbf{C}\end{bmatrix}
\end{equation*}
\end{ress}
Results~\ref{th:Cnull},~\ref{th:KRC} describe properties of the camera position $\mathbf{C}$. The following Result is concerned with the camera direction
\begin{ress}
\label{th:direction}
Assume a projection matrix
\begin{equation*}
 P=\begin{bmatrix} M & \mathbf{p} \end{bmatrix} 
\end{equation*}
Let the vector $\mathbf{m^{T}_3}$ denote the third row of $M$. Then the vector 

\begin{equation*}
\mathbf{v}=\det{(M)}\mathbf{m_3}
\end{equation*}
is in the direction of the principal axis (the viewing direction) of $P$ and is directed towards the front of the camera.
\end{ress}
The next two lemmas describe properties of metric reconstructions $P_{m2}^1, P_{m2}^2$ derived from Eq.~\eqref{eq:sols}
\begin{lemma}
\label{th:samet}
Let
\begin{equation*}
\begin{aligned}
P_{m2}^1&=\begin{bmatrix} K_2^1R^1 & \mathbf{a^1} \end{bmatrix}\\
P_{m2}^2&=\begin{bmatrix} K_2^2R^2 & \mathbf{a^2} \end{bmatrix}
\end{aligned}
\end{equation*}
be the projection matrices for camera $2$ derived from Eq.~\eqref{eq:sols}.

Then
\begin{equation*}
\mathbf{a^1=a^2 \triangleq a}
\end{equation*}
\end{lemma}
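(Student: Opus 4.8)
The plan is to track exactly where the two solutions of Eq.~\eqref{eq:sols} enter the construction of the metric camera $P_{m2}$ and to show that its last column, which is the vector $\mathbf{a}$, is insensitive to that choice. Recall from Eq.~\eqref{eq:PH} that each metric camera is obtained as $P_{m2}^j = P_{P2}H^j$, where $H^j$ is the upgrading homography of the form in Eq.~\eqref{eq:Hform} built from the $j$-th solution. The two solutions $\mathbf{x_o^1},\mathbf{x_o^2}$ share their first three coordinates (in particular the same $f_1^2=b_1$, hence the same $K_1$) and differ only in the plane-at-infinity coordinates, i.e. $(p_1,p_2,p_3)$ versus $(p_1',p_2',p_3')$. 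Thus $H^1$ and $H^2$ differ only through the vector $\mathbf{p}$ appearing in their bottom-left block $-\mathbf{p}^TK_1$.

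First I would read off the fourth column of $H$ from Eq.~\eqref{eq:Hform}: the top-right block is $\mathbf{0}$ and the bottom-right entry is fixed to $1$ (the scale $\sigma$ having been normalized), so this column equals $(0,0,0,1)^T$ for both solutions, independent of $\mathbf{p}$ and of $K_1$. Consequently the last column of $P_{m2}^j=P_{P2}H^j$ is $P_{P2}(0,0,0,1)^T$, i.e. precisely the fourth column of the canonical pair $P_{P2}=[\,{[\mathbf{a}]}_x F\mid\mathbf{a}\,]$, which is $\mathbf{a}$. Because this computation never touches the part of $H^j$ that distinguishes the two solutions, it returns the same vector in both cases, giving $\mathbf{a^1=a^2}\triangleq\mathbf{a}$. (The same conclusion holds verbatim if one upgrades instead via Eq.~\eqref{eq:Fpractical1}, whose fourth column is likewise $(0,0,0,1)^T$.)

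There is essentially no analytic obstacle here; the substance of the lemma is the structural observation that the entire dependence of $P_{m2}$ on the plane at infinity is confined to the left $3\times 3$ block, while the translation column is inherited directly from the canonical projective reconstruction. The only points I would spell out in a line each are that the two solutions agree on everything except $\mathbf{p}$, and that the homogeneous normalization fixing the bottom-right entry of $H$ to $1$ is applied identically to both, so that the shared last column carries the same scale rather than merely being parallel.
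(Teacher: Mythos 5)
Your proof is correct and takes essentially the same route as the paper, which likewise deduces the lemma directly from $P_{m2}^i = P_{P2}H^i$ (Eq.~\eqref{eq:PH}) and the form of the homography in Eq.~\eqref{eq:Hform}, whose last column $(0,0,0,1)^T$ does not depend on which solution of Eq.~\eqref{eq:sols} is substituted, so the last column of $P_{m2}^j$ is the column $\mathbf{a}$ of $P_{P2}$ in both cases. You have merely spelled out, with the useful observations that both solutions share $f_1^2=b_1$ and that the normalization of the bottom-right entry of $H$ to $1$ fixes the common scale, what the paper leaves as ``readily deduced.''
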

\begin{proof}
Considering:
\begin{enumerate}
\item The form of homography~\eqref{eq:Hform}
\item Eq.~\eqref{eq:PH}:$P_{m2}^i = P_{P2}H^i$ where $H^i$ denotes the homography obtained by substituting the i-th solution of Eq.~\eqref{eq:sols}
\end{enumerate}
the lemma is readily deduced
\end{proof}
\begin{lemma}
\label{th:differ1}
Let $P_{m2}^1,P_{m2}^2$ as in Lemma~\ref{th:samet}. We have:
\begin{equation}
\label{eq:differ1}
K_2^1R_2^1-K_2^2R_2^2=\mathbf{a}\mathbf{n^T}
\end{equation}
where $\mathbf{n}$ is an appropriate vector.
\end{lemma}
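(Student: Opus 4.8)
The plan is to compute both left $3\times 3$ blocks $K_2^iR_2^i$ explicitly from the relation $P_{m2}^i = P_{P2}H^i$ already used in Lemma~\ref{th:samet}, and then to observe that, upon subtraction, the only surviving term is automatically confined to the direction of $\mathbf{a}$. First I would substitute the canonical projective camera $P_{P2}=\begin{bmatrix}{[\mathbf{a}]}_x F & \mathbf{a}\end{bmatrix}$ together with the homography of Eq.~\eqref{eq:Hform}, noting that among its entries only the bottom-left row $-(\mathbf{p}^i)^TK_1$ carries the solution index $i$: indeed $f_1,f_2$ (hence $K_1$) are determined uniquely by Eqs.~\eqref{eq:rref1}--\eqref{eq:rref2}, and only the plane-at-infinity coordinates $\mathbf{p}$ inherit the two-way ambiguity of Eq.~\eqref{eq:sols}.

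Carrying out the block product gives
\begin{equation*}
P_{m2}^i=\begin{bmatrix}{[\mathbf{a}]}_x F & \mathbf{a}\end{bmatrix}\begin{bmatrix} K_1 & \mathbf{0}\\ -(\mathbf{p}^i)^TK_1 & 1\end{bmatrix}=\begin{bmatrix}{[\mathbf{a}]}_x F\,K_1-\mathbf{a}(\mathbf{p}^i)^TK_1 & \mathbf{a}\end{bmatrix},
\end{equation*}
so that $K_2^iR_2^i={[\mathbf{a}]}_x F\,K_1-\mathbf{a}(\mathbf{p}^i)^TK_1$ (the fourth column recovering $\mathbf{a}$, consistent with Lemma~\ref{th:samet}). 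The term ${[\mathbf{a}]}_x F\,K_1$ is common to both solutions and cancels on subtraction, leaving $K_2^1R_2^1-K_2^2R_2^2=\mathbf{a}\bigl[(\mathbf{p}^2-\mathbf{p}^1)^TK_1\bigr]$. Setting $\mathbf{n}\triangleq K_1(\mathbf{p}^2-\mathbf{p}^1)$, and recalling that $K_1$ is diagonal and therefore symmetric, yields exactly the claimed rank-one form $\mathbf{a}\mathbf{n}^T$ of Eq.~\eqref{eq:differ1}.

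I do not expect a genuine obstacle here: the statement is essentially a structural observation, namely that $H^1$ and $H^2$ differ in a single row and that this differing row is contracted against the fourth column $\mathbf{a}$ of $P_{P2}$ during the product, forcing every column of the difference to be a multiple of $\mathbf{a}$. The only point that demands care is bookkeeping, specifically confirming that both ${[\mathbf{a}]}_x F$ and $K_1$ are literally identical across the two reconstructions; this rests on the canonical $P_{P2}$ being fixed before the quadratic ambiguity in $p_2$ arises and on the uniqueness of the focal lengths. Once this is verified the factorisation of $\mathbf{n}$ is immediate and no further computation is required.
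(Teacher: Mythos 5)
Your proposal is correct and follows essentially the same route as the paper, whose proof consists precisely of the observation that the homographies $H^1,H^2$ of Eq.~\eqref{eq:Hform} differ only in the bottom row $\mathbf{v}^T=-\mathbf{p}^T K_1$ (the focal lengths, hence $K_1$ and the canonical $P_{P2}$, being common to both solutions), so the difference of the left $3\times 3$ blocks collapses to $\mathbf{a}(\mathbf{p}^2-\mathbf{p}^1)^TK_1=\mathbf{a}\mathbf{n}^T$. You merely carry out the block multiplication explicitly, which matches the expression $P_2=P_1-\mathbf{a}\mathbf{n}^T$ the paper itself uses later in the proof of Lemma~\ref{th:finaldetsign}.
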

\begin{proof}
As in  proof of Lemma~\ref{th:samet}, by observing that $H^i$ for different $i$ values differ only in $\mathbf{v} \triangleq \mathbf{-p^T}K$
\end{proof}
We note that Lemma~\ref{th:differ1} is a general result, independent of~\eqref{eq:sols}. However, we omit this proof now.
\begin{lemma}
\label{th:plusminusdet}
For the reconstructions $P_{m2}^1=\begin{bmatrix}P_1 & \mathbf{a}\end{bmatrix}$, $P_{m2}^2=\begin{bmatrix}P_2 & \mathbf{a}\end{bmatrix}$ we have
\begin{equation}
\label{eq:plusminusdet}
\det{P_{1}}=\pm \det{P_{2}}
\end{equation} 
\end{lemma}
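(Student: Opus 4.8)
The plan is to reduce the claim to the elementary observation that the left $3\times 3$ block of each metric camera is an intrinsic matrix times an orthogonal matrix. First I would record that, by the construction of the reconstruction through Eq.~\eqref{eq:omega}, each left block $P_j$ (the matrix written $K_2^j R_2^j$ in Lemma~\ref{th:samet}) satisfies $P_j P_j^T = \omega_2^* = K_2 K_2^T$, since projecting $Q_\infty^*$ through $P_{m2}^j = \begin{bmatrix} P_j & \mathbf{a} \end{bmatrix}$ collapses to $P_j P_j^T$ when $Q_\infty^*$ is in its canonical metric form. Crucially, the same $\omega_2^*$, hence the same $K_2 = \operatorname{diag}(f_2,f_2,1)$, is shared by both solutions, because the quadratic of Section~\ref{sec:RecSol} leaves $f_2$ unambiguous: Eq.~\eqref{eq:rref2} gives $f_2^2 = b_2$ for either root of the ambiguity in Eq.~\eqref{eq:sols}.

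From $P_j P_j^T = K_2 K_2^T$ I would conclude that $R_2^j \triangleq K_2^{-1} P_j$ is orthogonal, so $\det R_2^j = \pm 1$. Then $\det P_j = \det K_2 \cdot \det R_2^j = f_2^2 (\pm 1)$. Taking the two solutions together, $\det P_1 = f_2^2 \det R_2^1$ and $\det P_2 = f_2^2 \det R_2^2$ share the common positive factor $f_2^2$, so their ratio is $\det R_2^1 / \det R_2^2 \in \{+1,-1\}$, which is exactly the claimed $\det P_1 = \pm \det P_2$.

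I would also mention the alternative route suggested by the immediately preceding Lemma~\ref{th:differ1}, namely writing $P_2 = P_1 - \mathbf{a}\mathbf{n}^T$ and applying the matrix-determinant lemma to obtain $\det P_2 = \det P_1 (1 - \mathbf{n}^T P_1^{-1}\mathbf{a})$; using Result~\ref{th:KRC} to substitute $P_1^{-1}\mathbf{a} = -\mathbf{C}^1$ turns the scalar into $1 + \mathbf{n}^T \mathbf{C}^1$. This is appealing but it forces one to evaluate $\mathbf{n}^T \mathbf{C}^1$ and show it equals $0$ or $-2$, which demands extra structural information about $\mathbf{n}$; the orthogonality argument avoids this entirely and is therefore the one I would carry out.

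The hard part will be conceptual rather than computational: one must be sure that (i) the two left blocks really are built from the identical $K_2$, which rests on the uniqueness of $f_2$, and (ii) the orthogonal factor is genuinely allowed determinant $-1$. The sign $-1$ is precisely the reflection that produces the mirror configuration of Theorem~\ref{th:mirror}, so the $\pm$ is real and should not be tightened to $+$. Once the shared $K_2$ and the orthogonality of $K_2^{-1}P_j$ are in hand, the determinant identity follows immediately.
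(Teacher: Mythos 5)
Your proposal is correct and takes essentially the same route as the paper: both arguments hinge on the two solutions sharing the same DIAC, $P_1P_1^T=\omega_2^*=P_2P_2^T$, from which the paper concludes $\left(\det P_1\right)^2=\left(\det P_2\right)^2$ in one line via multiplicativity of the determinant. Your intermediate factorization $\omega_2^*=K_2K_2^T$ with $K_2^{-1}P_j$ orthogonal (justified, as you note, by the uniqueness of $f_2^2=b_2$ across both solutions of Eq.~\eqref{eq:sols}) is a harmless elaboration of the same idea that additionally yields the paper's Lemma~\ref{th:eqKmatrices} as a by-product, and your reason for avoiding the matrix-determinant-lemma route matches the paper's structure, which defers the evaluation of $\mathbf{n^TC_1}$ to Lemmas~\ref{th:B3} and~\ref{th:B4}.
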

\begin{proof}
We formed $P_{m2}^1, P_{m2}^2$ from solutions of Eq.~\eqref{eq:omInter}, so the projection matrices have the same $\omega^{*}$. Using now Eq.~\eqref{eq:DACdef} we have:

\begin{align*}
\omega_{1}^{*}&=P_{m2}^1Q^{*}_{\infty}{P_{m2}^{1}}^T\\
&=P_1P_1^T\\
&=\omega_{2}^{*}\\ 
&=P_2P_2^T
\end{align*}
Using the following known relations
\begin{itemize}
\item $\det (A\cdot B)=\det A\cdot \det B$
\item $\det A=\det A^T$
\end{itemize}
we have
\begin{align*}
\det{P_1P_1^T}&=\det{P_2P_2^T}\Rightarrow \\
\det{P_1}^2&=\det{P_2}^2\Rightarrow\\
\det{P_1}&=\pm \det{P_2}
\end{align*}
\end{proof}
From this last proof, the next Lemma becomes apparent
\begin{lemma}
\label{th:eqKmatrices}
Concerning the reconstructions $P_{m2}^1,P_{m2}^2$ of Lemma~\ref{th:plusminusdet}, we have
\begin{equation*}
K_{m2}^1=K_{m2}^2
\end{equation*}
\end{lemma}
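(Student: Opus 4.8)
The plan is to leverage the single relation already extracted in the proof of Lemma~\ref{th:plusminusdet}, namely that the left $3\times 3$ blocks $P_1, P_2$ of the two reconstructions satisfy $P_1P_1^T = P_2P_2^T = \omega^*$, the common dual image of the absolute conic. First I would write each left block in its calibrated factorization $P_i = K_{m2}^i R^i$ with $R^i$ a rotation, exactly as in Lemma~\ref{th:samet}. Substituting and using orthogonality $R^i(R^i)^T = I$ collapses the products, so that
\begin{equation*}
\omega^* = K_{m2}^1 (K_{m2}^1)^T = K_{m2}^2 (K_{m2}^2)^T ,
\end{equation*}
and all dependence on the (generally different) rotations disappears. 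The claim thus reduces to the uniqueness of the calibration matrix given $\omega^*$.

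Next I would exploit the restricted form $K_{m2}^i = \operatorname{diag}(f_2^i, f_2^i, 1)$ dictated by the assumptions of Section~\ref{sec:RecForm}. For this form the product $K_{m2}^i (K_{m2}^i)^T = \operatorname{diag}((f_2^i)^2, (f_2^i)^2, 1)$ is itself diagonal, so equating the $(1,1)$ entries of the common $\omega^*$ gives $(f_2^1)^2 = (f_2^2)^2$; since focal lengths are positive this forces $f_2^1 = f_2^2$ and hence $K_{m2}^1 = K_{m2}^2$. Equivalently, one may invoke uniqueness of the Cholesky factorization of the symmetric positive-definite matrix $\omega^*$ among upper-triangular factors with positive diagonal, of which every admissible $K$ is an instance.

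I do not expect a genuine obstacle here: the entire content is the cancellation of the rotation factors through orthogonality, together with the observation that $\omega^*$ pins down $K$. The only point requiring a line of care is the uniqueness of the factorization — it suffices to note that $\omega^*$ is positive definite because each $f_2^i \neq 0$, and that we admit only the diagonal form of $K$ above, which rules out any competing factorization and makes the conclusion "apparent" exactly as the text anticipates.
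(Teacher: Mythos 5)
Your proof is correct and takes essentially the same route as the paper's: the paper's one-line proof likewise deduces the claim from the equality $\omega_1^*=\omega_2^*$ obtained in Lemma~\ref{th:plusminusdet} together with the diagonality of the internal calibration matrices. Your version merely makes explicit the details the paper leaves implicit, namely the cancellation $R^i(R^i)^T=I$ and the positivity of the focal lengths (or, equivalently, uniqueness of the Cholesky-type factorization of $\omega^*$).
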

\begin{proof}
From the equality of $\omega_1^*$,$\omega_2^*$ and the diagonallity of the internal calibration matrices $K_{m2}^1,K_{m2}^2$ we prove the Lemma.
\end{proof}
We next refine Lemma~\ref{th:plusminusdet}, to lift the sign ambiguity in Eq.~\eqref{eq:plusminusdet}.
\begin{lemma}
\label{th:B1}
For the reconstructions $P_{m2}^1,P_{m2}^2$ we have
\begin{equation}
\label{eq:B1}
\mathbf{C_{m2}^1=C_{m2}^2\triangleq C} \iff \mathbf{n^TC}=0
\end{equation}
\end{lemma}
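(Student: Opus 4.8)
The plan is to prove both implications directly from two facts already in hand: Result~\ref{th:Cnull} (equivalently Result~\ref{th:KRC}), which identifies the center of projection as the right null-vector of the camera matrix, and Lemma~\ref{th:differ1}, which records the difference of the leftmost blocks. Writing the leftmost $3\times 3$ blocks as $K_2^1R_2^1$ and $K_2^2R_2^2$, the defining property of the centers is
\begin{equation*}
K_2^1R_2^1\,\mathbf{C_{m2}^1}=-\mathbf{a},\qquad K_2^2R_2^2\,\mathbf{C_{m2}^2}=-\mathbf{a},
\end{equation*}
obtained by expanding $P_{m2}^i\begin{pmatrix}\mathbf{C_{m2}^i}\\ 1\end{pmatrix}=\mathbf{0}$ and using $\mathbf{a^1=a^2=a}$ from Lemma~\ref{th:samet}. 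From Lemma~\ref{th:differ1} we also have $K_2^1R_2^1-K_2^2R_2^2=\mathbf{a}\mathbf{n^T}$. Since each leftmost block is the product of an invertible calibration matrix with a rotation, both blocks are invertible, a fact I will use in the converse direction, and the baseline $\mathbf{a}$ is nonzero for a genuine two-view pair, which I will use in the forward direction.

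For the forward implication I would assume $\mathbf{C_{m2}^1=C_{m2}^2=C}$, subtract the two center equations, and substitute Lemma~\ref{th:differ1}:
\begin{equation*}
(K_2^1R_2^1-K_2^2R_2^2)\mathbf{C}=\mathbf{0}\ \Longrightarrow\ \mathbf{a}\,(\mathbf{n^TC})=\mathbf{0}.
\end{equation*}
Because $\mathbf{a}\neq\mathbf{0}$, the scalar $\mathbf{n^TC}$ must vanish, giving $\mathbf{n^TC}=0$.

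For the converse I would read the common value $\mathbf{C}$ as $\mathbf{C_{m2}^1}$ and assume $\mathbf{n^T}\mathbf{C_{m2}^1}=0$. Applying the second block to $\mathbf{C_{m2}^1}$ and invoking Lemma~\ref{th:differ1},
\begin{equation*}
K_2^2R_2^2\,\mathbf{C_{m2}^1}=(K_2^1R_2^1-\mathbf{a}\mathbf{n^T})\mathbf{C_{m2}^1}=-\mathbf{a}-\mathbf{a}\,(\mathbf{n^T}\mathbf{C_{m2}^1})=-\mathbf{a},
\end{equation*}
so both $\mathbf{C_{m2}^1}$ and $\mathbf{C_{m2}^2}$ solve $K_2^2R_2^2\,\mathbf{x}=-\mathbf{a}$. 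Invertibility of $K_2^2R_2^2$ then forces $\mathbf{C_{m2}^1}=\mathbf{C_{m2}^2}$.

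The computations are elementary; the only points that demand care are the two nondegeneracy facts, namely that $\mathbf{a}\neq\mathbf{0}$ (needed to cancel it in the forward direction) and that the leftmost blocks are invertible (needed to conclude in the converse), together with the mild notational matter that the common center $\mathbf{C}$ in the statement is pinned down as $\mathbf{C_{m2}^1}$ when running the converse. I expect no genuine obstacle beyond flagging these assumptions explicitly.
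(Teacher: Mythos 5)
Your proof is correct and takes essentially the same route as the paper's: both characterize the centers via the null-vector condition $\mathbf{a}=-K_2R_2^1\mathbf{C}$, substitute the rank-one difference from Lemma~\ref{th:differ1}, and reduce the question to $\mathbf{a}\,(\mathbf{n^TC})=\mathbf{0}$. Your version merely makes explicit two nondegeneracy facts ($\mathbf{a}\neq\mathbf{0}$ and invertibility of the leftmost $3\times 3$ blocks) that the paper's chain of equivalences uses implicitly, which is a presentational improvement rather than a different argument.
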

\begin{proof}
\begin{align}
&P_{m2}^1\begin{pmatrix} \mathbf{C} \\ 1 \end{pmatrix}=\mathbf{0} \notag \\
&\iff \mathbf{a}=-K_2R_2^1\mathbf{C} \label{eq:B1i}
\end{align}
Similarly for $P_{m2}^2$ 
\begin{align}
&P_{m2}^2\begin{pmatrix} \mathbf{C} \\ 1 \end{pmatrix}=\mathbf{0} \notag \\
&\iff K_2R_2^1\mathbf{C}+\mathbf{an^TC+a}=\mathbf{0}\notag \\
  &\iff K_2R_2^1\mathbf{C}+\mathbf{an^TC}-K_2R_2^1\mathbf{C}=\mathbf{0},\text{holds from Eq.~\eqref{eq:B1i}} \notag \\
&\iff \mathbf{an^TC}=\mathbf{0}\notag \\
&\iff \begin{pmatrix} \alpha _1 \mathbf{n^T} \\ \alpha _2 \mathbf{n^T} \\ \alpha _3 \mathbf{n^T} \end{pmatrix} \mathbf{C}=\mathbf{0} \notag \\
  &\iff \mathbf{n^TC}=0 
\end{align}
\end{proof}
\begin{lemma}
\label{th:B2}
For the reconstructions $P_{m2}^1,P_{m2}^2$ we have
\begin{equation}
\label{eq:B2}
\mathbf{C_{m1}^1=-C_{m2}^2\triangleq C} \iff \mathbf{n^TC}=-2
\end{equation}
\end{lemma}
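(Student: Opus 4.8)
The plan is to mirror the argument of Lemma~\ref{th:B1}, replacing the common-center hypothesis by the antipodal one. As in that proof, the starting data are the three structural facts already established for the pair $P_{m2}^1,P_{m2}^2$: they share the translation column $\mathbf{a}$ (Lemma~\ref{th:samet}), they share the calibration $K_2 \triangleq K_{m2}^1 = K_{m2}^2$ (Lemma~\ref{th:eqKmatrices}), and their left blocks differ by a rank-one term, $K_2R_2^1 - K_2R_2^2 = \mathbf{a}\mathbf{n^T}$ (Lemma~\ref{th:differ1}). The center of each camera is characterised by Result~\ref{th:Cnull} as the right null-vector of the full $3\times4$ matrix, which is the only tool beyond these relations that the computation requires.

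First I would fix $\mathbf{C}\triangleq\mathbf{C_{m2}^1}$ and record, exactly as in Eq.~\eqref{eq:B1i}, that the null-vector condition $P_{m2}^1(\mathbf{C}^T,1)^T=\mathbf{0}$ is equivalent to $\mathbf{a}=-K_2R_2^1\mathbf{C}$. The target claim $\mathbf{C_{m2}^2}=-\mathbf{C}$ then amounts to requiring that $(-\mathbf{C}^T,1)^T$ be the right null-vector of $P_{m2}^2$, i.e. that $-K_2R_2^2\mathbf{C}+\mathbf{a}=\mathbf{0}$. I would next substitute the difference relation of Lemma~\ref{th:differ1} in the form $K_2R_2^2 = K_2R_2^1 - \mathbf{a}\mathbf{n^T}$ and eliminate $K_2R_2^1\mathbf{C}$ using the expression for $\mathbf{a}$ from the previous step. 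Because $\mathbf{a}\mathbf{n^T}\mathbf{C}=\mathbf{a}(\mathbf{n^TC})$ is a scalar multiple of $\mathbf{a}$, the whole condition collapses to $\mathbf{a}\,(2+\mathbf{n^TC})=\mathbf{0}$, where the constant $2$ arises from two surviving copies of $\mathbf{a}$, in contrast to the cancellation that produced $0$ in Lemma~\ref{th:B1}. Since $\mathbf{a}\neq\mathbf{0}$ — it is the epipole of the canonical pair — the scalar factor must vanish, yielding $\mathbf{n^TC}=-2$. Every manipulation is a logical equivalence, so the converse direction comes for free and no separate argument is needed.

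The main obstacle is purely bookkeeping the signs so that the two copies of $\mathbf{a}$ reinforce rather than cancel: the antipodal substitution $\mathbf{C}\mapsto-\mathbf{C}$ flips the sign of the $K_2R_2^2\mathbf{C}$ term, and it is precisely this flip, together with the sign carried by Lemma~\ref{th:differ1}, that converts the $0$ of Lemma~\ref{th:B1} into $-2$ here. I would also keep in mind that $\mathbf{n}$ is defined only through Lemma~\ref{th:differ1}, so the clean constant $-2$ is tied to the particular normalisation of $\mathbf{n}$ inherited from that lemma; any rescaling of $\mathbf{n}$ would rescale the threshold correspondingly. Combined with Lemma~\ref{th:B1}, this lemma pins down the sign ambiguity left open in Lemma~\ref{th:plusminusdet} and supplies the algebraic content behind the mirror relation asserted in Theorem~\ref{th:mirror}.
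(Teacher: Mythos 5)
Your proposal reproduces the paper's proof essentially verbatim: the same null-vector characterisation of the centers, the same substitution of $K_2R_2^2=K_2R_2^1-\mathbf{a}\mathbf{n^T}$ from Lemma~\ref{th:differ1} together with $\mathbf{a}=-K_2R_2^1\mathbf{C}$, collapsing the condition to $\mathbf{a}\left(2+\mathbf{n^TC}\right)=\mathbf{0}$ and dividing by $\mathbf{a}\neq\mathbf{0}$. Since both arguments proceed entirely by equivalences, the biconditional is obtained identically, and your remarks on the sign bookkeeping and the normalisation of $\mathbf{n}$ are consistent with the paper's usage.
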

\begin{proof}
As in the proof of Lemma~\ref{th:B1}
\begin{align}
&P_{m2}^1\begin{pmatrix} \mathbf{C} \\ 1 \end{pmatrix}=\mathbf{0} \notag \\
&\iff \mathbf{a}=-K_2R_2^1\mathbf{C} \label{eq:B2i}
\end{align}
Similarly, from matrix $P_{m2}^2$ we have
\begin{align}
&P_{m2}^2\begin{pmatrix} \mathbf{-C} \\ 1 \end{pmatrix}=\mathbf{0} \notag \\
&\iff -K_2R_2^1\mathbf{C} \mathbf{+an^TC+a}=\mathbf{0}\notag \\
&\iff \mathbf{a} \mathbf{+an^TC +a}=\mathbf{0},\text{from~\eqref{eq:B2i}} \notag \\
&\iff \mathbf{a}(2+\mathbf{n^TC})=\mathbf{0}\notag \\
&\iff \mathbf{n^TC}=-2,\text{provided $\mathbf{a\neq 0}$} 
\end{align}
\end{proof}
We complement each of Lemmas~\ref{th:B1},\ref{th:B2}, with Lemma~\ref{th:B3} and~\ref{th:B4} respectively. To prove the last two Lemmas, we use Eq.~\eqref{eq:rank1identity}
\begin{ress}
\label{th:rank1identity}
For each square, invertible matrix $X$, column-vector $\mathbf{c}$ and row-vector $\mathbf{r}$ we have
\begin{equation}
\label{eq:rank1identity}
\det{(X+\mathbf{cr})}=\det{X}\cdot \det{(1+\mathbf{r}X^{-1}\mathbf{c})}
\end{equation}
\end{ress}
\begin{lemma}
\label{th:B3}
For the reconstructions $P_{m2}^1,P_{m2}^2$ we have
\begin{equation}
\begin{aligned}
\label{eq:B3}
\det{P_1}&=\det{P_2} \iff \\
\mathbf{n^TC_1}&=0
\end{aligned}
\end{equation}
\end{lemma}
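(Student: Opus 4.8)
The plan is to recognise $P_2$ as a rank-one perturbation of $P_1$ and then apply the matrix determinant lemma of Result~\ref{th:rank1identity}. First I would extract the needed algebraic relation from Lemma~\ref{th:differ1}. In the notation of Lemma~\ref{th:plusminusdet} we have $P_1 = K_2^1 R_2^1$ and $P_2 = K_2^2 R_2^2$, so Eq.~\eqref{eq:differ1} reads $P_1 - P_2 = \mathbf{a}\mathbf{n}^T$, i.e.
\[
P_2 = P_1 - \mathbf{a}\mathbf{n}^T ,
\]
which exhibits $P_2$ as $P_1$ plus the rank-one matrix $-\mathbf{a}\mathbf{n}^T$.

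Next I would express the center of projection $\mathbf{C}_1 \triangleq \mathbf{C}_{m2}^1$ through $P_1$ and $\mathbf{a}$. Applying Result~\ref{th:KRC} to $P_{m2}^1 = \begin{bmatrix}P_1 & \mathbf{a}\end{bmatrix}$ gives $\mathbf{a} = -P_1 \mathbf{C}_1$, which is exactly Eq.~\eqref{eq:B1i}. Since $P_1 = K_2^1 R_2^1$ is a product of an invertible calibration matrix and a rotation, it is invertible, and therefore
\[
P_1^{-1}\mathbf{a} = -\mathbf{C}_1 .
\]

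Then I would invoke Result~\ref{th:rank1identity} with $X = P_1$, column vector $\mathbf{c} = \mathbf{a}$ and row vector $\mathbf{r} = -\mathbf{n}^T$. This yields
\[
\det P_2 = \det\!\left(P_1 - \mathbf{a}\mathbf{n}^T\right) = \det P_1 \,\bigl(1 - \mathbf{n}^T P_1^{-1}\mathbf{a}\bigr) = \det P_1 \,\bigl(1 + \mathbf{n}^T \mathbf{C}_1\bigr),
\]
where the last equality substitutes $P_1^{-1}\mathbf{a} = -\mathbf{C}_1$. Because $\det P_1 \neq 0$, dividing through shows that $\det P_1 = \det P_2$ holds if and only if $1 + \mathbf{n}^T \mathbf{C}_1 = 1$, that is, $\mathbf{n}^T \mathbf{C}_1 = 0$, which is the asserted equivalence.

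The computation is short, so the only delicate points are the hypotheses of the matrix determinant lemma and the sign bookkeeping. Invertibility of $P_1$, needed both to apply Result~\ref{th:rank1identity} and to cancel $\det P_1$, is guaranteed by its $KR$ structure. The one place where care is required is the composition of signs: the minus sign in $\mathbf{a} = -P_1\mathbf{C}_1$ combines with the minus sign produced by $\mathbf{r} = -\mathbf{n}^T$ so that the two cancel, turning the scalar factor into $1 + \mathbf{n}^T\mathbf{C}_1$ rather than $1 - \mathbf{n}^T\mathbf{C}_1$; this is precisely what makes the criterion read $\mathbf{n}^T\mathbf{C}_1 = 0$ and what later distinguishes Lemma~\ref{th:B3} from its companion Lemma~\ref{th:B4}.
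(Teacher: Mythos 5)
Your proof is correct and follows essentially the same route as the paper: both exhibit $P_2 = P_1 - \mathbf{a}\mathbf{n}^T$ via Lemma~\ref{th:differ1}, apply the determinant identity of Result~\ref{th:rank1identity} with $X = P_1$, and substitute $\mathbf{a} = -P_1\mathbf{C_1}$ (Eq.~\eqref{eq:B1i}) so that equality of determinants reduces to $\mathbf{n^TC_1}=0$. The only cosmetic difference is that you use $P_1^{-1}\mathbf{a} = -\mathbf{C_1}$ directly, whereas the paper expands $P_1^{-1}={R_2^1}^{T}K_2^{-1}$ and then invokes $RR^T=I$; the computations are identical.
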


\begin{proof}
We use Result~\ref{th:rank1identity}, for which we note:
\begin{enumerate}
\item  $P_1$ is a full-rank matrix ( $\rank 3$) for every projection matrix. The exception, referred to in the literature as "camera at infinity", is out of our scope. Remember we are handling a metric reconstruction.
\item  $P_2$ can be expressed in terms of $P_1$, $\mathbf{n,a}$, thus permitting the application of Eq.~\eqref{eq:rank1identity} to determine $\det{P_2}$.
\end{enumerate}
Now applying the previous points, we have
\begin{align}
&\det{P_2}=\det{P_1}\notag \\
&\iff 1-\mathbf{n^T}{R_2^1}^TK_2^{-1}\mathbf{a}=1 \notag \\
&\iff \mathbf{n^T}{R_2^1}^TK_2^{-1}\mathbf{a}=0 \notag \\
&\iff -\mathbf{n^T}{R_2^1}^TK_2^{-1}K_2R_2^1\mathbf{C_1}=0\, \notag \\
&\text{, from~\eqref{eq:B1i}: $\mathbf{a}=-K_2R_2^1\mathbf{C_1}$} \notag \\
&\iff \mathbf{n^TC_1}=0\,\notag \\
&\text{, as $RR^T=I$ for rotation matrices $R$}  \notag
\end{align}
\end{proof}

\begin{lemma}
\label{th:B4}
For the reconstructions $P_{m2}^1,P_{m2}^2$ we have
\begin{equation}
\begin{aligned}
\det{P_1}&=-\det{P_2} \iff \\
\mathbf{n^TC_1}&=-2
\end{aligned}
\end{equation}
\end{lemma}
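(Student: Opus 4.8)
The plan is to mirror the proof of Lemma~\ref{th:B3} almost verbatim, tracking the single sign change that separates the two cases. The key tool is Result~\ref{th:rank1identity}, which evaluates $\det P_2$ as a rank-one update of $P_1$; everything else is bookkeeping.

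First I would recall from Lemma~\ref{th:eqKmatrices} that the two reconstructions share a common internal matrix $K_2$, so that $P_1 = K_2 R_2^1$ and $P_2 = K_2 R_2^2$, and from Lemma~\ref{th:differ1}, Eq.~\eqref{eq:differ1}, that $P_2 = P_1 - \mathbf{a}\mathbf{n^T}$, exhibiting $P_2$ as a rank-one perturbation of $P_1$. Since $P_1$ is invertible for a genuine metric reconstruction (the degenerate "camera at infinity" being excluded, exactly as in Lemma~\ref{th:B3}), I apply Eq.~\eqref{eq:rank1identity} with $X=P_1$, column vector $\mathbf{a}$ and row vector $-\mathbf{n^T}$, and use $P_1^{-1}={R_2^1}^T K_2^{-1}$ (valid because $R_2^1$ is a rotation), to obtain
\begin{equation*}
\det P_2 = \det P_1 \cdot \left(1 - \mathbf{n^T} P_1^{-1}\mathbf{a}\right)
= \det P_1 \cdot \left(1 - \mathbf{n^T}{R_2^1}^T K_2^{-1}\mathbf{a}\right).
\end{equation*}

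Next I would impose the hypothesis $\det P_1 = -\det P_2$. Dividing through by the nonzero scalar $\det P_1$ gives $1 - \mathbf{n^T}{R_2^1}^T K_2^{-1}\mathbf{a} = -1$, i.e.\ $\mathbf{n^T}{R_2^1}^T K_2^{-1}\mathbf{a} = 2$. I then substitute $\mathbf{a} = -K_2 R_2^1 \mathbf{C_1}$ from Eq.~\eqref{eq:B1i}; the factors ${R_2^1}^T K_2^{-1}K_2 R_2^1 = I$ collapse, reducing the left-hand side to $-\mathbf{n^T}\mathbf{C_1}$, so the condition becomes $\mathbf{n^T}\mathbf{C_1} = -2$. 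Because every step is a genuine equivalence, this delivers the stated "iff" and completes the argument. This is precisely the computation of Lemma~\ref{th:B3} with $+1$ replaced by $-1$ on the right of the determinant equation, and hence $0$ replaced by $-2$ in the conclusion.

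The argument presents no real obstacle once Result~\ref{th:rank1identity} is in hand; the only point deserving care is the invertibility of $P_1$, which is needed both to apply the rank-one determinant identity and to cancel $\det P_1$ when passing to the equivalence. This is exactly the non-degeneracy hypothesis already flagged in the proof of Lemma~\ref{th:B3}, and it holds for any metric reconstruction with a finite camera.
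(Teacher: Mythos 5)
Your proposal is correct and takes essentially the same route as the paper: the paper's proof of Lemma~\ref{th:B4} is exactly the chain of equivalences from Lemma~\ref{th:B3} with $1$ replaced by $-1$, resting on the rank-one update $P_2=P_1-\mathbf{a}\mathbf{n^T}$ from Lemma~\ref{th:differ1}, the determinant identity of Result~\ref{th:rank1identity}, and the substitution $\mathbf{a}=-K_2R_2^1\mathbf{C_1}$ from Eq.~\eqref{eq:B1i}, just as you argue. Your explicit attention to the invertibility of $P_1$ coincides with the non-degeneracy point (excluding the camera at infinity) already flagged in the paper's proof of Lemma~\ref{th:B3}.
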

\begin{proof}
As in the proof of Lemma~\ref{th:B3},we have:
\begin{align}
&\det{P_2}=-\det{P_1}\notag \\
&\iff 1-\mathbf{n^T}{R_2^1}^TK_2^{-1}\mathbf{a}=-1 \notag \\
&\iff \mathbf{n^T}{R_2^1}^TK_2^{-1}\mathbf{a}=2 \notag \\
&\iff -\mathbf{n^T}{R_2^1}^TK_2^{-1}K_2R_2^1\mathbf{C_1}=2\,\notag \\
 &\text{, from Eq.~\eqref{eq:B1i}: $\mathbf{a}=-K_2R_2^1\mathbf{C_1}$} \notag \\
&\iff \mathbf{n^TC_1}=-2\,\notag \\
 &\text{,  as $RR^T=I$ for rotation matrices $R$} \notag
\end{align}
\end{proof}
Now, we show that the case of same-sign determinants ($\det P^1 = \det P^2$) produces a contradiction, and is so rejected. Regarding the notation in the following, we clarify that:
\begin{enumerate}
\item The projective reconstruction $P_{P2}$ is in the canonical representation form 
\begin{equation} \label{eq:caconicalCameraPair} \begin{bmatrix} {[ \mathbf{a}]}_x F &\mathbf{a}\end{bmatrix}  \end{equation}
with $ F^T\mathbf{a}=\mathbf{0} $
\item ${[\mathbf{a}]}_x$ denotes the anti-symmetric matrix defined to compute outer product with vector $\mathbf{a}$
\begin{equation*} {[ \mathbf{a}]}_x \mathbf{v} = \mathbf{a} \times \mathbf{v}  \end{equation*}
\item $\mathbf{e}$ denotes the right null vector of $F$, \begin{equation}\label{eq:nulle}F\mathbf{e}=\mathbf{0}\end{equation}
\end{enumerate}
\begin{lemma}
\label{th:skewAF}
Let
\begin{equation*}P_{P2}=\begin{bmatrix}A&\mathbf{a}\end{bmatrix}= \begin{bmatrix} {[ \mathbf{a}]}_x F &\mathbf{a}\end{bmatrix} \end{equation*}
denote the Projection matrix for camera $2$ in the projective reconstruction and $\mathbf{p}, \mathbf{p'}$ the solutions for $\mathbf{\pi_{\infty}}$ acquired from Eq.~\eqref{eq:sols}
\begin{equation*}
\begin{aligned}
\mathbf{p^T}=\begin{pmatrix} p_1 & p_2 & p_3 \end{pmatrix} \\
\mathbf{{p'}^T}=\begin{pmatrix} p_1' & p_2' & p_3' \end{pmatrix}
\end{aligned}
\end{equation*}
Then
\begin{equation}
\label{eq:skewAF}
\mathbf{p-p'}=\psi \mathbf{e_f}
\end{equation}
where
\begin{equation*}
\begin{aligned}
\mathbf{e_f}&=\begin{pmatrix} \sfrac{e_1}{f_1^2} \\ \sfrac{e_2}{f_1^2} \\ e_3  \end{pmatrix}
\end{aligned}
\end{equation*}
\end{lemma}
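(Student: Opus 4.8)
The plan is to work directly with the two metric camera matrices produced by the two values of $\mathbf{\pi_\infty}$, rather than tracing the Gaussian elimination. Substituting each solution into the homography of Eq.~\eqref{eq:Hform} and using $P_{m2}=P_{P2}H$ together with Eq.~\eqref{eq:omInter}, the left $3\times3$ block of $P_{m2}$ is $M(\mathbf{p})\triangleq(A-\mathbf{a}\mathbf{p}^T)K_1$ and the associated DIAC is $\omega_2^*=M(\mathbf{p})M(\mathbf{p})^T$ (the last column of $P_{P2}H$ is $\mathbf{a}$, which $Q_\infty^*$ annihilates). Both $\mathbf{p}$ and $\mathbf{p'}$ from Eq.~\eqref{eq:sols} satisfy the five retained equations, so each $\omega_2^*$ has vanishing off-diagonal entries and equal $(1,1),(2,2)$ entries, both equal to the uniquely determined $f_2^2=b_2$; only the $(3,3)$ entry is a priori free. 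Writing $\mathbf{q}\triangleq\mathbf{p}-\mathbf{p'}$, this means $M(\mathbf{p})M(\mathbf{p})^T-M(\mathbf{p'})M(\mathbf{p'})^T=\mathrm{diag}(0,0,\Delta\tau)$ for some scalar $\Delta\tau$.

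The key step is to exploit that $M(\mathbf{p})-M(\mathbf{p'})=-\mathbf{a}\mathbf{q}^TK_1$ is rank one. Expanding the Gram-matrix difference gives $-\mathbf{w}\mathbf{a}^T-\mathbf{a}\mathbf{w}^T+s\,\mathbf{a}\mathbf{a}^T=\mathrm{diag}(0,0,\Delta\tau)$, where $\mathbf{w}\triangleq M(\mathbf{p'})K_1\mathbf{q}$ and $s\triangleq\mathbf{q}^TK_1^2\mathbf{q}$. Reading off the top-left $2\times2$ block (where the right-hand side is zero) forces $w_1=\tfrac{s}{2}a_1$ and $w_2=\tfrac{s}{2}a_2$; the $(1,3)$ and $(2,3)$ entries then force $w_3=\tfrac{s}{2}a_3$, so that $\mathbf{w}=\tfrac{s}{2}\mathbf{a}$. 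Substituting back into the $(3,3)$ entry yields $\Delta\tau=0$ for free, so the discarded equation is in fact automatically satisfied. Here I would note the generic assumption $a_1,a_2\neq0$ (the epipole in image~2 is in general position) and treat degenerate epipoles separately.

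It remains to push $\mathbf{w}\parallel\mathbf{a}$ through the canonical form $A=[\mathbf{a}]_xF$ of Eq.~\eqref{eq:caconicalCameraPair}. From $\mathbf{w}=M(\mathbf{p'})K_1\mathbf{q}=(A-\mathbf{a}\mathbf{p'}^T)K_1^2\mathbf{q}$ being proportional to $\mathbf{a}$, we get $A\,(K_1^2\mathbf{q})\parallel\mathbf{a}$. But $A(K_1^2\mathbf{q})=[\mathbf{a}]_x\big(FK_1^2\mathbf{q}\big)$ is orthogonal to $\mathbf{a}$, so it must vanish; hence $FK_1^2\mathbf{q}$ lies in $\mathrm{span}(\mathbf{a})$. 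Since $F^T\mathbf{a}=\mathbf{0}$ makes the column space of $F$ orthogonal to $\mathbf{a}$, this forces $FK_1^2\mathbf{q}=\mathbf{0}$, i.e. $K_1^2\mathbf{q}\in\ker F=\mathrm{span}(\mathbf{e})$ by Eq.~\eqref{eq:nulle}. Therefore $\mathbf{q}=\psi K_1^{-2}\mathbf{e}$, and since $K_1^{-2}=\mathrm{diag}(f_1^{-2},f_1^{-2},1)$ this is exactly $\mathbf{p}-\mathbf{p'}=\psi\mathbf{e_f}$.

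The main obstacle I anticipate is the freedom in the $(3,3)$ entry coming from the deliberately discarded equation: a priori the two reconstructions need not agree there, so one cannot simply equate the two Gram matrices. The resolution is the linear-algebra observation that the rank-one structure of the difference, combined with the four constraints that do hold, simultaneously pins $\mathbf{w}$ to the direction of $\mathbf{a}$ and forces $\Delta\tau=0$; everything else is bookkeeping. As a cross-check one can instead read off the one-dimensional null space of $A_5$ from Eq.~\eqref{eq:rref}, namely $\mathbf{k}=(0,0,0,-c,-d,1)^T$, which gives $\mathbf{p}-\mathbf{p'}\propto(-d,1,-cf_1^2)^T$; the argument above is then equivalent to showing this direction equals that of $\mathbf{e_f}$, i.e. $\mathbf{e}\propto(-d,1,-c)^T$.
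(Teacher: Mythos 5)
Your proof is correct, and its skeleton is the same as the paper's: both compare the DIACs generated by $\mathbf{p}$ and $\mathbf{p'}$, exploit the rank-one difference of the corresponding camera blocks, and finish by intersecting null spaces of the canonical form $A={[\mathbf{a}]}_xF$ (using $F^T\mathbf{a}=\mathbf{0}$, $F\mathbf{e}=\mathbf{0}$ and $\rank F=2$) to force $K_1^2(\mathbf{p}-\mathbf{p'})\in\mathrm{span}(\mathbf{e})$. Where you genuinely depart from --- and in fact improve on --- the paper's argument is the treatment of the discarded $\omega_2^*(3,3)$ equation. The paper's proof opens by asserting $\omega_1^*=\omega_2^*$ outright and then uses Eq.~\eqref{eq:rref3} to cancel the quadratic terms; but equality of the $(3,3)$ entries does not follow a priori from the five retained equations, so the paper's conclusion $Q+Q^T=0$ is, strictly speaking, only justified in the upper-left $2\times2$ block and the $(1,3),(2,3)$ entries. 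Your version allows a residual $\mathrm{diag}(0,0,\Delta\tau)$ on the right-hand side, pins $\mathbf{w}=\tfrac{s}{2}\mathbf{a}$ using only the entries that are actually constrained, and recovers $\Delta\tau=0$ as a byproduct --- exactly the repair the paper's proof needs. You also avoid invoking the shared value $b_3$ (the paper's Eq.~\eqref{eq:ToEliminate1}), since your exact rank-one expansion around $M(\mathbf{p'})$ absorbs the quadratic term into $s\,\mathbf{a}\mathbf{a}^T$; this makes your argument marginally more economical in its hypotheses. Both arguments share a genericity caveat, which you flag and the paper leaves implicit: the paper's extraction of $\mathbf{A^i}^T\Delta_f=0$ from the vanishing diagonal of $Q$ requires all $a_i\neq 0$, while you need $a_1,a_2\neq 0$ (the genuinely degenerate case $a_1=a_2=0$, epipole at the image origin, defeats both proofs as written and would indeed need separate treatment). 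Your closing cross-check against the null vector $(0,0,0,-c,-d,1)^T$ of the reduced system~\eqref{eq:rref} is consistent and gives an independent sanity check on the direction of $\mathbf{p}-\mathbf{p'}$.
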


\begin{proof}
From Eq.~\eqref{eq:omInter} and because solutions~\eqref{eq:sols} share the same $f_1$ value, we have
\begin{equation*}
\resizebox{0.9\linewidth}{!}{\ensuremath{
\begin{aligned}
\omega_1^*=\omega_2^* &\iff \\
P_{P2}\begin{bmatrix} K_1K_1^T & -K_1K_1^T\mathbf{p} \\ -\mathbf{p^T}K_1K_1^T & \mathbf{p^T}K_1K_1^T\mathbf{p}\end{bmatrix}P_{P2}^T &=\\ P_{P2}\begin{bmatrix} K_1K_1^T & -K_1K_1^T\mathbf{p'} \\ -\mathbf{{p'}^T}K_1K_1^T & \mathbf{{p'}^T}K_1K_1^T\mathbf{p'}\end{bmatrix}P_{P2}^T &\iff \\
AK_1K_1^TA^T - AK_1K_1^T\mathbf{pa^T} -\mathbf{ap^T}K_1K_1A^T +\mathbf{ap^T}K_1K_1^T\mathbf{pa^T}&=\\
 AK_1K_1^TA^T - AK_1K_1^T\mathbf{{p'}a^T} -\mathbf{a{p'}^T}K_1K_1A^T +\mathbf{a{p'}^T}K_1K_1^T\mathbf{p'a^T}
\end{aligned}}}
\end{equation*}
From Eqs.~\eqref{eq:sols}, \eqref{eq:rref3} we have
\begin{equation*} f_1^2p_1^2+f_2^2p_2^2+p_3^2 = f_1^2{p'}_1^2+f_2^2{p'}_2^2+{p'}_3^2=b_3 \end{equation*}
and so
\begin{equation}
\label{eq:ToEliminate1}
\resizebox{0.85\linewidth}{!}{\ensuremath{
(f_1^2p_1^2+f_1^2p_2^2+p_3^2)\mathbf{aa^T}=\mathbf{a{p}^T}K_1K_1^T\mathbf{pa^T}=\mathbf{a{p'}^T}K_1K_1^T\mathbf{p'a^T}}}
\end{equation}
By eliminating the common terms (Eq.~\eqref{eq:ToEliminate1} and $AK_1K_1A^T$) we continue the computations and arrive at
\begin{align}
AK_1K_1^T(\mathbf{(p-p')}\mathbf{a^T})+(\mathbf{a(p^T-{p'}^T))}K_1K_1^TA^T&=0 \iff \notag\\
Q+Q^T&=0\label{eq:pssQequalsigns}
\end{align}
In Eq.~\eqref{eq:pssQequalsigns} we defined
\begin{equation*}
Q\triangleq AK_1K_1^T\left(\mathbf{\left(p-p'\right)}\mathbf{a^T}\right)
\end{equation*}
We write $Q$ as
\begin{align}
Q&= AK_1K_1^T(\mathbf{(p-p')}\mathbf{a^T}) \notag \\
&=AK_1K_1^T\begin{pmatrix} (p_1-p_1')\mathbf{a^T} \\ (p_2-p_2')\mathbf{a^T}\\ (p_3-p_3')\mathbf{a^T} \end{pmatrix} \notag \\
&=A\begin{pmatrix} f_1^2(p_1-p_1')\mathbf{a^T} \\ f_1^2(p_2-p_2')\mathbf{a^T}\\ (p_3-p_3')\mathbf{a^T} \end{pmatrix} \notag \\
&=\begin{pmatrix} \mathbf{A^1}^T \\ \mathbf{A^2}^T \\ \mathbf{A^3}^T \end{pmatrix} \begin{pmatrix} \Delta _f \alpha _1 & \Delta _f \alpha _2 & \Delta _f \alpha _3  \end{pmatrix} \label{eq:MatrixSSQ}
\end{align}
where in Eq.~\eqref{eq:MatrixSSQ} we defined
\begin{equation*}
\Delta _f \triangleq  \begin{pmatrix} f_1^2(p_1-p_1') \\ f_1^2(p_2-p_2')\\ (p_3-p_3') \end{pmatrix}
\end{equation*}

 From Eq.~\eqref{eq:pssQequalsigns}, matrix $Q$ is anti-symmetric and so has a zero diagonal. Imposing the last condition  on expression~\eqref{eq:MatrixSSQ}, we extract the following relations
\begin{align}
\mathbf{A^1}^T\Delta _f &= 0 \label{eq:SSQ1} \\
\mathbf{A^2}^T\Delta _f &= 0 \label{eq:SSQ2} \\
\mathbf{A^3}^T\Delta _f &= 0 \label{eq:SSQ3}
\end{align} 
We  substitute $A$ in Eqs.~\eqref{eq:SSQ1},\eqref{eq:SSQ2},\eqref{eq:SSQ3}, using the canonical representation assumption
\begin{equation*}
P_{P2}=\begin{bmatrix}A&\mathbf{a}\end{bmatrix}= \begin{bmatrix} {[ \mathbf{a}]}_x F &\mathbf{a}\end{bmatrix}
\end{equation*}
and write the three resulting equations in matrix form to get
\begin{equation}
\label{eq:SSQnull}
 {[ \mathbf{a}]}_xF\Delta_f  = \mathbf{0}
\end{equation}
From Eq.~\eqref{eq:SSQnull} and because  ${[ \mathbf{a}]}_xF$ has the null vector $\mathbf{e}$, we get
\begin{equation}
\label{eq:DeltaSSQ}
\Delta_f=\psi \mathbf{e}
\end{equation}
where $\psi$ is a constant.

Now, from Eq.~\eqref{eq:DeltaSSQ}, with simple manipulations we obtain:
\begin{equation*}
\begin{pmatrix} p_1-p_1' \\ p_2-p_2' \\p_3-p_3' \end{pmatrix} = \psi \mathbf{e_f} 
\end{equation*}
\end{proof}
\begin{lemma}
\label{th:finaldetsign}
With the assumptions and notation of  Lemma \ref{th:skewAF}, we have
\begin{equation*}
\det{P_1}=-\det{P_2}
\end{equation*} 
\end{lemma}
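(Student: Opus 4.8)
The plan is to invoke the dichotomy $\det P_1=\pm\det P_2$ of Lemma~\ref{th:plusminusdet}, Eq.~\eqref{eq:plusminusdet}, and to rule out the $+$ branch; by Lemma~\ref{th:B3}, Eq.~\eqref{eq:B3}, that branch is equivalent to $\mathbf{n^{T}C_1}=0$, where $\mathbf{C_1}$ is the center of projection of $P_{m2}^1$ and $\mathbf{n}$ is the vector of Lemma~\ref{th:differ1}, Eq.~\eqref{eq:differ1}. So I would assume $\mathbf{n^{T}C_1}=0$ and derive a contradiction with the fact that Eq.~\eqref{eq:sols} produces two \emph{distinct} solutions, i.e. $\mathbf{p}\neq\mathbf{p'}$.

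First I would make $\mathbf{n}$ explicit. Writing $P_{m2}^i=P_{P2}H^i$ with $H^i$ of the form~\eqref{eq:Hform} and $P_{P2}=\begin{bmatrix}A&\mathbf{a}\end{bmatrix}$, a block expansion gives $P_1=(A-\mathbf{a}\mathbf{p}^{T})K_1$ and $P_2=(A-\mathbf{a}\mathbf{p'}^{T})K_1$, so that $P_1-P_2=-\mathbf{a}(\mathbf{p}-\mathbf{p'})^{T}K_1$. Comparing with Eq.~\eqref{eq:differ1} identifies $\mathbf{n}=-K_1(\mathbf{p}-\mathbf{p'})$, since $K_1$ is symmetric. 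Next I would compute $\mathbf{C_1}$: by Result~\ref{th:Cnull} it solves $P_1\mathbf{C_1}=-\mathbf{a}$, i.e. $(A-\mathbf{a}\mathbf{p}^{T})K_1\mathbf{C_1}=-\mathbf{a}$. The decisive observations are that $A={[\mathbf{a}]}_xF$ is rank $2$ with right null vector $\mathbf{e}$ (from $A\mathbf{e}={[\mathbf{a}]}_xF\mathbf{e}=\mathbf{0}$ via Eq.~\eqref{eq:nulle}) and with left null vector $\mathbf{a}$ (since $\mathbf{a^{T}}{[\mathbf{a}]}_x=\mathbf{0^{T}}$, hence $\mathbf{a^{T}}A=\mathbf{0^{T}}$). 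Because $\mathbf{a}$ is orthogonal to the column space of $A$, setting $\mathbf{y}=K_1\mathbf{C_1}$ in $A\mathbf{y}=\mathbf{a}(\mathbf{p}^{T}\mathbf{y}-1)$ forces both sides to vanish: $A\mathbf{y}=\mathbf{0}$ (so $\mathbf{y}\parallel\mathbf{e}$) and $\mathbf{p}^{T}\mathbf{y}=1$ (the normalization). This yields $\mathbf{C_1}=K_1^{-1}\mathbf{e}/(\mathbf{p}^{T}\mathbf{e})$, which can be checked directly against $P_1\mathbf{C_1}=-\mathbf{a}$ using $A\mathbf{e}=\mathbf{0}$.

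Finally I would combine these with Lemma~\ref{th:skewAF}, Eq.~\eqref{eq:skewAF}, noting that $\mathbf{e_f}=K_1^{-2}\mathbf{e}$, so that $\mathbf{p}-\mathbf{p'}=\psi\,K_1^{-2}\mathbf{e}$. Substituting,
\[
\mathbf{n^{T}C_1}=-(\mathbf{p}-\mathbf{p'})^{T}K_1\,\frac{K_1^{-1}\mathbf{e}}{\mathbf{p}^{T}\mathbf{e}}=-\psi\,\frac{\mathbf{e}^{T}K_1^{-2}\mathbf{e}}{\mathbf{p}^{T}\mathbf{e}}.
\]
Since $K_1^{-2}=\mathrm{diag}(f_1^{-2},f_1^{-2},1)$ is positive definite, $\mathbf{e}^{T}K_1^{-2}\mathbf{e}>0$ for the nonzero null vector $\mathbf{e}$; hence $\mathbf{n^{T}C_1}=0$ would force $\psi=0$, i.e. $\mathbf{p}=\mathbf{p'}$, contradicting the existence of two distinct solutions. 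This excludes $\det P_1=\det P_2$, so Lemma~\ref{th:plusminusdet} leaves only $\det P_1=-\det P_2$ (consistently, Lemma~\ref{th:B4} then reads off $\mathbf{n^{T}C_1}=-2$).

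I expect the main obstacle to be the rank deficiency of $A={[\mathbf{a}]}_xF$: one cannot invert $A$, so the center must be obtained through the null-space relations $A\mathbf{e}=\mathbf{0}$ and $\mathbf{a^{T}}A=\mathbf{0^{T}}$ rather than by a Sherman--Morrison-type inversion, while carefully carrying the homogeneous factor $1/(\mathbf{p}^{T}\mathbf{e})$ so that the sign of $\mathbf{n^{T}C_1}$ is unambiguous. A secondary point to state explicitly is that the two solutions of Eq.~\eqref{eq:sols} are genuinely distinct, which is exactly the hypothesis $\psi\neq 0$ that drives the contradiction.
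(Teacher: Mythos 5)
Your proposal is correct and follows essentially the same route as the paper's proof: the same dichotomy from Lemma~\ref{th:plusminusdet}, the same reduction via Lemma~\ref{th:B3} to $\mathbf{n^TC_1}=0$ with $\mathbf{n}=-K_1(\mathbf{p}-\mathbf{p'})$, the same use of the canonical-form null vector $\mathbf{e}$ to get $K_1\mathbf{C_1}\parallel\mathbf{e}$, and the same final contradiction from the positivity of $e_1^2/f_1^2+e_2^2/f_1^2+e_3^2=\mathbf{e}^TK_1^{-2}\mathbf{e}$ combined with Lemma~\ref{th:skewAF}. Your only departures are presentational refinements that the paper leaves implicit, namely pinning down the normalization $K_1\mathbf{C_1}=\mathbf{e}/(\mathbf{p}^T\mathbf{e})$ by the null-space argument on $A={[\mathbf{a}]}_xF$ and stating explicitly that the contradiction rests on $\mathbf{p}\neq\mathbf{p'}$ (i.e.\ $\psi\neq 0$).
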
 

\begin{proof}
We assume that 
\begin{equation*}
\det{P_1} = \det{P_2}
\end{equation*}
and produce a contradiction.

From Lemma~\ref{th:B3}, we get Eq.~\eqref{eq:B3} and equivalently require that:
\begin{equation}
\label{eq:innerProdEquiv}
\mathbf{n^TC_1}=0
\end{equation}
To specify $\mathbf{n}$ in Eq.~\eqref{eq:innerProdEquiv}, we use
\begin{enumerate}
\item The definition of $\mathbf{n}$ in Eq.~\eqref{eq:differ1}
\item The relation between $P_{P2},P_{M2},H$ (Eqs.~\eqref{eq:PH},\eqref{eq:Hform}) and the notation for $P$ matrix of Lemma~\ref{th:skewAF}
\end{enumerate}
and have
\begin{align*}
P_1&=AK_1 -\mathbf{ap^T}K_1\\
P_2&=AK_1 -\mathbf{ap'^T}K_1\\
\iff P_2&=P_1 +\mathbf{a (p-p')^T}K_1\triangleq P_1-\mathbf{an^T}
\end{align*}
Now, we can rewrite Eq.~\eqref{eq:innerProdEquiv} as
\begin{equation}
\label{eq:innerProdEquiv2}
\mathbf{(p-p')^TK_1C_1}=0
\end{equation}
We next have 
\begin{equation*}
\begin{aligned}
P_{M2}^1\begin{pmatrix} \mathbf{C^1} \\1 \end{pmatrix}& =0\iff \\
P_{P2}H^1\begin{pmatrix} \mathbf{C^1} \\1 \end{pmatrix}&=0 \iff \\
P_{P2}\begin{pmatrix} K_1\mathbf{C^1} \\-\mathbf{p^T}K_1\mathbf{C^1}+1 \end{pmatrix}&=0 
\end{aligned}
\end{equation*}
From the assumption that $P_{P2}$ is in the canonical form (Eq.~\eqref{eq:caconicalCameraPair}), it has a null vector (Eq.~\eqref{eq:nulle}) that is written as
\begin{equation*}
\begin{pmatrix} \mathbf{e} \\ 0\end{pmatrix}
\end{equation*}
So we have:
\begin{align}
P_{P2}\begin{pmatrix} K_1\mathbf{C^1} \\\mathbf{-p^T}K_1\mathbf{C^1}+1 \end{pmatrix}&=0 \iff \notag \\
K_1\mathbf{C^1}&=\psi \mathbf{e} \label{eq:k1cnull}\text{\,,where $\psi$ is a constant}\\
\mathbf{-p^T}K_1\mathbf{C^1}+1&=0 \notag 
\end{align}
From Lemma~\ref{th:skewAF} (Eq.~\eqref{eq:skewAF}) and the previous Eqs.~\eqref{eq:innerProdEquiv2}, \eqref{eq:k1cnull} we get:
\begin{align}
\mathbf{e_F^Te}&=0  \iff \notag \\
\sfrac{e_1^2}{f_1^2}+\sfrac{e_2^2}{f_1^2} + e_3^2&=0 \label{eq:NOTgonnahappen}
\end{align}
Since Eq.~\eqref{eq:NOTgonnahappen} has no solutions (~$\mathbf{e}\neq \mathbf{0}$~), we produced a contradiction. 

Thus, from Lemma~\ref{th:plusminusdet}, we have proved that
\begin{equation*}
\det{P_1}=-\det{P_2}
\end{equation*} 
\end{proof}
From the preceding Lemmas, we can now readily obtain Theorem~\ref{th:mirror}
\begin{proof}[Proof of Theorem~\ref{th:mirror}]
From Lemma~\ref{th:finaldetsign} 
\begin{equation*}
\det{P1}=-\det{P2}
\end{equation*}
From Lemmas~\ref{th:B2},\ref{th:B4} we obtain the equivalent relation
\begin{equation*}
\mathbf{C^1_{m2}=-C^2_{m2}}
\end{equation*}
\end{proof}

Next, we prove Theorem~\ref{th:viewdir}. To avoid a lengthy proof, we settle the coplanarity of $\mathbf{v_{m2}^{1,2},C_{m2}^{1,2}}$ with Lemma~\ref{th:lemmaplane}, which follows the main proof.

 Let us first summarize some notation
\begin{enumerate}
\item We denote $ \mathbf{v_{m2}^1},\mathbf{{v_{m2}^2}}$ the vectors that point along the viewing directions  of cameras $P_{m2}^1,P_{m2}^2$ respectively
\item For  $P_{m2}^1$ we assume
\begin{align*}
\det{P_1}&>0 \\
\mathbf{C}&\triangleq \mathbf{C^1_{m2}} 
\end{align*}
\end{enumerate}
\begin{proof}[Proof of Theorem~\ref{th:viewdir}]
From Results~\ref{th:KRC}, \ref{th:direction}, Lemma~\ref{th:samet}, Theorem~\ref{th:mirror} we have  for $P_{m2}^1$:
\begin{align}
K_2R^1\mathbf{C}&= -\mathbf{a}\iff \notag\\
\begin{pmatrix} f_2\mathbf{R_1}^T \\ f_2\mathbf{R_2}^T \\ \mathbf{R_3}^T \end{pmatrix} \mathbf{C} &= -\mathbf{a} \iff \notag \\
\mathbf{R_3}^T\mathbf{C}&=-a_3 \label{eq:angleCV1}
\end{align}   
We have $\det{P^1}=\det{K_2R_1}>0$ and so
\begin{equation*}
\mathbf{v_{2m}^1}=\mathbf{R_3}
\end{equation*}
Consequently, from Eq.~\eqref{eq:angleCV1}, we have
\begin{equation}
\mathbf{v_{2m}^1}^T\mathbf{C}=\| \mathbf{v_{2m}^1} \|\|\mathbf{C} \| \cos\angle \mathbf{C}, \mathbf{v_{2m}^1}=-a_3 \label{eq:angleCV2}
\end{equation}
In Eq.~\eqref{eq:angleCV2}, 
\begin{equation*}
\|\mathbf{R_3^T}\|=1\\
\text{, since $R^1$ is orthogonal as a rotation matrix}
\end{equation*}
We can normalize $\mathbf{C}$ to unitary by satisfying the condition
\begin{equation*}
\|K_2^{-1}\mathbf{a}\|=1
\end{equation*}
since rotations leave vectors' measure unchanged.\\
We can now write Eq.~\eqref{eq:angleCV2} as
\begin{equation*}
 \cos\angle \mathbf{C}, \mathbf{v_{2m}^1}=-a_3
\end{equation*}
Similarly, using 
\begin{align*}
\mathbf{C_{m2}^2}&=\mathbf{-C} \\
\det{P^2}&<0 \\
\mathbf{a^1}&=\mathbf{a^2}
\end{align*}
we have
\begin{equation*}
 \cos\angle \mathbf{C}, \mathbf{v_{m2}^2}=-a_3
\end{equation*}
and the remaining relations required for the proof:
\begin{align*}
 \cos\angle \mathbf{C^2}, \mathbf{v_{m2}^1}&=a_3\\
\cos\angle \mathbf{C^2}, \mathbf{v_{m2}^2}&=a_3 \\
\angle \mathbf{C}, \mathbf{v_{m2}^1}+\angle \mathbf{C^2}, \mathbf{v_{m2}^1}&=180^{\circ}\\
\angle \mathbf{C}, \mathbf{v_{m2}^2}+\angle \mathbf{C^2}, \mathbf{v_{m2}^2}&=180^{\circ}\\
\end{align*}
To complete the proof, we show that $\mathbf{v_{m2}^{1,2},C_{m2}^{1,2}}$ are coplanar. We provide a constructive proof in Lemma~\ref{th:lemmaplane}.
\end{proof}
\begin{lemma}
\label{th:lemmaplane}
There exist rotation matrices $R_x$, $R_{perm}$ so that
\begin{align*}
R_xR_{perm}R^1\mathbf{v_{m2}^1} &= \begin{pmatrix} 1 & 0 & 0 \end{pmatrix} ^T \\
R_xR_{perm}R^1\mathbf{v_{m2}^2} &= \begin{pmatrix} x & y & 0 \end{pmatrix} ^T
\end{align*}
\end{lemma}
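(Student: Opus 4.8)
The plan is to exhibit the two rotations explicitly, exploiting the fact that after pre-multiplying by $R^1$ the first viewing direction lands exactly on a coordinate axis, which can then be arranged to be the fixed axis of $R_x$.

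First I would evaluate $R^1\mathbf{v_{m2}^1}$. From the proof of Theorem~\ref{th:viewdir} the viewing direction of $P_{m2}^1$ is $\mathbf{v_{m2}^1}=\mathbf{R_3}$, the column form of the third row of the rotation $R^1$. Since the rows of a rotation matrix are orthonormal, $\mathbf{R_i}^T\mathbf{R_3}=\delta_{i3}$, so that
\begin{equation*}
R^1\mathbf{v_{m2}^1}=R^1\mathbf{R_3}=\begin{pmatrix}0&0&1\end{pmatrix}^T\triangleq\mathbf{e_3}.
\end{equation*}
Next I would take $R_{perm}$ to be the proper rotation realising the cyclic permutation of the coordinate axes, so that $R_{perm}\mathbf{e_3}=\begin{pmatrix}1&0&0\end{pmatrix}^T\triangleq\mathbf{e_1}$; the corresponding permutation is a $3$-cycle, hence even, so $\det R_{perm}=+1$ and $R_{perm}$ is a legitimate rotation.

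Then I would let $R_x$ be a rotation about the $x$-axis. The key structural point is that $R_x$ fixes $\mathbf{e_1}$, so
\begin{equation*}
R_xR_{perm}R^1\mathbf{v_{m2}^1}=R_x\mathbf{e_1}=\mathbf{e_1}=\begin{pmatrix}1&0&0\end{pmatrix}^T
\end{equation*}
holds for \emph{every} choice of rotation angle. This leaves the angle of $R_x$ entirely free to handle the second viewing direction. Writing $\mathbf{w}\triangleq R_{perm}R^1\mathbf{v_{m2}^2}=\begin{pmatrix}w_1&w_2&w_3\end{pmatrix}^T$, a rotation about the $x$-axis preserves $w_1$ and rotates the pair $(w_2,w_3)$; I would pick the unique angle sending $(w_2,w_3)$ to $(\sqrt{w_2^2+w_3^2},0)$, so that
\begin{equation*}
R_xR_{perm}R^1\mathbf{v_{m2}^2}=\begin{pmatrix}w_1&\sqrt{w_2^2+w_3^2}&0\end{pmatrix}^T\triangleq\begin{pmatrix}x&y&0\end{pmatrix}^T,
\end{equation*}
which is the second required identity.

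The construction thus satisfies both equations simultaneously, and since it places $\mathbf{v_{m2}^1},\mathbf{v_{m2}^2}$ in the common plane $z=0$ of the rotated frame, it certifies the coplanarity needed to close the proof of Theorem~\ref{th:viewdir}. The only delicate point is the compatibility of the two conditions: a priori, zeroing the third component of the second vector could disturb the first identity. The argument avoids this precisely because $R_{perm}$ was chosen to move $\mathbf{v_{m2}^1}$ onto the rotation axis of $R_x$, decoupling the two requirements so that a single free angle suffices. The degenerate case $w_2=w_3=0$ (parallel viewing directions) merits a passing remark: there any $R_x$ works and the image is already $\begin{pmatrix}w_1&0&0\end{pmatrix}^T$, still of the claimed form.
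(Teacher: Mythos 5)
Your construction is sound as far as it goes, and its first half coincides with the paper's: $R^1\mathbf{v_{m2}^1}=\mathbf{e_3}$ by orthonormality of the rows of $R^1$, and $R_{perm}$ carries $\mathbf{e_3}$ to $\mathbf{e_1}$, which is then fixed by every rotation $R_x$ about the $x$-axis. But the way you spend the single remaining degree of freedom --- choosing the angle of $R_x$ to zero the third component of $R_{perm}R^1\mathbf{v_{m2}^2}$ --- reduces the lemma to the triviality that any two vectors span a common plane through the origin, and it does not deliver what Theorem~\ref{th:viewdir} actually needs. The theorem asserts that $\mathbf{C_{m2}^1},\mathbf{C_{m2}^2}$ bisect the angles formed by $\mathbf{v_{m2}^1},\mathbf{v_{m2}^2}$ \emph{in the plane defined by} $\mathbf{v_{m2}^1},\mathbf{v_{m2}^2}$; the cosine computations preceding the lemma only place each center on a cone around each viewing direction, so the missing step is precisely that $\mathbf{C^1}$ (and hence $\mathbf{C^2}=-\mathbf{C^1}$, by Theorem~\ref{th:mirror}) lies in that same plane. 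Your rotated frame leaves $\mathbf{C^1}$ entirely unconstrained, so your closing claim that the construction "certifies the coplanarity needed to close the proof of Theorem~\ref{th:viewdir}" is unwarranted.

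The paper's proof makes the opposite choice: $\theta_x$ is pinned down by requiring that $R_xR_{perm}R^1\mathbf{C^1}$ have vanishing third coordinate, which gives $\tan\theta_x=\sfrac{a_1}{a_2}$. The substantive content of the lemma is then to show that this \emph{same} $R_x$ also annihilates the third coordinate of $R_{perm}R^1\mathbf{v_{m2}^2}$, and this is where the structure of the two solutions enters: writing $\mathbf{m_j^i}$ for the rows of $R^i$, the rank-one relation of Lemma~\ref{th:differ1} (together with $K_2^1=K_2^2$ from Lemma~\ref{th:eqKmatrices} and $\mathbf{a^1}=\mathbf{a^2}$ from Lemma~\ref{th:samet}) reduces the required identity $\sin\theta_x\,\mathbf{m_2^1}\mathbf{m_3^2}^T=\cos\theta_x\,\mathbf{m_1^1}\mathbf{m_3^2}^T$ to $a_1f_2^{-1}a_2\mathbf{n^T}\mathbf{m_3^2}^T=a_2f_2^{-1}a_1\mathbf{n^T}\mathbf{m_3^2}^T$, which holds identically. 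In short, you proved the lemma's literal statement, but that statement is contentless on its own; to repair the argument you must fix $\theta_x$ from $\mathbf{C^1}$ as above and then establish the nontrivial compatibility with $\mathbf{v_{m2}^2}$ via Lemma~\ref{th:differ1}, which is the idea this lemma exists to supply.
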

\begin{proof}
  In this proof, we apply to the 3D space similarity transforms, that do not alter angles. The aim is to transform the space so that the resulting coordinate system simplifies the relations of the entities we examine.
  
  We visualize this process as placing and orienting a "virtual" camera, so that the camera primary plane is the plane on which $\mathbf{v_{m2}^{1,2},C_{m2}^{1,2}}$ lie. We first do some hypotheses,  without loss of generality, to simplify the notation in the proof:
\begin{itemize}
\item Let $P_{m2}^1$ denote the correct representation of $P_{m2}$ and $P_{m2}^2$ the erroneous one
\item Let \begin{equation*}
\sign ( \det P_1)>0
\end{equation*}
so that we can simplify the expression for camera viewing direction
\end{itemize}
 We apply to space the rotations
\begin{equation*}
R_xR_{perm}R^1
\end{equation*}
where
\begin{align*}
R^1:&\text{ rotation matrix of $P^1_{m2}$}\\
R_{perm}:& \text{ rotation to transpose $x_1,x_3$}\\
\phantom{:}& \text{of a vector: $\begin{pmatrix} x_1 & x_2& x_3 \end{pmatrix}^T$}\\
R_x:&\text{rotation to place $\mathbf{C^1}$ in the desired plane }
\end{align*}
Applying $R^1$, using orthogonality of $R^1$ and Result~\ref{th:direction}, we have for the viewing direction of camera $2$
\begin{equation*}
R^1\mathbf{v_{m2}^1}=\begin{pmatrix} 0& 0& 1 \end{pmatrix}^T
\end{equation*}
We then apply $R_{perm}$, to help with the visualization of this proof
\begin{equation*}
R_{perm}=R_y(90^{\circ})=\begin{bmatrix} 0 & 0 & 1 \\ 0 & 1 & 0 \\ -1& 0 & 0 \end{bmatrix} 
\end{equation*}
We define $R_x$, a rotation around $x-$axis, to place $\mathbf{C^1}$ on $z-$plane and at the same time leave $\mathbf{v_{m2}^1}$ unchanged. We have
\begin{equation*}
R_x=\begin{bmatrix} 1 & 0 & 0 \\ 0 & \cos \theta_x  & -\sin \theta_x \\ 0& \sin \theta_x &  \cos \theta_x \end{bmatrix}
\end{equation*}
We transformed  $\mathbf{C^1}$ to:  
\begin{align*}
K_2R^1\mathbf{C^1}&=-\mathbf{a}\iff \\
R^1\mathbf{C^1}&= -K_2^{-1}\mathbf{a} \iff \\
R_{perm}R^1\mathbf{C^1}&=-R_{perm}\left(-K_2^{-1}\right)\mathbf{a}= \begin{pmatrix} -a_3 \\ -f_2a_2 \\ f_2a_1 \end{pmatrix}
\end{align*}
Then, applying $R_x$  we have:
\begin{align*}
R_xR_{perm}R^1\mathbf{C^1}&=R_x \begin{pmatrix} -a_3 \\ -f_2a_2 \\ f_2a_1 \end{pmatrix}\\
&=  \begin{pmatrix} x_1 \\ x_2 \\ -f_2a_2\sin \theta_x +f_2a_1\cos \theta_x \end{pmatrix}\\
\end{align*}
To satisfy the condition ($x_3=0$) for $\mathbf{C^1}$, we get for $R_x$
\begin{align*}
-f_2a_2\sin \theta_x +f_2a_1\cos \theta_x&=0 \iff \\
\tan \theta_x &= \sfrac{a_1}{a_2}
\end{align*}
Now, we show that the $R_x$ we specified previously, also places  $\mathbf{v_{m2}^2}$ on the $z-$plane of the virtual camera. 
Let 
\begin{equation*}
\mathbf{m_j^i}:\text{$j-$row of $R^i$}
\end{equation*}
We have
\begin{small}
\begin{align*}
 R_{perm}R^1\mathbf{v_{m2}^2}&=R_{perm}R^1(\mathbf{-m_3^2}^T) \\
&= \begin{pmatrix}\mathbf{m_3^1}(\mathbf{-m_3^2}^T) & \mathbf{m_2^1}(\mathbf{-m_3^2}^T) & -\mathbf{m_1^1}(\mathbf{-m_3^2}^T) \end{pmatrix}^T\\
&\triangleq \begin{pmatrix} x_1 & x_2 & x_3 \end{pmatrix}^T   
\end{align*}
\end{small}
where we used that $\det{P^2}<0$. Now, it suffices to show
\begin{small}
\begin{align*}
R_x\begin{pmatrix} x_1 & x_2 & x_3 \end{pmatrix}^T &= \begin{pmatrix} x_1' & x_2' & 0 \end{pmatrix}\iff \\
\sin \theta _x \mathbf{m_2^1}\mathbf{m_3^2}^T &= \cos \theta _x \mathbf{m_1^1}\mathbf{m_3^2}^T \iff \\
 a_1 (\mathbf{m_2^2} +f_2^{-1}a_2\mathbf{n^T})\mathbf{m_3^2}^T &= a_2 (\mathbf{m_1^2} +f_2^{-1}a_1\mathbf{n^T})\mathbf{m_3^2}^T \iff \\
a_1f_2^{-1}a_2\mathbf{n^T}\mathbf{m_3^2}^T&= a_2f_2^{-1}a_1\mathbf{n^T}\mathbf{m_3^2}^T
\end{align*} 
\end{small}
where we used Lemma~\ref{th:differ1}, diagonal form of $K_2$,  the orthogonality of rotation matrices and that $\mathbf{a_1}=\mathbf{a_2}$. 

Thus, we proved that $\mathbf{v_{m2}^1,v_{m2}^2,C^1,C^2}$ lie in the plane $z=0$  of the transformed world coordinate system, which is the primary plane of the "virtual" camera.
\end{proof}

\end{appendices}
\end{document}